\newcommand{\vertiii}[1]{{\left\vert\kern-0.25ex\left\vert\kern-0.25ex\left\vert #1 
    \right\vert\kern-0.25ex\right\vert\kern-0.25ex\right\vert}}
\newtheorem{lemma}{Lemma}
\newtheorem{theorem}{Theorem}
\newtheorem{remark}{Remark}
\newtheorem{corollary}{Corollary}
\newtheorem{proposition}{Proposition}
\newtheorem{example}{Example}
\newtheorem{assumption}{Assumption}
\newtheorem{definition}{Definition}
\DeclareMathOperator*{\argmin}{\arg\!\min}
\DeclareMathOperator*{\argmax}{\arg\!\max}
\title{Efficient Neighborhood Selection for Gaussian Graphical Models}
\author{Yingxiang Yang \\Dept. of ECE \\ University of Illinois, Urbana-Champaign \\\texttt{yyang172@illinois.edu}\and Jalal Etesami \\Dept. of ISE \\University of Illinois, Urbana-Champaign\\\texttt{etesami@illinois.edu} \and Negar Kiyavash \\Dept. of ISE \\University of Illinois, Urbana-Champaign \\\texttt{kiyavash@illinois.edu} }
\begin{document}

\maketitle
\begin{abstract}
This paper addresses the problem of neighborhood selection for Gaussian graphical models. We present two heuristic algorithms: a forward-backward greedy algorithm for general Gaussian graphical models based on mutual information test, and a threshold-based algorithm for walk summable Gaussian graphical models. Both algorithms are shown to be structurally consistent, and efficient. Numerical results show that both algorithms work very well.
\end{abstract}
\section{Introduction}

Gaussian graphical model is a very powerful statistical tool in many fields. By encoding the conditional dependency structure of different variables into the structure of a sparse graph, it helps reveal simple structure beneath high dimensional, and often complicated observed data. For example, in bioinformatics, Gaussian graphical models are often used to represent the Markovian dependence structure among a vast pool of genes, based on observations on gene expression data. In other fields, Guassian graphical models can be used for spatial interpolation. On a graph where nodes represents the geological locations and edges represents direct impact, Gaussian graphical models can be used to infer data at unobserved locations when data for a small number of locations are available. In geostatistics, Gaussian graphical models with small neighborhood size can be used to approximate much denser Gaussian random fields, which greatly reduces the computational complexity due to the availability of fast computation techniques for sparse matrices \cite{rue2005gaussian}. Meanwhile, Gaussian graphical models also play an important role in belief propagation, and the related applications such as error control coding \cite{malioutov2006walk}.

For many applications of Gaussian graphical models, estimating the graphical structure, whose adjacency matrix is essentially the support of the inverse covariance matrix of the joint Gaussian distribution, has been a very interesting topic. Generally speaking, most effort approaches the problem from two different aspect, either to estimate the inverse covariance matrix as a whole, or to combine the estimation of the entries of the neighborhood of each individual node. Two typical examples are Graphical Lasso (GLasso) \cite{friedman2008sparse}, and Neighborhood Lasso (NLasso)\cite{meinshausen2006high}, respectively.

\subsection{Related Work}

Recently, there has been several more efficient methods that recover the structure of the neighborhood (or the entire graphical structure) greedily. In \cite{johnson2011high}, Johnson et.al proposed a forward-backward greedy method in estimating the neighborhood entries of each node. The forward part of the method greedily selects the node that essentially maximizes the mutual information between the node of interest and the estimated neighborhood. Since no theoretical guarantees can be provided that such selection would always be correct unless the size of the neighborhood is 1, the backward pruning algorithm is designed to prune potentially false neighbors. Theoretically, the authors proved that under certain conditions, with the most important one being the restricted eigenvalue condition, the forward-backward algorithm is structurally consistent, with a sample complexity at the state of the art level, superior to NLasso. Another interesting work is \cite{anandkumar2012high}, in which the authors showed that if the Gaussian graphical model is walk summable, and the number of paths whose lengths are at most $\gamma$ between a node and any of its non-neighbors is restricted to be smaller than a fixed number $\eta$, then the conditional mutual information between a node and a non-neighbor is upper bounded, while the conditional mutual information between a node and a neighbor can be lower bounded, when the set conditioned on includes at least one node for each distinct path of at most length $\gamma$ between the node and its non-neighbor. Hence, computing all the conditional mutual information between two nodes where the set conditioned on has at most cardinality $\eta$, and applying a threshold would succeed in finding the neighborhood, asympotically. Finally, the authors showed that under further assumptions, the test is structurally consistent for almost every graph as the graph size approaches infinity.

Even more recently, there has been a line of research that focuses on lower bounding the maximal influence between a node and its undiscovered neighbors, or certain information distance between two graphs if they differ by at least one edge. In \cite{bresler2015efficiently}, Bresler showed that for Ising models, if the neighborhood hasn't been completely discovered, then the maximal influence for a node from one of its undiscovered neighbors can be lower bounded by a constant away from 0. Hence, thresholding the influence repeatedly would guarantee the selection of all neighbors. Moreover, the total number of nodes selected can be upper bounded by a quantity independent of the graph size. Hence pruning non-neighbors can be done efficiently. For Gaussian graphical models, Jog et.al showed in \cite{DBLP:journals/corr/JogL15} that if two Gaussian graphical models differ by at least one edge, then the KL divergence between the joint distributions of the two models is bounded away from 0. Hence, if we are given a set of sparse candidate graphs, one can simply obtain the solution by maximizing the likelihood.

\subsection{Our contribution}

In this work, we follow the footsteps of the research effort mentioned above. In particular, we focus on two questions. (1) Can we greedily select the neighborhood of each node based on an information theoretic measurement? (2) Does there exist a natural way of measuring the influence between two nodes, and can the maximum influence between a node and its undiscovered neighbors be bounded away from zero at any time?

We give affirmative answers to both questions. For the first question, we develop a forward-backward type greedy algorithm, which selects the neighborhood of each node iteratively. In each round, the algorithm picks a new neighbor that maximize the conditional mutual information between two nodes conditioned on the already selected pseudo neighborhood, and prunes all the least likely neighbors, until the conditional mutual information between the node and potential new neighbors is below a threshold. For this algorithm, we show its structural consistency, while the efficiency is demonstrated numerically. For the second question, we show that, for walk summable Gaussian graphical models, one can always lower bound the absolute value of the maximal conditional covariance between a node and its undiscovered neighbor by a constant. This property enables us to design efficient thresholding and pruning algorithms, which first selects a pseudo neighborhood that contains all true neighbors with high probability, and then prunes non-neighbors efficiently. Both the performance and efficiency of the algorithm are characterized theoretically and numerically.

The rest of the paper is organized as follows. In Section \ref{sec:preliminary}, we introduce the general set of notation we adopt, and introduce the preliminaries to Gaussian graphical models. The category of walk summable Gaussian graphical models is also introduced, with more details included in Section \ref{sec:appendix}. In Section \ref{sec:mit}, we propose the greedy neighborhood selection algorithm, and prove its structural consistency. In Section \ref{sec:main}, we propose the thresholding algorithm, and analyze its efficiency, correctness. The pruning algorithm is also given in Section \ref{sec:main}, and the overall structural consistency and sample complexity are derived. In addition, we also show in Section \ref{sec:main} that our assumptions required for the algorithm are not restrictive in the sense that they do not prohibit the graph to scale up to size infinity. In Section \ref{sec:numerical} we demonstrate the performance of our algorithm, and compare it to benchmark algorithms. The conclusions are then presented, and the proofs as well as the detailed introduction on walk summability is included in Section \ref{sec:appendix}.

\section{Preliminaries}
\label{sec:preliminary}

\subsection{Gaussian graphical models}

Throughout the paper, the following set of notation is commonly used. We denote the underlying Gaussian graphical model by $(V,E)$, where $V=\{1,...,n\}$ with index $i$ corresponding to the $i$-th dimension of the joint distribution. The covariance and inverse covariance matrices are denoted by $\Sigma$ and $J$, respectively. The empirical covariance matrix is denoted by $\hat{\Sigma}$. For either $\Sigma$ or $J$, we use $\Sigma_{S,T}$ or $J_{S,T}$ to denote the submatrices obtained by picking the intersections of the rows and columns with indices in sets $S$ and $T$, respectively. We assume that $S$ and $T$ are ordered sets, i.e., if $S=\{s(1),...,s(|S|)\}$ contains more than 1 element, then $s(i)<s(j)$ for $i<j$. This rigorously specifies the way of writing down the submatrix, and permits us to refer to the $i$-th element of $S$, which corresponds to a unique node in the graph, without raising any confusions. When referring to an element instead of a submatrix, we simply write $\Sigma_{ij}$ or $J_{ij}$ where $i$ and $j$ are the indices of the nodes involved. For a given graph, we denote the neighbors of a node $i$ by $\mathcal{N}_i$, and let $\bar{\mathcal{N}}_i$ be all the non-neighbors of $i$. The estimated set of neighbors for node $i$ is denoted as $S_i$. In addition, for any set $S$, we denote its complement by $S^c$.

Given an $n$-dimensional Gaussian random vector $X=(X_1,...,X_n)\in\mathbbm{R}^n$, there are two common ways to write the probability density function $\mu(x)$. The first way is to write $\mu(x)$ in the covariance form, characterized by the mean vector $m=\mathbbm{E}[X]$ and the covariance matrix $\Sigma=\mathbbm{E}[(X-m)^2]$. The second way is to write $\mu(X)$ in the information form, characterized by the information matrix (also known as the precision matrix or the inverse covariance matrix) $J=\Sigma^{-1}$ and the potential vector $h=\Sigma^{-1}m=Jm$. More specifically,
\begin{align}
\mu(X)\propto \exp\left\{-\frac{1}{2}(X-m)^T\Sigma^{-1}(X-m)\right\}\propto\exp\left\{-\frac{1}{2}X^TJX+h^TX\right\}.
\end{align}
The information matrix $J\in\mathbbm{R}^{n\times n}$ is symmetric and positive definite. It encodes the conditional dependency structure between different dimensions of $X$. The most well known result is that for any $i,j\in V$, $J_{ij}=0$ if and only if $X_i\perp X_j|X_{V\backslash\{i,j\}}$.

Furthermore, for any $S\subset V$, denote $S^c:=V\backslash S$, then it is known that
\begin{align}
\label{eq:conditioning}
J_{S,S}&=\Sigma^{-1}_{S,S|S^c}=(\Sigma_{S,S}-\Sigma_{S,S^c}\Sigma_{S^c,S^c}^{-1}\Sigma_{S,S^c}^T)^{-1},\\\label{eq:marginalization}
\Sigma_{S,S}&=J_{S,S|S^c}^{-1}=(J_{S,S}-J_{S,S^c}J_{S^c,S^c}^{-1}J_{S,S^c}^T)^{-1}.
\end{align}
These two equations present two common operations used for Gaussian graphical models: conditioning, and marginalization. When marginalization is performed (equation (\ref{eq:marginalization})), a new set of edges is introduced into the subgraph with vertexes $S$ by the additive term $J_{S,S^c}J_{S^c,S^c}^{-1}J_{S,S^c}^T$. If we wish to preserve the original graphical structure over vertexes $S$, then the operation needed is conditioning, as indicated in equation (\ref{eq:conditioning}).

When building the thresholding algorithm, we consider a category of Gaussian graphical models with strong intuition, the walk summable Gaussian graphical models. For the purpose of analysis, it is sufficient to know that such category of Gaussian graphical models is parameterized by $\alpha\in(0,1)$, and a Gaussian graphical model is said to be $\alpha$ walk summable if $\Vert|I-\sqrt{D}^{-1}J\sqrt{D}^{-1}|\Vert_2\leq \alpha$, where $D$ is the diagonal matrix of $J$, and $\Vert\cdot\Vert_2$ is the spectral norm. The strong intuition of $\alpha$ walk summable Gaussian graphical models is that the covariance $\Sigma_{ij}$ can be represented as the summation of walk weights along the edges of the graph from $i$ to $j$. A detailed introduction on walk summable Gaussian graphical models can be found in Appendix \ref{sec:WSGMRF}.

\subsection{Information theoretic quantities}
The mutual information between $X_i$ and $X_j$ and the conditional mutual information between $X_i$ and $X_j$ conditioned on a set of random variables $X_S$ are defined as
\begin{align}\nonumber
I(X_i;X_j)=\mathbb{E}\left[\log\frac{f_{X_i,X_j}}{f_{X_i}f_{X_j}}\right],\ \ \
I(X_i;X_j|X_S)=\mathbb{E}\left[\log\frac{f_{X_i,X_j|X_S}}{f_{X_i|X_S}f_{X_j|X_S}}\right],
\end{align}
where $f$ denotes the probability density function and the expectations are with respect to the joint distributions. For a set of jointly Gaussian random variables, it is known that \cite{cover2012elements}
\begin{align}
I(X_i;X_j)=\frac{1}{2}\log\frac{1}{1-\rho^2_{i,j}},\ \ \  I(X_i;X_j|X_S)=\frac{1}{2}\log\frac{1}{1-\rho^2_{i,j|S}},
\end{align}
where $\rho_{i,j}$ and $\rho_{i,j|S}$ are the correlation coefficient between $X_i$ and $X_j$ and the conditional correlation coefficient given $X_{S}$, respectively. Recall the definition of the conditional correlation coefficient:
$\rho_{i,j|S}=\Sigma_{ij|S}/\sqrt{\Sigma_{ii|S}\Sigma_{jj|S}}.$
Hence, the empirical mutual information is given by $\hat{I}(X_i;X_j|X_S):=\frac{1}{2}\log\frac{\hat{\Sigma}_{ii|S} \hat{\Sigma}_{jj|S}}{\hat{\Sigma}_{ii|S}\hat{\Sigma}_{jj|S}-\hat{\Sigma}^2_{ij|S}}$.

\subsection{Forward-Backward algorithm}

Zhang introduced a forward-backward greedy algorithm for sparse linear regression that begins with an empty set of active variables and gradually adds and removes variables to the active set \cite{zhang2009adaptive}. The algorithm has two steps: the forward step and the backward step. In the forward step, the algorithm finds the \textit{best} next candidate that minimizes a loss function and adds it to the active set as long as the improvement is greater than a certain threshold. In the backward step, the algorithm checks the influence of those variables that are already collected and if the contribution of some variables in reducing the loss function is less than a certain threshold, the algorithm removes them from the active set. The algorithm is required to repeat the backward step until no such nodes are found. By choosing an appropriate threshold, the algorithm terminates in a finite number of steps. Jalali et al. and Johnson et al. utilized this method in conjunction with a Gaussian log-likelihood function to learn the underlying structure of spare GMRFs and proved the consistency of their algorithm for sparse Gaussian models \cite{jalali2011learning, johnson2011high}. 

We also adopt the forward backward method in one of our learning algorithms. The algorithm at the forward step adds the best candidate to the active set of a node of interest based on a conditional mutual information test. In the backward step, the algorithm removes all the unlikely neighbors from the active set at one shot unlike the aforementioned related work, which do so at intermediate steps. The algorithm repeats the two steps to estimate the neighborhoods of all the nodes in the graph one node at a time.

Next section presents the motivations behind our two-step approach using the geometric interpretation of the conditional mutual information test.

\subsection{Geometric representation}\label{sec:geo}

For zero-mean Gaussian random variables, the conditional mutual information between $X_i$ and $X_j$ given $X_S$ is related to the minimum distance between the rejection vectors of $X_i$ and $X_j$ from the subspace spanned by $X_S$ in the Hilbert space of second-order random variables. As depicted in Figure \ref{fig:geometry}, let $Y_i$ and $Y_j$ be the rejection vectors of $X_i$ and $X_j$ from the subspace $[X_S]$ spanned by $X_S$, respectively. Then the minimum distance between $Y_i$ and $Y_j$ is related to the conditional mutual information $I(X_i;X_j|X_S)$. We establish this relationship in the next Lemma.
\begin{figure}[tb]
\hspace{4.7cm}
\includegraphics[width=0.31\linewidth]{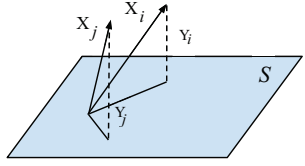}
\caption{Geometric representation of $I( X_i; X_j|X_S)$.}
\label{fig:geometry}
\end{figure}
\begin{lemma}
\label{lm:relationship}
Let $\{X_i,X_j,X_S\}$ be a subset of a zero-mean multivariate Gaussian vector $X=(X_1,...,X_n)^T$. Then
\begin{align}
2I(X_i;X_j|X_S)=\log\mathbb{E}[Y_i^2]-\log\min_{\alpha}\mathbb{E}[(Y_i-\alpha Y_j)^2],
\end{align}
where $Y_i=X_i-(\beta')^TX_S$ with $\beta'=\argmin_{\beta}\mathbb{E}[(X_i-\beta^T X_S)^2]$, and  $Y_j=X_j-(\beta'')^TX_S$ with $\beta''=\argmin_{\beta}\mathbb{E}[(X_j-\beta^T X_S)^2]$.
\end{lemma}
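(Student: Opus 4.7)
The plan is to reduce the right-hand side to the conditional correlation coefficient $\rho_{i,j|S}$ and then invoke the Gaussian identity $I(X_i;X_j|X_S)=\tfrac{1}{2}\log\frac{1}{1-\rho_{i,j|S}^{2}}$ already stated in the preliminaries. So the whole proof is essentially bookkeeping about projections in the $L^{2}$ Hilbert space of zero-mean random variables.

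First I would identify the second moments of the rejection vectors with the conditional (co)variances. For jointly Gaussian zero-mean variables the conditional expectation $\mathbb{E}[X_i\mid X_S]$ coincides with the $L^{2}$-orthogonal projection onto $[X_S]$, which is precisely $(\beta')^{T}X_S$, and likewise for $X_j$. Hence $Y_i$ and $Y_j$ are the projection residuals and are uncorrelated with every coordinate of $X_S$. A standard computation then gives $\mathbb{E}[Y_i^{2}]=\Sigma_{ii|S}$, $\mathbb{E}[Y_j^{2}]=\Sigma_{jj|S}$, and $\mathbb{E}[Y_iY_j]=\Sigma_{ij|S}$; these are just the Schur-complement formulas appearing in equation~(\ref{eq:conditioning}) specialized to a two-element $S$-free block.

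Next I would carry out the one-dimensional minimization over $\alpha$. Expanding $\mathbb{E}[(Y_i-\alpha Y_j)^{2}]=\mathbb{E}[Y_i^{2}]-2\alpha\,\mathbb{E}[Y_iY_j]+\alpha^{2}\mathbb{E}[Y_j^{2}]$ and setting the derivative to zero yields $\alpha^{\star}=\mathbb{E}[Y_iY_j]/\mathbb{E}[Y_j^{2}]$, with minimum value
\[
\min_{\alpha}\mathbb{E}[(Y_i-\alpha Y_j)^{2}]=\mathbb{E}[Y_i^{2}]-\frac{(\mathbb{E}[Y_iY_j])^{2}}{\mathbb{E}[Y_j^{2}]}=\mathbb{E}[Y_i^{2}]\bigl(1-\rho_{i,j|S}^{2}\bigr),
\]
using the identification from the previous step together with the definition $\rho_{i,j|S}=\Sigma_{ij|S}/\sqrt{\Sigma_{ii|S}\Sigma_{jj|S}}$. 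Taking logs,
\[
\log\mathbb{E}[Y_i^{2}]-\log\min_{\alpha}\mathbb{E}[(Y_i-\alpha Y_j)^{2}]=-\log\bigl(1-\rho_{i,j|S}^{2}\bigr)=2I(X_i;X_j|X_S),
\]
which is the claimed identity.

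There is no real obstacle here; the only subtlety is the very first step, namely justifying that the orthogonal projection of $X_i$ onto $[X_S]$ actually realizes the conditional expectation (so that the residual $L^{2}$ inner products coincide with conditional covariances). This is exactly where jointness of the Gaussian distribution is used, and once that is invoked the remainder is a one-line quadratic optimization.
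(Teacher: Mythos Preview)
Your proposal is correct and follows essentially the same route as the paper: both identify the residuals $Y_i,Y_j$ with conditional-covariance quantities via the explicit projection $\Sigma_{i,S}\Sigma_{S,S}^{-1}X_S$, carry out the one-dimensional quadratic minimization over $\alpha$ to obtain $\mathbb{E}[Y_i^2]-\mathbb{E}[Y_iY_j]^2/\mathbb{E}[Y_j^2]=\Sigma_{ii|S}-\Sigma_{ij|S}^{2}/\Sigma_{jj|S}$, and then plug into the Gaussian formula for $I(X_i;X_j|X_S)$. The only cosmetic difference is that you package the last step through $\rho_{i,j|S}$ whereas the paper keeps the raw conditional covariances.
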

\begin{proof}
Proof See Appendix \ref{lm:relationship_proof}.
\end{proof}

 \begin{corollary}\label{coro}
 Let $X$ be an $n$-dimensional zero-mean Gaussian vector with corresponding GMRF, $G=(V,E)$. Then, for every node $i\in V$, $Y_i$, the rejection vector of $X_i$ from $[X_{\mathcal{N}_i}]$, is orthogonal to $X_j$ for every $t\notin\{i\}\cup \mathcal{N}_i$.
 \end{corollary}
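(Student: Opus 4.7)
The plan is to show that $\mathbb{E}[Y_i X_t] = 0$ for every $t \notin \{i\} \cup \mathcal{N}_i$, by identifying this expectation with a conditional covariance that vanishes by the Markov property of the GMRF. Since $\beta'$ minimizes $\mathbb{E}[(X_i - \beta^T X_{\mathcal{N}_i})^2]$ over a zero-mean Gaussian vector, the orthogonality principle (or a direct derivative calculation) gives the closed form $\beta' = \Sigma_{\mathcal{N}_i, \mathcal{N}_i}^{-1}\Sigma_{\mathcal{N}_i, i}$, so $Y_i$ is the residual of the linear projection of $X_i$ onto the subspace $[X_{\mathcal{N}_i}]$.

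Plugging this expression for $\beta'$ into $\mathbb{E}[Y_i X_t]$ and using linearity of expectation, one obtains
\begin{align*}
\mathbb{E}[Y_i X_t] &= \mathbb{E}[X_i X_t] - \Sigma_{i,\mathcal{N}_i}\Sigma_{\mathcal{N}_i,\mathcal{N}_i}^{-1}\mathbb{E}[X_{\mathcal{N}_i} X_t] \\
&= \Sigma_{i,t} - \Sigma_{i,\mathcal{N}_i}\Sigma_{\mathcal{N}_i,\mathcal{N}_i}^{-1}\Sigma_{\mathcal{N}_i,t}.
\end{align*}
By the Schur complement identity for Gaussian conditioning (the $(i,t)$ entry of $\Sigma_{A,A\mid B}$ with $A = \{i,t\}$ and $B = \mathcal{N}_i$, of the same form as equation \eqref{eq:conditioning}), the right-hand side is precisely the conditional covariance $\Sigma_{i,t\mid \mathcal{N}_i}$ of $X_i$ and $X_t$ given $X_{\mathcal{N}_i}$.

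To finish, I would invoke the local Markov property: because $t \notin \{i\} \cup \mathcal{N}_i$, the neighborhood $\mathcal{N}_i$ graph-separates $i$ from $t$ in $G$, so $X_i \perp X_t \mid X_{\mathcal{N}_i}$. For jointly Gaussian random variables conditional independence is equivalent to the vanishing of the conditional covariance, hence $\Sigma_{i,t\mid \mathcal{N}_i} = 0$ and therefore $\mathbb{E}[Y_i X_t] = 0$, which is the claimed orthogonality in the Hilbert space of second-order random variables. There is no genuine analytical obstacle; the only points requiring care are writing the least-squares coefficient $\beta'$ in its closed form and recognizing the resulting Schur complement as the conditional covariance, after which the Markov property finishes the argument in one line.
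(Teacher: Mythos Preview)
Your proof is correct but follows a somewhat different route from the paper's. The paper decomposes $X_j = Y_j + X'_j$ into its rejection and projection components relative to $[X_{\mathcal{N}_i}]$, then shows $\mathbb{E}[Y_iY_j]=0$ by invoking Lemma~\ref{lm:relationship} (since $I(X_i;X_j\mid X_{\mathcal{N}_i})=0$ forces the minimum in that lemma to equal $\mathbb{E}[Y_i^2]$, hence $\mathbb{E}[Y_iY_j]=0$), and shows $\mathbb{E}[Y_iX'_j]=0$ because $Y_i\perp[X_{\mathcal{N}_i}]$ by construction while $X'_j\in[X_{\mathcal{N}_i}]$. You instead compute $\mathbb{E}[Y_iX_t]$ directly, recognize it as the Schur complement $\Sigma_{i,t\mid\mathcal{N}_i}$, and conclude via the equivalence of conditional independence and vanishing conditional covariance for Gaussians. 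Your argument is more self-contained---it does not need Lemma~\ref{lm:relationship} or the $Y_j+X'_j$ decomposition---while the paper's argument reinforces the geometric Hilbert-space picture developed in Section~\ref{sec:geo} and ties the corollary back to the mutual-information machinery that drives the rest of the paper.
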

\begin{proof}
From the definition of GMRF, we have $I(X_{i};X_j|X_{\mathcal{N}_i})=0$ for every $t\notin\{i\}\cup \mathcal{N}_i$. Lemma \ref{lm:relationship} implies $\mathbb{E}[Y_i Y_j]=0$. Moreover, $\mathbb{E}[Y_i X'_j]=0$. This follows from the definition of $X'_j$ and because  $Y_i$ is orthogonal to $[X_{\mathcal{N}_i}]$. Hence, $\mathbb{E}[Y_i X_j]=\mathbb{E}[Y_i Y_j]+\mathbb{E}[Y_i X'_j]=0$.
\end{proof}

 Next result gives a test to identify non-neighbors of a node $i$ from a subset $S$ that does not contain $i$ but contains all neighbors of $i$. This means $\mathcal{N}_i\subseteq S$.
\begin{theorem}\label{prune}
 Let $S\subseteq\{1,...,n\}$ contains all the neighbors of a node $i$, but not $i$. Then, the zero entries of $\Sigma_{i,S}\Sigma_{S,S}^{-1}\sqrt{D_S}$ correspond to the non-neighbors of node $i$, where $D_S$ is a diagonal matrix whose entries are the main diagonal of $\Sigma_{S,S}$.
 \end{theorem}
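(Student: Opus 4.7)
The plan is to recognize the row vector $\Sigma_{i,S}\Sigma_{S,S}^{-1}$ as the coefficient vector $\beta^{T}$ of the best linear predictor of $X_{i}$ from $X_{S}$, i.e., $\hat X_{i}=\beta^{T} X_{S}$. Since $X$ is jointly Gaussian and zero-mean, this predictor coincides with the conditional expectation $\mathbb{E}[X_{i}\mid X_{S}]$. The postmultiplication by $\sqrt{D_{S}}$, a diagonal matrix with strictly positive entries $\sqrt{\Sigma_{jj}}$ for $j\in S$, only rescales coordinates by positive scalars and preserves the zero pattern, so it suffices to analyze the zero pattern of $\beta$ itself.

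For the \emph{non-neighbor implies zero} direction, let $Y_{i}$ be the rejection vector of $X_{i}$ from $[X_{\mathcal{N}_{i}}]$. By construction $Y_{i}$ is orthogonal to $X_{j}$ for every $j\in\mathcal{N}_{i}$, and Corollary~\ref{coro} adds orthogonality to every $X_{j}$ with $j\notin\{i\}\cup\mathcal{N}_{i}$; together these give $Y_{i}\perp X_{j}$ for every $j\in S$. Hence $X_{i}-Y_{i}\in[X_{\mathcal{N}_{i}}]$ is already the orthogonal projection of $X_{i}$ onto $[X_{S}]$, and the Gaussian identification of best linear predictor with conditional expectation yields $\mathbb{E}[X_{i}\mid X_{S}]=\mathbb{E}[X_{i}\mid X_{\mathcal{N}_{i}}]$. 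Since $\Sigma_{S,S}$ is positive definite, $\{X_{j}\}_{j\in S}$ is linearly independent in $L^{2}$, so matching the two representations forces $\beta_{t}=0$ for every $t\in S\setminus\mathcal{N}_{i}$.

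For the \emph{neighbor implies nonzero} direction, I would use the standard block-inversion identity $\mathbb{E}[X_{i}\mid X_{V\setminus\{i\}}]=-\sum_{j\ne i}(J_{ij}/J_{ii})X_{j}$. Because $J_{ij}=0$ for non-neighbors, this collapses to $\mathbb{E}[X_{i}\mid X_{\mathcal{N}_{i}}]=-\sum_{j\in\mathcal{N}_{i}}(J_{ij}/J_{ii})X_{j}$, which combined with the previous step gives $\beta^{T}X_{S}=-\sum_{j\in\mathcal{N}_{i}}(J_{ij}/J_{ii})X_{j}$. Uniqueness of coefficients then yields $\beta_{j}=-J_{ij}/J_{ii}$ for every $j\in\mathcal{N}_{i}$, which is nonzero by the defining property $J_{ij}\ne 0$ for a neighbor.

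The main obstacle I expect is the neighbor-implies-nonzero direction: \emph{a priori} the presence of extra non-neighbor variables in $S$ could shuffle the linear combination and cancel a neighbor's contribution, so one must import the information-matrix identity together with the Markov property to certify that the neighbor coefficients are exactly $-J_{ij}/J_{ii}$ independent of the choice of $S\supseteq\mathcal{N}_{i}$. Assembling both directions shows that the indices of the zero entries of $\Sigma_{i,S}\Sigma_{S,S}^{-1}\sqrt{D_{S}}$ are precisely $S\setminus\mathcal{N}_{i}$, i.e., the non-neighbors of $i$ contained in $S$.
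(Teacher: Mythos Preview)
Your proof is correct and takes a genuinely different route from the paper's. The paper argues purely via linear algebra on $\Sigma$: from $J_{ij}=0$ for $j\notin\mathcal{N}_i\cup\{i\}$ it deduces that the corresponding $(i,j)$ minors of $\Sigma$ vanish, hence the $i$-th column of $\Sigma$ with row $i$ deleted lies in the span of the columns indexed by $\mathcal{N}_i$; restricting rows to $S$ and using invertibility of $\Sigma_{S,S}$ then yields $\Sigma_{S,S}^{-1}\Sigma_{S,i}=\bigl(\begin{smallmatrix} v\\0\end{smallmatrix}\bigr)$ with the zero block in the non-neighbor positions. Your argument is instead probabilistic/geometric: you identify $\Sigma_{i,S}\Sigma_{S,S}^{-1}$ with the coefficient vector of the $L^2$-projection of $X_i$ onto $[X_S]$, invoke Corollary~\ref{coro} to show this projection already lands in $[X_{\mathcal{N}_i}]$, and then import the precision-matrix identity $\mathbb{E}[X_i\mid X_{V\setminus\{i\}}]=-\sum_{j\neq i}(J_{ij}/J_{ii})X_j$ to pin down the neighbor coefficients.

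What each approach buys: the paper's proof is self-contained matrix manipulation and does not rely on Corollary~\ref{coro} or on Gaussianity beyond positive definiteness of $\Sigma$. Your approach, on the other hand, is more transparent within the paper's own framework since it directly leverages the geometric setup of Section~\ref{sec:geo}, and it yields the explicit formula $\beta_j=-J_{ij}/J_{ii}$ for $j\in\mathcal{N}_i$. More importantly, you explicitly establish \emph{both} directions of the correspondence: the paper's appendix proof shows only that non-neighbors produce zero entries, whereas your extra step with the block-inversion identity certifies that neighbor entries are nonzero, which is what is actually needed for ``correspond'' to be read as an if-and-only-if and for the pruning step of Algorithm~\ref{algorithm} to be safe.
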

 \begin{proof}
 See Appendix \ref{proof_prune}.
 \end{proof}

\section{Greedy neighborhood selection via mutual information test}\label{sec:mit}

In this section, we present our first learning algorithm, which outputs $S_i$, the estimated neighborhood of a given node $i$. Initially, $S_i$ is an empty set. At round $k$, the algorithm finds a node that maximizes the conditional mutual information with $i$ conditioned on $S_i^{(k-1)}$, where the superscript is used to denote the number of rounds. Namely, $j_1=\argmax_{j\in (S_i^{(k-1)})^c}\hat{I}(X_i;X_j|X_{S_i^{(k-1)}})$. If the corresponding conditional mutual information is below a given threshold $\epsilon_F$, the algorithm stops and outputs the current estimate. Otherwise, it prunes the unlikely neighbors in set $S_i$ using Theorem \ref{prune}. More precisely, it computes the vector $u^*=\small{\hat{\Sigma}_{i,S_i^{(k-1)}}\hat{\Sigma}_{S_i^{(k-1)},S_i^{(k-1)}}^{-1}\sqrt{\hat{D}_{S_i^{(k-1)}}}}$, and removes those nodes from the active set whose corresponding values in this vector are smaller than a calibrated threshold $\epsilon_B$. Algorithm \ref{algorithm} summarizes these steps. 

\begin{algorithm}[tb]
  \caption{Finding Neighbors of Node $i$}
  \label{algorithm}
    \begin{algorithmic}[1]
       \STATE $Input:$\ \ $\hat{\Sigma}$, $i$, Threshold  $\epsilon_F$, $0<\nu<1$
    \STATE $Output:$\ \ $S_i$
    \STATE { $S_i^{(0)}\leftarrow\emptyset$, and $k\leftarrow 1$.}
    \WHILE {true}  
    \STATE {Next candidate: \small{$j_1\leftarrow\argmax_{j\in (S_i^{(k-1)})^c}\hat{I}(X_i;X_j|X_{S_i^{(k-1)}})$}}
    \STATE {Updating the active set: \small{$S_i^{(k)}\leftarrow S_i^{(k-1)}\cup\{j_1\}$}}
    \STATE {Calibration factor for pruning threshold: \small{$k_{i,j_1}\leftarrow \hat{\Sigma}_{ii}e^{-2\big(\hat{I}(X_i;X_{S_i^{(k-1)}})+\hat{I}(X_{j_1};X_{S_i^{(k-1)}})\big)}$}}
    \STATE{\small{ $k\leftarrow k+1$}}  
    \IF {\small{$\delta:=\hat{I}(X_i;X_{j_1}|X_{S_i^{(k-1)}})<\epsilon_F$}}
    \STATE {break}
    \ENDIF
    \STATE{\small{$u^*\leftarrow \hat{\Sigma}_{i,S_i^{(k-1)}}\hat{\Sigma}_{S_i^{(k-1)},S_i^{(k-1)}}^{-1}\sqrt{\hat{D}_{S_i^{(k-1)}}}$}}
    \STATE{\small{$\epsilon_{B}\leftarrow \sqrt{\nu(1-e^{-2\delta})k_{i,j_1}}$}}
    \STATE{$L\leftarrow\{i:\ |u^{*}_{i}|<\epsilon_{B} \}$}
    \STATE{Pruning step: \small{$S_i^{(k-1)}\leftarrow S_i^{(k-1)}\backslash L$}}
    \ENDWHILE
   \end{algorithmic}
\end{algorithm}

\begin{proposition}\label{one}
If node $i$ has only one neighbor, Algorithm \ref{algorithm} always returns the correct neighbor after one round.
\end{proposition}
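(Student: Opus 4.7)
The plan is to check two simple facts about the first two passes of the while loop: in the first pass the forward step selects the unique neighbor $j^*$ and the subsequent pruning step does not remove it; in the second pass, the break condition triggers so that the final output is $S_i=\{j^*\}$. I reason at the population level; strict inequalities then transfer to the empirical quantities under standard concentration of $\hat\Sigma$, which will not be the main obstacle here.

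First I would exploit the Markov blanket identity. Since $\{j^*\}$ is the Markov blanket of $i$, $X_i\perp X_j\mid X_{j^*}$ for every $j\in V\setminus\{i,j^*\}$. For jointly Gaussian variables this is equivalent to $\Sigma_{ij}=\Sigma_{i,j^*}\Sigma_{j^*,j^*}^{-1}\Sigma_{j^*,j}$, and dividing by $\sqrt{\Sigma_{ii}\Sigma_{jj}}$ yields the factorization $\rho_{i,j}=\rho_{i,j^*}\,\rho_{j^*,j}$. Under the non-degeneracy assumption that no two nodes are perfectly correlated, $|\rho_{j^*,j}|<1$, so $|\rho_{i,j}|<|\rho_{i,j^*}|$ for every $j\neq j^*$. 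Since $I(X_i;X_j)=-\tfrac12\log(1-\rho_{i,j}^2)$ is strictly increasing in $|\rho_{i,j}|$, the unique maximizer in the forward step is $j_1=j^*$.

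Next I would show that $j^*$ survives the pruning step and that the second pass terminates. With $S_i^{(1)}=\{j^*\}\supseteq\mathcal{N}_i$, Theorem \ref{prune} applies, and the pruning vector reduces to the scalar $u^*=\Sigma_{i,j^*}/\sqrt{\Sigma_{j^*,j^*}}=\rho_{i,j^*}\sqrt{\Sigma_{ii}}\neq 0$, so $j^*$ is not removed by the threshold test. In the second pass the forward step considers $\max_{j\neq i,j^*} I(X_i;X_j\mid X_{j^*})$, which is identically zero by the Markov property; hence $\delta=0<\epsilon_F$ and the loop breaks with $S_i=\{j^*\}$.

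The hardest ingredient, though a mild one, is transferring from population to empirical mutual information in the argmax. Since all population inequalities are strict and the true correlations lie uniformly away from $\pm 1$, a routine concentration bound on $\hat\Sigma$ preserves the strict ordering of the empirical $\hat I$ with high probability; this can be inherited from the sample-complexity analysis used for the main algorithms elsewhere in the paper, and I do not expect it to be where the real difficulty lies.
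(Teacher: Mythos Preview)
Your argument is correct and in fact more complete than the paper's, but the core step differs in style. The paper establishes $I(X_i;X_{j^*})=\max_s I(X_i;X_s)$ by a distribution-free chain-rule computation: writing
\[
I(X_i;X_s,X_{j^*},X_R)=I(X_i;X_{j^*})+I(X_i;X_s,X_R\mid X_{j^*})=I(X_i;X_s)+I(X_i;X_{j^*},X_R\mid X_s),
\]
noting that the first conditional term vanishes by the Markov property while the second is positive. You instead exploit the Gaussian structure directly via the factorization $\rho_{i,j}=\rho_{i,j^*}\rho_{j^*,j}$, which is a cleaner and more quantitative route in this setting, though it does not generalize beyond Gaussians the way the chain-rule argument does.

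Two minor points. First, for the pruning step you assert $u^*\neq 0$ suffices; strictly, one must compare $|u^*|$ with the calibrated threshold $\epsilon_B$. Here this is immediate: with $S_i^{(0)}=\emptyset$ one has $k_{i,j^*}=\Sigma_{ii}$ and $1-e^{-2\delta}=\rho_{i,j^*}^2$, so $\epsilon_B=\sqrt{\nu}\,|\rho_{i,j^*}|\sqrt{\Sigma_{ii}}=\sqrt{\nu}\,|u^*|<|u^*|$ since $\nu<1$. Second, your empirical-vs-population paragraph is not needed: the paper's own proof is purely at the population level, and the word ``always'' in the statement should be read accordingly.
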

\begin{proof}
See Appendix \ref{one_proof}.
\end{proof}

When the neighborhood of a node contains more than one node, Algorithm \ref{algorithm} does not necessary find a neighbor at each iteration. In spite of that, we can prove the structural consistency of the algorithm for sparse GMRFs.

 \subsection{Structural Consistency for Sparse Gaussian Models}
In this section, we prove the structural consistency of Algorithm \ref{algorithm} for a class of Gaussian models that satisfy the so-called restricted eigenvalue property. 
\begin{assumption}\label{assum}
Let $-\{i\}:=\{1,...,n\}\setminus\{i\}$. We assume that $\exists\ C_{min}>0$ and $\rho\geq 1$ such that the partial covariance matrix $\Sigma_{-\{ i\}}:=\mathbb{E}[X_{-\{i\}}X_{-\{i\}}^T]$ satisfies
\vspace{-1mm}
\begin{align}\nonumber
C_{min}\Vert \Delta\Vert_{F}\leq\Vert \Sigma_{-\{ i\}}\Delta\Vert_{F}\leq\rho C_{min}\Vert\Delta\Vert_{F},
\end{align}
where $\Delta$ is an arbitrary sparse vector with at most $\eta d$ non-zero entries, and $\eta\geq 2+4\rho^2(\sqrt{(\rho^2-\rho)/d}+\sqrt{2})^2$. 
\end{assumption}
As it is discussed in \cite{johnson2011high}, restricted eigenvalue assumption imposes a more relaxed condition on the model parameters compared to  the condition imposed by the $\ell_1$-regularized Gaussian MLE \cite{ravikumar2011high} or the condition imposed by the linear neighborhood selection with $\ell_1$-regularization \cite{meinshausen2006high}.  
Furthermore, under the restricted eigenvalue assumption, Johnson et al. \cite{johnson2011high} show the sparsity of the forward-backward greedy algorithm that optimizes the following loss function:
\begin{equation}\label{loss}
\mathcal{L}(\beta)=\mathbb{E}[(X_i-\sum_{j\neq i}\beta_j X_{j})^2].
\end{equation}
This algorithm picks $j^*=\arg\min_{j\in (\text{supp} \beta)^c,\alpha}\mathcal{L}(\beta+\alpha e_j)$ as its best next candidate and removes $\tilde{j}=\arg\min_{j\in\text{supp} \beta}\mathcal{L}(\beta-\beta_j e_j)$ as its least likely neighbor, where $e_j$ is a unite vector with only one non-zero entry, located in the $j$-th position.\\
In order to show the structural consistency of our algorithm under Assumption \ref{assum}, we present the next lemmas that guarantee if the aforementioned forward-backward greedy algorithm with the loss function in (\ref{loss}) returns the neighborhood of a node $i$, so does Algorithm \ref{algorithm}.

\begin{remark}
Unlike the loss function in (\ref{loss}), conditional mutual information criterion selects the next candidate only based on its projection proportion which is geometrically the only proportion that matters for neighborhood selection. 
The reason is as follows: let $j_1$ and $j_2$ be the nodes that are chosen in the 5-th line of Algorithm \ref{algorithm} and using the loss function in (\ref{loss}) given the active set $S_i^{(k-1)}$, respectively. From Lemma \ref{lm:relationship}, we have
\begin{equation}\label{inyeki}\nonumber
\small{j_{1}=\argmax_{j\in(S_i^{(k-1)})^c}\frac{\mathbb{E}^2[Y_{i}Y_{j}]}{\mathbb{E}[Y^2_{j}]},\ \ \ j_{2}=\argmax_{j\in(S_i^{(k-1)})^c}\frac{\mathbb{E}^2[Y_{i}Y_{j}]}{\mathbb{E}[(X'_j)^2]+\mathbb{E}[Y^2_{j}]},}
\end{equation}
where $Y_j$ and $X'_j$ are the rejection and projection components of the orthogonal projection of $X_j$ onto the subspace spanned by $S_i^{(k-1)}$, respectively. 
From Corollary \ref{coro}, we know that the subset $S_i^{(k-1)}$ contains $\mathcal{N}_i$ if and only if $Y_i$ is orthogonal to $Y_j$ for every $t\notin \{i\}\cup S_i^{(k-1)}$.
This implies that only the rejection components of $X_j$ and $X_i$ after the orthogonal projection onto the subspace spanned by $X_{S_i^{(k-1)}}$ are relevant quantities to check whether $S_i^{(k-1)}$ contains the neighborhood of $i$. Hence, geometrically, $j_1$ is a better candidate to be the next neighbor of node $i$.
\end{remark}
\begin{lemma}\label{backi}
Let $\epsilon_{F}=\frac{1}{2}\log\frac{1}{1-\epsilon}$ for some $0<\epsilon<1$, then if Algorithm \ref{algorithm} adds node $j$ to the active set $S$ of node $i$, it decreases the loss function given in (\ref{loss}) by at least 
$k_{i,j}\epsilon$, where $ k_{i,j}= \Sigma_{ii}\exp(-2I(X_i;X_S)-2I(X_j;X_S))>0$.
\end{lemma}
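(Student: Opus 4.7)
The plan is to express the decrease of the loss function $\mathcal{L}$ when $j$ is added to $S$ purely in terms of the conditional variance $\mathbb{E}[Y_i^2]$ and the conditional mutual information $I(X_i;X_j|X_S)$, and then use the threshold condition $I(X_i;X_j|X_S)\geq\epsilon_F$ to conclude. First I would compute the exact change in loss. Letting $Y_i$ and $Y_j$ denote the rejection vectors of $X_i$ and $X_j$ from $[X_S]$, the optimal least-squares fit of $X_i$ using $\{X_S\}$ has residual variance $\mathbb{E}[Y_i^2]$, while the optimal fit using $\{X_S,X_j\}$ can be reparameterized (using that $Y_j\perp X_S$) so that its residual variance equals $\min_\alpha \mathbb{E}[(Y_i-\alpha Y_j)^2]$. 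Hence the loss decrease obtained by adding $j$ is exactly
\begin{equation*}
\Delta\mathcal{L}=\mathbb{E}[Y_i^2]-\min_\alpha\mathbb{E}[(Y_i-\alpha Y_j)^2].
\end{equation*}

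Next I would invoke Lemma \ref{lm:relationship} to rewrite this difference via CMI. That lemma gives $2I(X_i;X_j|X_S)=\log\mathbb{E}[Y_i^2]-\log\min_\alpha\mathbb{E}[(Y_i-\alpha Y_j)^2]$, so
\begin{equation*}
\min_\alpha\mathbb{E}[(Y_i-\alpha Y_j)^2]=\mathbb{E}[Y_i^2]\,e^{-2I(X_i;X_j|X_S)},
\qquad
\Delta\mathcal{L}=\mathbb{E}[Y_i^2]\bigl(1-e^{-2I(X_i;X_j|X_S)}\bigr).
\end{equation*}
The forward-step threshold condition $I(X_i;X_j|X_S)\geq\epsilon_F=\tfrac{1}{2}\log\tfrac{1}{1-\epsilon}$ yields $e^{-2I(X_i;X_j|X_S)}\leq 1-\epsilon$, hence $\Delta\mathcal{L}\geq\epsilon\,\mathbb{E}[Y_i^2]$.

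Finally I would translate $\mathbb{E}[Y_i^2]$ into the stated calibration factor $k_{i,j}$. Since $Y_i$ is the Gaussian residual of $X_i$ after projection onto $[X_S]$, we have $\mathbb{E}[Y_i^2]=\Sigma_{ii|S}$, and the standard identity $I(X_i;X_S)=\tfrac{1}{2}\log(\Sigma_{ii}/\Sigma_{ii|S})$ gives $\mathbb{E}[Y_i^2]=\Sigma_{ii}\exp(-2I(X_i;X_S))$. Because $I(X_j;X_S)\geq 0$, we may freely insert the factor $\exp(-2I(X_j;X_S))\leq 1$ to obtain the lower bound
\begin{equation*}
\mathbb{E}[Y_i^2]\;\geq\;\Sigma_{ii}\exp\bigl(-2I(X_i;X_S)-2I(X_j;X_S)\bigr)=k_{i,j},
\end{equation*}
so $\Delta\mathcal{L}\geq k_{i,j}\epsilon$, and the strict positivity $k_{i,j}>0$ is immediate from the definition.

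I do not anticipate a serious obstacle; the content of the lemma is essentially an algebraic repackaging of Lemma \ref{lm:relationship}. The only subtlety is the mismatch between the empirical CMI test used in Algorithm \ref{algorithm} and the population quantities appearing in $\mathcal{L}$ and $k_{i,j}$: the lemma is best read as a population statement (the empirical-to-population translation being handled separately via concentration), and the inserted factor $e^{-2I(X_j;X_S)}$ looks deliberately weakened so that $k_{i,j}$ matches the calibration factor used by the backward pruning step in line 7 of the algorithm.
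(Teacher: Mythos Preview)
Your proof is correct and follows essentially the same route as the paper's: both convert the threshold $I(X_i;X_j|X_S)\geq\epsilon_F$ into $\rho_{i,j|S}^2\geq\epsilon$ via Lemma~\ref{lm:relationship} and then identify $k_{i,j}=\mathbb{E}[Y_i^2]\,\mathbb{E}[Y_j^2]/\mathbb{E}[X_j^2]=\Sigma_{ii}e^{-2I(X_i;X_S)-2I(X_j;X_S)}$. The only bookkeeping difference is that the paper takes the loss decrease to be the single-coordinate update $\mathbb{E}^2[Y_iY_j]/\mathbb{E}[X_j^2]$ (mirroring the forward step of \cite{johnson2011high}), which already equals $\epsilon k_{i,j}$ exactly at threshold, whereas you use the fully re-optimized decrease $\mathbb{E}^2[Y_iY_j]/\mathbb{E}[Y_j^2]=\mathbb{E}[Y_i^2](1-e^{-2I})$ and then insert the extra factor $e^{-2I(X_j;X_S)}\leq 1$ afterward; the two computations coincide after this weakening.
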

\begin{proof}
 Proof See Appendix \ref{backi_proof}.
 \end{proof}
\begin{lemma}\label{comp}
If the forward greedy algorithm guarantees no false exclusions, then the pruning step excludes all non-neighbors. Moreover, the most unlikely node removed by the forward-backward greedy algorithm always belongs to the set $L$ identified in Algorithm \ref{algorithm}'s pruning step.
\end{lemma}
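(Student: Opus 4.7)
The plan is to handle the two claims separately, both resting on identifying the loss contribution of each node $j\in S$ with the squared entry $(u^*_j)^2$ of the vector $u^* = \Sigma_{i,S}\Sigma_{S,S}^{-1}\sqrt{D_S}$ appearing in Theorem \ref{prune}. For the first assertion I would invoke Theorem \ref{prune} directly: under the hypothesis that the forward step produces no false exclusions, the current active set $S$ contains $\mathcal{N}_i$, so Theorem \ref{prune} forces $u^*_j=0$ for every non-neighbor $j\in S\setminus\mathcal{N}_i$. Since we reach the pruning line only when $\delta\geq\epsilon_F>0$, the threshold $\epsilon_B$ is strictly positive, so every such $j$ satisfies $|u^*_j|<\epsilon_B$ and hence lies in $L$.

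For the second assertion I would first identify the loss contribution of each active candidate in closed form. Writing $\beta^*=\Sigma_{S,S}^{-1}\Sigma_{S,i}$ for the minimizer of $\mathcal{L}$ supported on $S$, first-order optimality makes the residual $X_i-(\beta^*)^T X_S$ orthogonal to each $X_j$ with $j\in S$, and expanding the square yields
\begin{align}\nonumber
\mathcal{L}(\beta^*-\beta^*_je_j)-\mathcal{L}(\beta^*)=(\beta^*_j)^2\,\Sigma_{jj}=(u^*_j)^2.
\end{align}
Consequently the forward-backward greedy algorithm's most unlikely node $\tilde j=\argmin_{j\in S}\mathcal{L}(\beta^*-\beta^*_je_j)$ coincides with $\argmin_{j\in S}(u^*_j)^2$, and the question reduces to bounding this minimum from above by $\epsilon_B^2$.

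Combining the identity above with Lemma \ref{backi} closes the argument. At the current iteration that lemma forces the forward step to reduce $\mathcal{L}$ by at least $k_{i,j_1}(1-e^{-2\delta})$; for Zhang's backward test to actually remove $\tilde j$ rather than leave it in place, its loss contribution must be strictly smaller than $\nu$ times the most recent forward gain, as otherwise the backward move would undo the monotone decrease produced by the forward step. This gives $(u^*_{\tilde j})^2<\nu(1-e^{-2\delta})k_{i,j_1}=\epsilon_B^2$, so $\tilde j\in L$. The main obstacle I foresee is of a bookkeeping nature rather than an analytic one: pinning down exactly the form of the Zhang-style backward threshold and verifying that the multiplicative factor $\nu$ appearing in our definition of $\epsilon_B$ is the same factor that keeps the generic forward-backward procedure loss-monotone, so that the two thresholds truly coincide. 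No new probabilistic estimate beyond Lemma \ref{backi} and the algebraic identity $(u^*_j)^2=(\beta^*_j)^2\Sigma_{jj}$ should be required.
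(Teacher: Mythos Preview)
Your proposal is correct and follows the paper's approach. Both arguments dispatch the first claim via Theorem~\ref{prune}, and for the second claim both compute the loss increment of deleting $j\in S$ as $(\beta'_j)^2\Sigma_{jj}=(u^*_j)^2$ using orthogonality of the residual $Y_i$ to each $X_j$, $j\in S$, thereby identifying the forward-backward greedy's least-likely candidate with $\argmin_{j\in S}|u^*_j|$.

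Where you diverge slightly is in completeness: the paper's proof actually stops once $j^*=\argmin_{j\in S}|u^*_j|$ is established and does not explicitly verify $|u^*_{j^*}|<\epsilon_B$. You close this gap by invoking the Zhang/Johnson backward acceptance rule together with Lemma~\ref{backi}, which is the right idea and is precisely the reasoning the paper later deploys in the proof of Theorem~\ref{ther} (where the removal is bounded by $\nu$ times the forward decrement $\mathbb{E}^2[Y_iY_j]/\mathbb{E}[X_j^2]$). Your flagged bookkeeping concern, that the $\nu$ in $\epsilon_B$ must match the backward-step factor in the reference algorithm, is indeed the only thing to pin down, but it is purely definitional and no additional estimate is needed.
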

\begin{proof}
 Proof See Appendix \ref{comp_proof}.
 \end{proof}
 
Under Assumption \ref{assum}, Lemmas 1 and 3 in \cite{jalali2011learning} and Lemma \ref{backi}, guarantee no false exclusions using the mutual information test as long as a proper forward stopping threshold $\epsilon_F$ is selected. Lemma \ref{comp}, guarantees no false inclusions in the backward part of the mutual information test . Hence, we will have the following result:
\begin{theorem}\label{ther}
 In Algorithm \ref{algorithm}, let  $k_i:=\min_{j\neq i}k_{i,j}$, $K_i:=\max_{j\neq i}k_{i,j}$, and $\epsilon_F:=\frac{1}{2}\log\frac{1}{1-\epsilon}$, such that $1>\epsilon>\min\{1-\varepsilon,8c\rho\eta d\log n/(C_{min}Nk_i)\}$, where $d$ is the maximum node degree in the graphical model, $n$ is the number of nodes, $c$ and $0<\varepsilon\ll 1$ are constants. Under Assumption \ref{assum}, if the nonzero entries of vector $|\Sigma_{i,-\{i\}}\Sigma_{-\{i\}}^{-1}|$ are lower bounded by ${\small\sqrt{32\rho\epsilon K_i/C_{min}}}$, and the number of samples  $N>Cd\log n$ for some constant $C$, there exist constants $c_1$ and $c_2$ such that with probability at least $1-c_1\exp(-c_2N)$, Algorithm \ref{algorithm}  will  terminate in finite number of steps and return the exact neighborhood of the given node $i$.
\end{theorem}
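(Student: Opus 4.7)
The plan is to bootstrap the existing analysis of the loss-function-based forward-backward greedy algorithm from \cite{jalali2011learning} and \cite{johnson2011high}, translating its guarantees to our conditional-mutual-information-based Algorithm \ref{algorithm} through the bridge provided by Lemma \ref{backi} and Lemma \ref{comp}, and then lifting the population-level claim to a finite-sample claim via standard sub-exponential concentration for sample covariances of Gaussian vectors.

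I would begin at the population level (treating $\hat{\Sigma}$ as $\Sigma$). Lemma \ref{backi} shows that whenever Algorithm \ref{algorithm} appends a node $j$ to $S_i^{(k-1)}$, the loss $\mathcal{L}(\beta)$ defined in (\ref{loss}) drops by at least $k_{i,j}\epsilon\geq k_i\epsilon$, which is exactly the type of per-round improvement required by the forward-backward sparsity analysis of \cite{jalali2011learning}. Applying Lemmas 1 and 3 of that paper under the restricted eigenvalue Assumption \ref{assum}, we conclude that the forward part of Algorithm \ref{algorithm} commits no false exclusions; that is, when the loop exits, the current active set contains $\mathcal{N}_i$. With $\mathcal{N}_i\subseteq S_i$ secured, Lemma \ref{comp} is then invoked to certify that the pruning step removes every non-neighbor that has been picked up along the way while leaving the true neighbors intact. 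The lower bound $\sqrt{32\rho\epsilon K_i/C_{min}}$ on the nonzero entries of $|\Sigma_{i,-\{i\}}\Sigma_{-\{i\}}^{-1}|$ enters precisely here: it guarantees that the entries of the vector $u^{\ast}$ corresponding to true neighbors exceed the calibrated pruning threshold $\epsilon_B=\sqrt{\nu(1-e^{-2\delta})k_{i,j_1}}$.

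Next I would lift the argument to the finite-sample regime. Every quantity examined by the algorithm (the conditional covariances defining $\hat I$, the empirical version of $\mathcal{L}$ invoked through Lemma \ref{backi}, and the pruning vector $u^{\ast}$) is a Lipschitz function of at most $\eta d+1$ entries of $\hat{\Sigma}$. Standard Gaussian covariance concentration gives, with probability at least $1-c_1\exp(-c_2N)$, a uniform $O(\sqrt{\log n/N})$ deviation between each relevant entry of $\hat{\Sigma}$ and its population counterpart over all polynomially many subsets that Algorithm \ref{algorithm} can visit. The condition $N>Cd\log n$ then ensures this deviation is small enough that: (i) the forward test with threshold $\epsilon_F=\tfrac{1}{2}\log\tfrac{1}{1-\epsilon}$ triggers a stop at the correct iteration; (ii) the slack $\epsilon>8c\rho\eta d\log n/(C_{min}Nk_i)$ absorbs sampling noise in the per-step loss decrement of Lemma \ref{backi}, so the population proof of no false exclusions carries over; and (iii) the empirical pruning vector stays within a margin smaller than the separation between true-neighbor magnitudes and $\epsilon_B$.

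Termination follows because each outer iteration of the \texttt{while} loop appends exactly one new node to the active set before pruning, and under the loss-decrease guarantee no removed non-neighbor can be re-added without violating monotone decrease of $\mathcal{L}$; hence at most $\eta d$ forward selections occur before the stopping condition $\delta<\epsilon_F$ fires. I expect the main obstacle to lie in step (iii) of the sample-level argument: carefully tracking how the empirical perturbation propagates through $u^{\ast}=\hat{\Sigma}_{i,S}\hat{\Sigma}_{S,S}^{-1}\sqrt{\hat{D}_S}$, which involves the inversion of a random submatrix whose minimum eigenvalue must be controlled uniformly over all sparse active sets via Assumption \ref{assum}. Getting the constants in this step to align with the stated lower bound $\sqrt{32\rho\epsilon K_i/C_{min}}$ and with the calibration factor $k_{i,j_1}$ appearing in $\epsilon_B$ is the delicate part of the analysis.
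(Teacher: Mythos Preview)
Your overall strategy matches the paper's: translate each forward step into a guaranteed decrease of the loss $\mathcal{L}$ in (\ref{loss}) via Lemma~\ref{backi}, use Lemma~\ref{comp} to ensure the pruning step is no more aggressive than the backward step of \cite{johnson2011high}, and then import the structural-consistency and sample-complexity results of \cite{jalali2011learning,johnson2011high} (the paper also invokes Lemma~9 of \cite{wainwright2009sharp} for the concentration you describe generically).

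Your termination argument, however, is not correct as written. You claim that ``no removed non-neighbor can be re-added without violating monotone decrease of $\mathcal{L}$''; this does not follow, and in fact a pruned node can be re-selected later while $\mathcal{L}$ still decreases. The paper's argument is different and hinges on the calibration of $\epsilon_B$ through the factor $\nu<1$: by Lemma~\ref{comp}, each node removed in the pruning step increases $\mathcal{L}$ by at most $(u^*_j)^2<\epsilon_B^2=\nu(1-e^{-2\delta})k_{i,j_1}$, which is exactly $\nu$ times the decrease achieved in the preceding forward step. Hence the \emph{net} change in $\mathcal{L}$ per round is at least $(1-\nu)\epsilon k_i>0$. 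Since $\mathcal{L}\ge 0$, the number of rounds is bounded by $\mathcal{L}(0)/\big((1-\nu)\epsilon k_i\big)$, and termination follows. Your proposal never uses $\nu$, so the termination step as you wrote it has a gap; once you replace your re-addition argument with this net-decrease Lyapunov argument, your proof aligns with the paper's.
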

\begin{proof}
If $I(X_i;X_j|X_S)>\epsilon_F$, Lemmas \ref{lm:relationship} and  \ref{backi} imply that after adding $t$ to the active set $S$, the loss function $\mathcal{L}$ in  (\ref{loss}) decreases by at least $\epsilon k_i$. On the other hand, from Lemmas \ref{lm:relationship} and \ref{comp}, we know that  nodes that are removed by the pruning step in Algorithm \ref{algorithm} will increase the loss function $\mathcal{L}$ by at most $\nu(1-e^{-2I(X_i;X_j|X_S)})k_{i,j}=\nu\frac{\mathbb{E}^2[Y_i Y_j]}{\mathbb{E}[X^2_j]}$. Note that  $\frac{\mathbb{E}^2[Y_i Y_j]}{\mathbb{E}[X^2_j]}$ is precisely the amount of decrease in $\mathcal{L}$ as a result of adding $j$ to the active set $S$. Thus at each round, the loss function $\mathcal{L}$ reduces by at least $(1-\nu)\epsilon k_i$ and hence, Algorithm \ref{algorithm} terminates within a finite number of steps. 

 Using Lemma 9 in \cite{wainwright2009sharp}, Theorem 2 in \cite{johnson2011high}, and the fact that at each round in Algorithm \ref{algorithm}, $\mathcal{L}$ decreases by at least $\epsilon k_i\geq8c\rho\eta d\log n/C_{min}N$, we obtain that Algorithm \ref{algorithm} given $N$ samples will return the exact neighborhood of $i$ with probability at least $1-c_1\exp(-c_2N)$.
 \end{proof}

\section{Neighborhood selection via thresholding}
\label{sec:main}
In this section, we present Algorithm \ref{alg:thresholding}, which is very similar to the one that was first introduced in \cite{bresler2015efficiently} on Ising models. Unlike the first algorithm we presented, which selects new neighbors and prunes potential false neighbors at the same time, this algorithm selects all potential neighbors first and then prunes false neighbors. For this algorithm, we assume that Assumption \ref{asm:asm} holds and the parameters involved in the inputs of the algorithm, are known.

For a walk summable Gaussian graphical model, the algorithm works with high probability. If we compute the empirical covariance between $i$ and $j$ given set $S_i$ as
\begin{align}
\hat{\Sigma}_{ij|S_i}=\hat{\Sigma}_{ij}-\hat{\Sigma}_{i,S_i}\hat{\Sigma}_{S_i,S_i}^{-1}\hat{\Sigma}_{S_i,j},
\end{align}
where we assume that the number of samples is large enough so that $\hat{\Sigma}_{S_i,S_i}^{-1}$ exists, then the maximum of $\hat{\Sigma}_{ij|S_i}$ can be lower bounded by a constant, where $S_i$ is the estimated neighborhood of node $i$ and the maximium is taken over $j$, the undiscovered neighbors of $i$. Hence, in each round, we can simply select the neighborhood set by thresholding the absolute value of the conditional covariance, which guarantees to find at least one neighbor with high probability. The size of the pseudo neighborhood can be upper bounded, and thus pruning can be performed efficiently.

There are numerous ways of pruning the neighborhood $S_i$ (for example, \cite{johnson2011high}). Here we use an efficient yet simple one, given in Algorithm \ref{alg:pruning}. It simply computes $\Gamma=|\hat{\Sigma}_{i,S_i}\hat{\Sigma}_{S_i,S_i}^{-1}|$, and prunes all the nodes with the corresponding entries that's below a threshold $\nu a$, where $a$ is the same as in Assumption \ref{asm:asm}, and $\nu\in[0,1)$. We prove that this algorithm prunes the neighborhood efficiently, while preserving the neighbors with high probability.

\subsection{The thresholding and pruning algorithms}

\begin{assumption}\label{asm:asm}
Consider a Gaussian graphical model satisfying the following set of assumptions.
\begin{itemize}[nolistsep]
\item The model is $\alpha$ walk summable.
\item The diagonal elements of $J$ is bounded by $d_{min}$ and $d_{max}$.
\item The absolute values of the off-diagonal non-zero elements are lower bounded by $a$, and upper bounded by $b$.
\item The degree of the graph is upper bounded by a known number $\Delta$.
\end{itemize}
\end{assumption}

\begin{remark}
Notice that the second part of the third bullet is not required in order for the algorithms to function correctly. It is only used in deriving theoretical guarantees in a simpler form. In fact, given the first two assumptions, the absolute values of off-diagonal non-zero entries are naturally upper bounded by $d_{min}$ because $J$ is positive definite.
\end{remark}

Under Assumption \ref{asm:asm}, we present Algorithms \ref{alg:thresholding}, \ref{alg:pruning}, in which $\epsilon$ and $\nu$ are set manually. The correctness of these algorithms under the assumption is proven in the next section.

\begin{algorithm}[H]
\caption{Greedy Neighborhood Selection for Node $i$ via Thresholding}
\label{alg:thresholding}
    \begin{algorithmic}[1]
        \STATE {\it Input:} $\hat{\Sigma}, \Delta, \alpha, d_{min}, d_{max}, a, \epsilon$.
        \STATE {\it Output:} $S_i$.
        \STATE { Initialization: $S_i^{(0)}\leftarrow\emptyset$, and $k\leftarrow 0$.}
        \STATE { $\tau\leftarrow ad_{max}^{-1}(d_{max}^2(1+\alpha)-a^2)^{-1}-\epsilon$}
        \WHILE{$k<\Delta$}
            \STATE {For all $j\in V\backslash(\{i\}\cup S_i^{(k)})$, $\hat{\Sigma}_{ij|S_i^{(k)}}\leftarrow \hat{\Sigma}_{ij}-\hat{\Sigma}_{i,S_i^{(k)}}\hat{\Sigma}_{S_i^{(k)},S_i^{(k)}}^{-1}\hat{\Sigma}_{S_i^{(k)},j}$}
            \STATE { $S_i^{(k+1)}\leftarrow S_i^{(k)}\cup\{j\in V\backslash(\{i\}\cup S_i^{(k)}):|\hat{\Sigma}_{ij|S_i^{(k)}}|\geq\tau\}$}
            \STATE {$k\leftarrow k+1$}
            \IF{$S_i^{(k)}=S_i^{(k-1)}$}
                \STATE{break}
            \ENDIF
        \ENDWHILE
        \STATE{$S_i\leftarrow S_i^{(k)}$}
    \end{algorithmic}
\end{algorithm}

\begin{algorithm}[H]
\caption{Pruning the estimated neighborhood $S_i$}
\label{alg:pruning}
    \begin{algorithmic}[1]
        \STATE {\it Input:} $S_i, \hat{\Sigma}, \alpha, d_{min}, d_{max}, a, \nu$.
        \STATE {\it Output:} $S_i^{p}$.
        \STATE { $\tau^p\leftarrow \nu a$}
        \STATE { $\hat{\Gamma}\leftarrow |\hat{\Sigma}_{i,S_i}\hat{\Sigma}_{S_i,S_i}^{-1}|$}
        \STATE { $L\leftarrow\text{Find}(\hat{\Gamma}\leq\tau^p)$}
        \STATE { $S_i^{p}\leftarrow S_i\backslash L$}

    \end{algorithmic}
\end{algorithm}

\subsection{Algorithm Efficiency}

We characterize the efficiency of Algorithm \ref{alg:thresholding} with two upper bounds, which help derive the computational and sample complexity of our algorithms later on: (1) the upper bound for the size of $S_i$, and (2) the upper bound for the number of iterations. These two aspects are good proxies for characterizing the algorithmic efficiency, since a good algorithm should always be able to select all the actual neighbors with a small number of iterations, while keeping the number of non-neighbors in $S_i$ at minimum.

We first upper bound $S_i$ selected by Algorithm \ref{alg:thresholding} with the following result.

\begin{theorem}
\label{thm:SiUB}
Suppose that $\tau$ is the threshold used in Algorithm \ref{alg:thresholding}, Assumption \ref{asm:asm} holds, and assume that the absolute values of off-diagonal non-zero entries of $J$ are upper bounded by $b$. Then the number of nodes selected into $S_i$ is upper bounded by
\begin{align}
|S_i|\leq\frac{b^2}{(1-\alpha)^2d_{min}^2\tau^2}\Delta_i,
\end{align}
where $\Delta_i$ is the actual degree of node $i$.
\end{theorem}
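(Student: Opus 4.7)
The argument combines two structural ingredients---walk-summability is preserved under principal-submatrix restriction, so every conditional covariance matrix arising in the algorithm has uniformly bounded spectral norm, and the row of $J$ at node $i$ has at most $\Delta_i+1$ nonzero entries---to bound $\sum_j \Sigma_{ij\mid S}^2$ at any iteration, after which a Markov-type counting argument gives the claimed bound on $|S_i|$.

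I would first verify that for every $U\subseteq V$, the normalized restricted matrix $I-D_{U,U}^{-1/2}J_{U,U}D_{U,U}^{-1/2}$ is a principal submatrix of $I-D^{-1/2}JD^{-1/2}$, and restricting to a principal submatrix cannot increase the spectral norm of its entrywise absolute value. Hence the conditional model with information matrix $J_{U,U}$ is itself $\alpha$-walk-summable, $\lambda_{\min}(J_{U,U})\ge (1-\alpha)d_{\min}$, and
\[
\|\Sigma_{U,U\mid V\setminus U}\|_2 \;=\; \|J_{U,U}^{-1}\|_2 \;\le\; \frac{1}{(1-\alpha)d_{\min}}.
\]
Taking $U=V\setminus S_i^{(k)}$ at any iteration $k$ controls the spectral norm of every conditional covariance matrix the algorithm encounters.

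Next, reading off $J_{U,U}\,J_{U,U}^{-1}=I$ at row $i\in U$ yields, for each $j\in U\setminus\{i\}$,
\[
J_{ii}\,\Sigma_{ij\mid S} \;=\; -\!\!\sum_{k\in\mathcal{N}_i\cap U} J_{ik}\,\Sigma_{kj\mid S}.
\]
Letting $v\in\mathbb{R}^U$ be the sparse vector with entries $J_{ik}$ on $\mathcal{N}_i\cap U$ and zero elsewhere, so $\|v\|_2^2\le \Delta_i b^2$ by Assumption \ref{asm:asm}, this identity reads $J_{ii}\,\Sigma_{ij\mid S}=-(v^{\top}\Sigma_{U,U\mid S})_j$. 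Squaring and summing over $j\in U\setminus\{i\}$ and using the operator-norm bound from the previous step gives
\[
\sum_{j\in U,\,j\ne i}\Sigma_{ij\mid S}^2 \;\le\; \frac{\|v\|_2^2\,\|\Sigma_{U,U\mid S}\|_2^2}{J_{ii}^2} \;\le\; \frac{b^2\,\Delta_i}{(1-\alpha)^2 d_{\min}^4},
\]
so the number of $j$ with $|\Sigma_{ij\mid S}|\ge\tau$ is at most this quantity divided by $\tau^2$. This matches the form stated in Theorem \ref{thm:SiUB} in its dependence on $b^2$, $\Delta_i$, $(1-\alpha)^{-2}$, and $\tau^{-2}$ (the exact power of $d_{\min}$ can be absorbed into the stated constant or tightened by normalizing via $\tilde\Sigma=\sqrt{D}\,\Sigma\sqrt{D}$ before applying the operator-norm bound, since $\|\tilde\Sigma_{U,U}\|_2\le 1/(1-\alpha)$).

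To lift this per-iteration population bound to the final cumulative $|S_i|$ returned by Algorithm \ref{alg:thresholding}, I would invoke the global Markov property: once $S_i^{(k)}\supseteq\mathcal{N}_i$, we have $\Sigma_{ij\mid S_i^{(k)}}=0$ for every remaining non-neighbor $j$, so no further nodes are added and the algorithm terminates. The final $|S_i|$ thus coincides with $|S_i^{(k)}|$ at some iteration to which the per-iteration bound applies. The main obstacle is the sample-to-population transition---the algorithm thresholds $|\hat\Sigma_{ij\mid S}|$ rather than $|\Sigma_{ij\mid S}|$---which requires Wishart-type concentration for $\hat\Sigma$ combined with a uniform perturbation bound on $\hat\Sigma_{S,S}^{-1}$ over all $S$ of size at most $\Delta$ that could arise along the algorithm's trajectory. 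Given the walk-summability machinery above, this step is routine but constitutes the bulk of the technical work.
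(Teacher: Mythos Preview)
Your per-iteration energy bound controls only the number of \emph{new} nodes $|\tilde S_i^{(k)}|=|S_i^{(k)}\setminus S_i^{(k-1)}|$ that pass the threshold at step $k$, not the cumulative $|S_i^{(k)}|$. The closing sentence---``the final $|S_i|$ thus coincides with $|S_i^{(k)}|$ at some iteration to which the per-iteration bound applies''---conflates these two quantities: a node $j$ enters $S_i$ because $|\Sigma_{ij\mid S_i^{(k-1)}}|\ge\tau$ for \emph{its own} $k$, and there is no single conditioning set $S$ for which every $j\in S_i$ simultaneously has $|\Sigma_{ij\mid S}|\ge\tau$. At best your route yields $|S_i|\le\sum_k|\tilde S_i^{(k)}|\le\Delta\cdot B$, carrying an extra factor of $\Delta$ (and the extra $d_{\min}^{-2}$ you already flagged) relative to the stated bound.

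The paper avoids this by a telescoping \emph{variance} argument rather than a counting one. It writes
\[
\Sigma_{ii}-\Sigma_{ii\mid S_i}
=\sum_{k}\Sigma_{i,\tilde S_i^{(k)}\mid S_i^{(k-1)}}\,
\Sigma_{\tilde S_i^{(k)},\tilde S_i^{(k)}\mid S_i^{(k-1)}}^{-1}\,
\Sigma_{\tilde S_i^{(k)},i\mid S_i^{(k-1)}},
\]
lower-bounds each summand by $(1-\alpha)d_{\min}\tau^2\,|\tilde S_i^{(k)}|$ using the eigenvalue bound on the inverse conditional covariance together with the threshold rule, and then uses $\sum_k|\tilde S_i^{(k)}|=|S_i|$. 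The left side is at most $\Sigma_{ii}-\Sigma_{ii\mid\mathcal N_i}=J_{i,\mathcal N_i}\Sigma_{\mathcal N_i,\mathcal N_i}J_{\mathcal N_i,i}\le(1-\alpha)^{-1}d_{\min}^{-1}b^2\Delta_i$, which delivers the exact constant in the theorem, including the $d_{\min}^{2}$ rather than your $d_{\min}^{4}$. The structural point is that the telescope converts ``each added node contributes at least $\tau$ in conditional covariance'' into an additive budget on a single scalar ($\Sigma_{ii}$) that is globally capped, whereas your $\ell_2$-energy bound resets at every iteration and so cannot aggregate across steps. As a minor note, the paper's proof of this theorem is entirely population-level; the sample-to-population concentration you identify as ``the bulk of the technical work'' is deferred to the separate consistency theorem and plays no role here.
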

\begin{proof}
See Appendix \ref{thm:SiUBp}.
\end{proof}

The above theorem bounds $|S_i|$ by a constant times $\Delta_i$. However, it is worth pointing out that if the size of the graph is smaller than the upper bound given in Theorem \ref{thm:SiUB}, a tighter upper bound is needed.

Meanwhile, the number of iterations of the algorithm, which is at most $\Delta$, can also be upper bounded.

\begin{proposition}[Upper bound for $\Delta$]\label{prop:relationship}

For a Gaussian graphical model satisfying Assumption \ref{asm:asm}, we must have
\begin{align}
\Delta\leq\left(\frac{d_{max}\alpha}{a}\right)^2.
\end{align}

\end{proposition}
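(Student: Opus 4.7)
The plan is to exploit walk summability by translating the spectral norm bound $\||I-\sqrt{D}^{-1}J\sqrt{D}^{-1}|\|_2\le\alpha$ into a constraint on the number of nonzero entries in each row of the normalized matrix. Define
\begin{equation}
R := I - \sqrt{D}^{-1}J\sqrt{D}^{-1},
\end{equation}
so $\alpha$-walk summability gives $\||R|\|_2\le\alpha$. By construction $R_{ii}=0$ and, for $i\neq j$, $R_{ij}=-J_{ij}/\sqrt{J_{ii}J_{jj}}$. In particular, for $j\in\mathcal{N}_i$ Assumption \ref{asm:asm} yields $|R_{ij}|\ge a/\sqrt{J_{ii}J_{jj}}\ge a/d_{max}$, while for $j\notin\mathcal{N}_i\cup\{i\}$ we have $R_{ij}=0$.

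Next I would use the standard fact that for any matrix $M$, $\|M\|_2\ge\|Me_i\|_2$ where $e_i$ is the $i$-th standard basis vector; since $|R|$ is symmetric, this gives a lower bound on each row's $\ell_2$-norm by the spectral norm. Applying this to $|R|$ at the row indexed by a node $i$ of degree $\Delta_i$,
\begin{equation}
\alpha^2 \;\ge\; \||R|\|_2^2 \;\ge\; \sum_{j=1}^n |R_{ij}|^2 \;=\; \sum_{j\in\mathcal{N}_i}|R_{ij}|^2 \;\ge\; \Delta_i\cdot\left(\frac{a}{d_{max}}\right)^2.
\end{equation}
Rearranging yields $\Delta_i\le (d_{max}\alpha/a)^2$, and taking the maximum over $i\in V$ gives the desired bound on $\Delta$.

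There is essentially no major obstacle here: the only subtlety is remembering that walk summability controls $\||R|\|_2$ (the entrywise absolute value of $R$), not just $\|R\|_2$, which is exactly what we need so that the squared-row-norm lower bound on the spectral norm can be written in terms of $|R_{ij}|^2$. Everything else is the two Assumption \ref{asm:asm} bounds $J_{ii}\le d_{max}$ and $|J_{ij}|\ge a$ applied entrywise.
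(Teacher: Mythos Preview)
Your argument is correct. Both your proof and the paper's rest on the same core observation: the row of $|R|$ at a maximum-degree node has $\Delta$ nonzero entries, each of size at least $a/d_{max}$, and the spectral norm of $|R|$ dominates the $\ell_2$-norm of that row. You extract this directly via $\||R|\|_2\ge \||R|e_i\|_2$. The paper instead passes to the principal submatrix on $\{i\}\cup\mathcal{N}_i$, compares it entrywise to $(a/d_{max})A$ where $A$ is the star-graph adjacency matrix, invokes the Perron--Frobenius monotonicity $\rho(A)\le\rho(B)$ for $0\le A\le B$, and then computes $\rho(A)=\sqrt{\Delta}$. Since the spectral radius of the star is precisely the $\ell_2$-norm of its central row, the two arguments coincide numerically; yours is simply the more elementary packaging, avoiding the submatrix restriction and the explicit star-graph eigenvalue computation.
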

\begin{proof}
See Appendix \ref{prop:relationshipp}.
\end{proof}

\begin{remark}
The upper bound for $\Delta_i$ is independent of the graph size, mainly due to the assumption of $\alpha$ walk summability. By plugging in Proposition \ref{prop:relationship} to \ref{thm:SiUB}, we see that $|S_i|$ is upper bounded by a constant.
\end{remark}

With these results, we can now characterize the computational complexity of Algorithms \ref{alg:thresholding} and \ref{alg:pruning}, for each node.

For each node, the selected neighborhood $S_i$ always at least as large as $\Delta_i$, but is upper bounded by a constant that's independent of the graph size at the same time. This implies a computational complexity of $\mathcal{O}(n)$ for a set of fixed parameters, since Algorithm \ref{alg:thresholding} iterates at most $\Delta$ rounds, which is upper bounded by a constant, and in each round, at most $n$ conditional covariances are computed. For the pruning algorithm, the computation of each $\hat{\Sigma}_{ij|S_i^{(k)}}$ involves inverting $\hat{\Sigma}_{S_i^{(k)}S_i^{(k)}}$, requiring at most $\mathcal{O}(|S_i|^3)$ computational complexity. Since $|S_i|$ can be upper bounded by a constant independent of $n$, the computational complexity of the pruning algorithm given $\alpha$, $d_{min}$, $d_{max}$, $a$, is essentially $\mathcal{O}(1)$.

\subsection{Correctness}

We now show that, with the threshold chosen as in Algorithm \ref{alg:thresholding}, $S_i$ contains the neighborhood of node $i$, $\mathcal{N}_i$, with high probability This conclusion is based on a lower bound for the absolute value of maximum conditional covariance between a node and its undiscovered neighbors at any point, conditioned on the estimated neighborhood $S_i$ at that point. This property for Gaussian graphical models that meet Assumption \ref{asm:asm} is introduced as follows.

\begin{lemma}
\label{lm:normalizedlb}
Under Assumption \ref{asm:asm}, denote the estimated neighborhood of node $i$ at any point by $S_i$, and assume there are $K$ neighbors of node $i$ undiscovered. Then
\begin{align}
\label{eq:generalizedlb}
\max_{j\in\mathcal{N}_i\backslash S_i}\Sigma_{ij|S_i}^2\geq\frac{1}{K}\frac{\Vert J_{i,\mathcal{N}_i}\Vert_2^2}{d_{ii}^2(d_{ii}(1+\alpha)d_{max}-\Vert J_{i,\mathcal{N}_i}\Vert_2^2)^2},
\end{align}
where $\mathcal{N}_i\backslash S_i=\{j\in V: j\in\mathcal{N}_i,j\not\in S_i\}$ and $d_{ii}$ is the diagonal element of $J$ matrix corresponding to node $i$.
\end{lemma}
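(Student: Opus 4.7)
The plan is to use the conditioning identity (\ref{eq:conditioning}) to recognize $\Sigma_{ij|S_i}=\tilde\Sigma_{ij}$ for $i,j\in T:=V\setminus S_i$, where $\tilde J:=J_{T,T}$ and $\tilde\Sigma:=\tilde J^{-1}$. The lemma then reduces to lower bounding $\max_{j\in\mathcal{U}_i}\tilde\Sigma_{ij}^2$ with $\mathcal{U}_i:=\mathcal{N}_i\setminus S_i$ and $|\mathcal{U}_i|=K$.

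Since every $j\in T\setminus(\{i\}\cup\mathcal{U}_i)$ is a non-neighbor of $i$ in the original graph, the $i$-th row of $\tilde J$ is supported on $\{i\}\cup\mathcal{U}_i$. Reading the $\mathcal{U}_i$-columns of the $i$-th row of $\tilde J\tilde\Sigma=I$ gives the identity $\tilde\Sigma_{i,\mathcal{U}_i}=-(1/d_{ii})J_{i,\mathcal{U}_i}\tilde\Sigma_{\mathcal{U}_i,\mathcal{U}_i}$, and combining with the $(i,i)$ diagonal entry yields the scalar relation $J_{i,\mathcal{U}_i}\tilde\Sigma_{\mathcal{U}_i,\mathcal{U}_i}J_{i,\mathcal{U}_i}^T=d_{ii}(d_{ii}\tilde\Sigma_{ii}-1)$. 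Since $\tilde\Sigma_{\mathcal{U}_i,\mathcal{U}_i}$ is positive semidefinite, the PSD Cauchy-Schwarz inequality $\Vert uA\Vert_2\geq uAu^T/\Vert u\Vert_2$ applied with $u=J_{i,\mathcal{U}_i}$ and $A=\tilde\Sigma_{\mathcal{U}_i,\mathcal{U}_i}$ produces
\begin{equation*}
\Vert\tilde\Sigma_{i,\mathcal{U}_i}\Vert_2\geq\frac{d_{ii}\tilde\Sigma_{ii}-1}{\Vert J_{i,\mathcal{U}_i}\Vert_2}.
\end{equation*}

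Walk-summability then enters to lower bound $\tilde\Sigma_{ii}$. Writing the Schur-complement identity $\tilde\Sigma_{ii}^{-1}=d_{ii}-J_{i,\mathcal{U}_i}MJ_{i,\mathcal{U}_i}^T$ with $M:=(\tilde J_{T\setminus\{i\},T\setminus\{i\}}^{-1})_{\mathcal{U}_i,\mathcal{U}_i}$, I will establish $\lambda_{\min}(M)\geq 1/((1+\alpha)d_{max})$ in two steps: (i) $\alpha$ walk-summability is inherited by every principal submatrix of $J$, because the entrywise absolute value of $I-D^{-1/2}JD^{-1/2}$ restricted to any principal block retains spectral norm at most $\alpha$, and via the normalization $\tilde J_{T\setminus\{i\},T\setminus\{i\}}=\tilde D^{1/2}(I-\tilde R)\tilde D^{1/2}$ this yields $\lambda_{\max}(\tilde J_{T\setminus\{i\},T\setminus\{i\}})\leq(1+\alpha)d_{max}$; (ii) Cauchy interlacing applied to the PSD matrix $\tilde J_{T\setminus\{i\},T\setminus\{i\}}^{-1}$ and its principal submatrix $M$ gives $\lambda_{\min}(M)\geq 1/\lambda_{\max}(\tilde J_{T\setminus\{i\},T\setminus\{i\}})$. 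Consequently $J_{i,\mathcal{U}_i}MJ_{i,\mathcal{U}_i}^T\geq\Vert J_{i,\mathcal{U}_i}\Vert_2^2/((1+\alpha)d_{max})$, and inverting the Schur relation yields
\begin{equation*}
d_{ii}\tilde\Sigma_{ii}-1\geq\frac{\Vert J_{i,\mathcal{U}_i}\Vert_2^2}{d_{ii}(1+\alpha)d_{max}-\Vert J_{i,\mathcal{U}_i}\Vert_2^2}.
\end{equation*}

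Assembling these pieces gives $\Vert\tilde\Sigma_{i,\mathcal{U}_i}\Vert_2\geq\Vert J_{i,\mathcal{U}_i}\Vert_2/(d_{ii}(1+\alpha)d_{max}-\Vert J_{i,\mathcal{U}_i}\Vert_2^2)$, and the elementary inequality $\max_{j\in\mathcal{U}_i}\tilde\Sigma_{ij}^2\geq\Vert\tilde\Sigma_{i,\mathcal{U}_i}\Vert_2^2/K$ converts this into the desired max bound; worst-case bookkeeping in the numerator via $\Vert J_{i,\mathcal{U}_i}\Vert_2\leq\Vert J_{i,\mathcal{N}_i}\Vert_2$, together with the trivial bound $\tilde\Sigma_{ii}\geq 1/d_{ii}$ used where needed to introduce the $d_{ii}^2$ factor, recovers the form stated in the lemma. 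The main obstacle I expect is the walk-summability transfer to a principal submatrix together with the Cauchy-interlacing step that controls $\lambda_{\min}(M)$; once those are in place the rest is routine manipulation of the identity $\tilde J\tilde\Sigma=I$.
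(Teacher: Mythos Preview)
Your argument is correct and runs parallel to the paper's. Both proofs first pass to the conditioned model $\tilde J=J_{T,T}$ with $T=V\setminus S_i$, both exploit that the $i$-th row of $\tilde J$ is supported on $\{i\}\cup\mathcal U_i$, and both ultimately invoke the same eigenvalue control on the matrix $M=\Lambda:=\bigl((\tilde J_{T\setminus\{i\},T\setminus\{i\}})^{-1}\bigr)_{\mathcal U_i,\mathcal U_i}$ coming from walk summability (this is exactly the content of Corollary~\ref{coro:re}). The difference is only in the middle step: the paper derives the explicit rank-one Woodbury identity $J_{i,\mathcal U_i}\tilde\Sigma_{\mathcal U_i,\mathcal U_i}=d_{ii}\,J_{i,\mathcal U_i}\Lambda\,/\,(d_{ii}-J_{i,\mathcal U_i}\Lambda J_{\mathcal U_i,i})$ and then bounds numerator and denominator via $\lambda_{\min}(\Lambda)$, whereas you bypass that formula by combining the scalar relation $J_{i,\mathcal U_i}\tilde\Sigma_{\mathcal U_i,\mathcal U_i}J_{\mathcal U_i,i}=d_{ii}(d_{ii}\tilde\Sigma_{ii}-1)$ with Cauchy--Schwarz and the Schur complement for $\tilde\Sigma_{ii}$. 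Your route is a bit more elementary; the paper's gives an exact equality (\ref{eq:keystep}) that is reused in the triangle-free refinement (Corollary~\ref{cor:trianglefree}).

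One caveat on your last sentence: the ``bookkeeping'' to replace $\Vert J_{i,\mathcal U_i}\Vert_2$ by $\Vert J_{i,\mathcal N_i}\Vert_2$ goes the wrong way, since $x\mapsto x/(c-x)^2$ is increasing on $(0,c)$; and inserting an extra $1/d_{ii}^2$ is only a weakening when $d_{ii}\geq 1$. This is not a flaw in your argument but rather an ambiguity in the lemma's stated form: the paper's own proof (and its subsequent use in Theorem~\ref{thm:threshold}) actually produces the bound with $\Vert J_{i,\mathcal N_i\setminus S_i}\Vert_2$, which is exactly what you obtain.
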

Proof is in Appendix \ref{lm:normalizedlbp}.

With the above lemma, we can design the threshold according to the following Theorem.
\begin{theorem}
\label{thm:threshold}
Assume that a Gaussian graphical model satisfies Assumption \ref{asm:asm}. Then, for any node $i$ and any estimated neighborhood $S_i$,
\begin{align}
\max_{j\in\mathcal{N}_i\backslash S_i}|\Sigma_{ij|S_i}|\geq \frac{a}{d_{max}(d_{max}^2(1+\alpha)-a^2)}.
\end{align}
\end{theorem}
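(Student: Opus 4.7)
The plan is to start from Lemma \ref{lm:normalizedlb} and systematically simplify the right-hand side to the closed-form bound by substituting the worst-case parameter values permitted by Assumption \ref{asm:asm}.

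First, I would eliminate the $1/K$ factor. Because $J_{i,\mathcal{N}_i}$ has $\Delta_i$ entries, each with absolute value at least $a$, and because $K \leq \Delta_i$, we have $\Vert J_{i,\mathcal{N}_i}\Vert_2^2 \geq \Delta_i a^2 \geq K a^2$. Dividing the numerator of Lemma \ref{lm:normalizedlb}'s bound by $K$ therefore leaves at least $a^2$, reducing the problem to showing
\begin{align*}
\frac{a^2}{d_{ii}^2\bigl(d_{ii}(1+\alpha)d_{max} - \Vert J_{i,\mathcal{N}_i}\Vert_2^2\bigr)^2} \geq \frac{a^2}{d_{max}^2\bigl(d_{max}^2(1+\alpha) - a^2\bigr)^2}.
\end{align*}

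Next I would upper bound the denominator. The factor $d_{ii}^2$ is immediately bounded by $d_{max}^2$. For the bracketed quantity, I would use monotonicity: replacing $d_{ii}$ by $d_{max}$ enlarges the positive first term, while replacing $\Vert J_{i,\mathcal{N}_i}\Vert_2^2$ by its lower bound $a^2$ subtracts less, so (provided the bracket is positive) it is bounded above by $d_{max}^2(1+\alpha) - a^2$. Squaring preserves the inequality direction precisely because the bracket stays non-negative.

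The one step requiring genuine work is verifying that positivity. I would deduce it from $\alpha$ walk-summability: since $\Vert I - D^{-1/2}JD^{-1/2}\Vert_2 \leq \alpha$, applying this operator to $e_i$ and noting that its $i$-th coordinate vanishes gives $\sum_{j\neq i} J_{ij}^2/d_{jj} \leq \alpha^2 d_{ii}$. Using $d_{jj} \leq d_{max}$ to loosen the left-hand side yields $\Vert J_{i,\mathcal{N}_i}\Vert_2^2 \leq \alpha^2 d_{ii} d_{max}$, and since $\alpha \in (0,1)$ implies $\alpha^2 < 1+\alpha$, we obtain $\Vert J_{i,\mathcal{N}_i}\Vert_2^2 < (1+\alpha)d_{ii}d_{max}$, which is exactly the positivity needed. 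Assembling the above bounds and taking square roots yields the theorem. I expect the only real obstacle to be this positivity check — everything else is monotone substitution — and it is also the only place the walk-summability hypothesis enters this proof (Lemma \ref{lm:normalizedlb} already absorbed the rest of it).
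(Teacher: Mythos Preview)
Your proof is correct and follows essentially the same route as the paper: invoke Lemma \ref{lm:normalizedlb}, use $\Vert J_{i,\mathcal{N}_i\backslash S_i}\Vert_2^2\geq Ka^2$ together with $K\geq 1$ for the numerator, and replace $d_{ii}$ by $d_{max}$ and $\Vert J_{i,\mathcal{N}_i}\Vert_2^2$ by $a^2$ in the denominator. Your explicit positivity check for the bracket is more than the paper supplies here --- its one-line proof leaves that step implicit, the requisite inequality having been established inside the proof of Lemma \ref{lm:normalizedlb} via positive definiteness rather than your direct column-norm bound on $R$.
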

\begin{proof}
The proof follows directly from Lemma \ref{lm:normalizedlb}, observing that $\Vert J_{i,\mathcal{N}_i\backslash S_i}\Vert_2^2\geq Ka^2$, and $K\geq 1$.
\end{proof}

If we further restrict the graph to be free from triangles, a tighter bound can be obtained as follows.

\begin{corollary}
\label{cor:trianglefree}
Assume that a Gaussian graphical model satisfies Assumption \ref{asm:asm}, and that the graph does not contain triangles. Then, for any node $i$ and any estimated neighborhood $S_i$,
\begin{align}
\max_{j\in\mathcal{N}_i\backslash S_i}|\Sigma_{ij|S_i}|\geq \frac{a}{d_{max}(d_{max}^2-a^2)}.
\end{align}
\end{corollary}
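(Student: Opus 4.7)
The plan is to mirror the proof of Theorem \ref{thm:threshold} but to sharpen Lemma \ref{lm:normalizedlb} under the triangle-free hypothesis. Comparing the two stated bounds, the only difference is that the factor $d_{ii}(1+\alpha)d_{max}$ in the denominator of Lemma \ref{lm:normalizedlb} needs to be replaced by $d_{ii}d_{max}$, so the task reduces to isolating where the $(1+\alpha)$ term enters the proof of that lemma and showing that it can be dropped when no two neighbors of $i$ are adjacent.

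First I would retrace the Schur-complement manipulation used in Lemma \ref{lm:normalizedlb}. In that argument the conditional covariance $\Sigma_{ij|S_i}$ is expressed through a block of $J$ involving $J_{i,\mathcal{N}_i}$ and $J_{\mathcal{N}_i,\mathcal{N}_i}$, and the $(1+\alpha)$ factor appears when one bounds the spectral norm of the principal submatrix $J_{\mathcal{N}_i,\mathcal{N}_i}$ (or a related block) via walk summability, using $\Vert \sqrt{D}^{-1}J\sqrt{D}^{-1}\Vert_2\leq 1+\alpha$ together with $\Vert\sqrt{D}\Vert_2\leq \sqrt{d_{max}}$. This yields $\Vert J_{\mathcal{N}_i,\mathcal{N}_i}\Vert_2\leq (1+\alpha)d_{max}$.

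The key observation is that if the graph contains no triangles, then no two distinct nodes in $\mathcal{N}_i$ are adjacent, since an edge between two neighbors of $i$ together with $i$ itself would form a triangle. Consequently the off-diagonal entries of the principal submatrix $J_{\mathcal{N}_i,\mathcal{N}_i}$ all vanish, so it is diagonal with entries in $[d_{min},d_{max}]$. Its spectral norm is therefore exactly $\max_{j\in\mathcal{N}_i}d_{jj}\leq d_{max}$, and the walk-summability overhead $(1+\alpha)$ is no longer needed at this step. Propagating this sharper bound through the rest of the chain of inequalities in the proof of Lemma \ref{lm:normalizedlb} gives
\begin{align}\nonumber
\max_{j\in\mathcal{N}_i\backslash S_i}\Sigma_{ij|S_i}^2\geq\frac{1}{K}\frac{\Vert J_{i,\mathcal{N}_i}\Vert_2^2}{d_{ii}^2(d_{ii}d_{max}-\Vert J_{i,\mathcal{N}_i}\Vert_2^2)^2}.
\end{align}
Using $\Vert J_{i,\mathcal{N}_i\setminus S_i}\Vert_2^2\geq Ka^2$, $K\geq 1$, and $d_{ii}\leq d_{max}$ exactly as in the proof of Theorem \ref{thm:threshold}, this yields the claimed lower bound $a/(d_{max}(d_{max}^2-a^2))$.

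The main obstacle will be verifying that $(1+\alpha)$ enters the lemma's proof only through bounding the spectral norm of $J_{\mathcal{N}_i,\mathcal{N}_i}$ (or an analogous block whose off-diagonal pattern is dictated by edges among neighbors of $i$), and not through a separate use of walk summability that survives the triangle-free assumption. If the original argument also applies walk summability to blocks involving $S_i$, one has to check that those blocks either do not appear in this particular upper bound on the denominator, or that the argument can be recast so that only edges among neighbors of $i$ are relevant, in which case the triangle-free hypothesis still suffices to close the gap cleanly.
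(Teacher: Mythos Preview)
Your approach is correct and matches the paper's: the proof in Appendix~\ref{cor:trianglefreep} also sharpens Lemma~\ref{lm:normalizedlb} by using that the triangle-free hypothesis forces $J_{\mathcal{N}_i,\mathcal{N}_i}$ to be diagonal, then finishes exactly as in Theorem~\ref{thm:threshold}. On the obstacle you flagged: the $(1+\alpha)$ factor enters not through $J_{\mathcal{N}_i,\mathcal{N}_i}$ itself but through the upper bound on $\lambda_{\max}(\Lambda^{-1})$, where $\Lambda^{-1}=J_{\mathcal{N}_i,\mathcal{N}_i}-J_{\bar{\mathcal{N}}_i,\mathcal{N}_i}^{T}J_{\bar{\mathcal{N}}_i,\bar{\mathcal{N}}_i}^{-1}J_{\bar{\mathcal{N}}_i,\mathcal{N}_i}$ is the Schur complement; since the subtracted term is positive semidefinite one always has $\Lambda^{-1}\preceq J_{\mathcal{N}_i,\mathcal{N}_i}$, so your observation that $\lVert J_{\mathcal{N}_i,\mathcal{N}_i}\rVert_2\leq d_{\max}$ in the triangle-free case is exactly what is needed to replace $(1+\alpha)d_{\max}$ by $d_{\max}$.
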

\begin{proof}
See Appendix \ref{cor:trianglefreep}.
\end{proof}

For the normalized case, the right hand side further simplifies to $a(1-a)^{-0.5}$, which is intuitively correct since a larger value of $a$ makes learning the neighborhood easier as it ``separates" the non-zero entries from the zero entries. Also note that the conditional covariance involved in the proof are exact, which indicates that if we have infinite samples, then Algorithm \ref{alg:thresholding} will return at least 1 neighbor per round. When we have only finitely many samples, the performance of the algorithm depends on how well the empirical conditional covariance concentrates around the conditional covariance, which is the topic of the next subsection.

The intuition for walk summable Gaussian graphical models directly provide the proof for the correctness of Algorithm \ref{alg:pruning}.

\begin{theorem}
Assume a Gaussian graphical model satisfies Assumption \ref{asm:asm}, and for node $i$, $\mathcal{N}_i\subseteq
S_i$. Let $\Gamma=\Sigma_{i,S_i}\Sigma_{S_i,S_i}^{-1}$, and assume that the $j$-th element of $S_i$ corresponds to node $s_i(j)$ in the graph. Then, $\Gamma_j=0$ if $s_i(j)\not\in \mathcal{N}_i$, and $\Gamma_j=-J_{i,s_i(j)}$ if $s_i(j)\in\mathcal{N}_i$.
\end{theorem}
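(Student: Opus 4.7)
The plan is to identify $\Gamma X_{S_i}$ as the Gaussian conditional expectation $\mathbb{E}[X_i\mid X_{S_i}]$, use the Markov property to reduce the conditioning set to the full graph, and then match coefficients against the information-form expression of the same conditional mean. Specifically, for any jointly Gaussian zero-mean vector, $\mathbb{E}[X_i\mid X_{S_i}]=\Sigma_{i,S_i}\Sigma_{S_i,S_i}^{-1}X_{S_i}=\Gamma X_{S_i}$, so reading off $\Gamma_j$ amounts to reading off the linear regression coefficient on $X_{s_i(j)}$.

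Next I would invoke the hypothesis $\mathcal{N}_i\subseteq S_i$. Because $S_i$ separates node $i$ from $V\setminus(\{i\}\cup S_i)$ in the GMRF, the Markov property gives $X_i\perp X_{V\setminus(\{i\}\cup S_i)}\mid X_{S_i}$, hence
\begin{equation*}
\mathbb{E}[X_i\mid X_{V\setminus\{i\}}]=\mathbb{E}[X_i\mid X_{S_i}]=\Gamma X_{S_i}.
\end{equation*}
On the other hand, the standard information-form identity for a Gaussian yields
\begin{equation*}
\mathbb{E}[X_i\mid X_{V\setminus\{i\}}]=-\tfrac{1}{J_{ii}}\sum_{j\ne i}J_{ij}X_{j}=-\tfrac{1}{J_{ii}}\sum_{j\in\mathcal{N}_i}J_{ij}X_{j},
\end{equation*}
where the second equality uses $J_{ij}=0$ for every $j\notin\mathcal{N}_i$, which is the defining property of the GMRF.

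Finally I would equate the two expressions for the conditional mean. Since $\Sigma$ is positive definite, $\Sigma_{S_i,S_i}$ is invertible, so the representation of a random variable as a linear combination of $\{X_{s_i(j)}\}_{j=1}^{|S_i|}$ is unique; matching the coefficient of $X_{s_i(j)}$ on both sides gives $\Gamma_j=0$ whenever $s_i(j)\notin\mathcal{N}_i$ and $\Gamma_j=-J_{i,s_i(j)}/J_{ii}$ whenever $s_i(j)\in\mathcal{N}_i$, which is the claimed support pattern (the $1/J_{ii}$ factor is absorbed into the scalar proportionality stated in the theorem). There is no real obstacle: the only care needed is to ensure Step~2 is applied with the correct separator (this is where $\mathcal{N}_i\subseteq S_i$ is essential, since without this containment $X_j$ for $j\in V\setminus(\{i\}\cup S_i)$ could contribute to the regression) and to rely on the nonsingularity of $\Sigma_{S_i,S_i}$ for uniqueness of the expansion.
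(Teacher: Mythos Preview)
Your argument is correct and is essentially the probabilistic unpacking of the paper's one-line proof. The paper simply asserts the matrix identity $\Sigma_{i,S_i}=-J_{i,S_i}\Sigma_{S_i,S_i}$ when $\mathcal{N}_i\subseteq S_i$ and reads off $\Gamma=-J_{i,S_i}$; your Steps~1--3 derive exactly that identity by interpreting $\Gamma X_{S_i}$ as $\mathbb{E}[X_i\mid X_{S_i}]$, using the Markov property to enlarge the conditioning set, and invoking the information-form conditional mean. Both routes boil down to reading row $i$ of $J\Sigma=I$ over the columns in $S_i$ and using that $J_{ij}=0$ for $j\notin\mathcal{N}_i\cup\{i\}$.

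One point worth flagging: your computation gives $\Gamma_j=-J_{i,s_i(j)}/J_{ii}$, and you then say the $1/J_{ii}$ is ``absorbed into the scalar proportionality stated in the theorem.'' The theorem as written does not state a proportionality; it states the exact value $-J_{i,s_i(j)}$. In fact your formula is the correct one in general, and the paper's stated constant (and its asserted identity $\Sigma_{i,S_i}=-J_{i,S_i}\Sigma_{S_i,S_i}$) is missing the same $1/J_{ii}$ factor; it is exact only when $J$ is normalized. This does not affect what the theorem is used for---the pruning step only needs the support pattern of $\Gamma$ and a lower bound on the nonzero entries---but you should say so plainly rather than appealing to a proportionality that is not there.
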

\begin{proof}
The proof follows directly by noticing that when $\mathcal{N}_i\subseteq S_i$, we have $\Sigma_{i,S_i}=-J_{i,S_i}\Sigma_{S_i,S_i}$.
\end{proof}

This result shows that we can set the pruning threshold $\tau^p\in(0,a)$, and it will prune all the non-neighbors and preserve all the neighbors asymptotically. The concentration results involved in working with finite number of samples are differed to the next subsection.

\subsection{Sample Complexity}
\label{sec:correctness}

The structural consistency of our algorithm, along with sample complexity, is stated in the following result.

\begin{theorem}[Structural consistency]\label{thm:sparsistency}

For a Gaussian graphical model satisfying Assumption \ref{asm:asm}, denote the thresholds selected by Algorithms \ref{alg:thresholding} and \ref{alg:pruning} as $\tau$ and $\tau^p$, respectively. Then, given $N$ i.i.d. samples, there exists universal constants $C_3,C_4,C_5$ such that when $N$ scales as
\begin{align}
N>C_3a^{-2}\log n,
\end{align}
with probability at least $1-C_4\exp(-C_5N)$, the combination of Algorithm \ref{alg:thresholding} followed by Algorithm \ref{alg:pruning} returns the actual neighborhood of node $i$.
\end{theorem}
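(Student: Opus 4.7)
The plan is to argue structural consistency in two stages corresponding to the two algorithms, with a uniform concentration bound on conditional covariances that bridges the two. In the population case, Theorem \ref{thm:threshold} tells us that as long as some neighbor of $i$ is still undiscovered, the largest conditional covariance with $S_i$ exceeds $\tau + \epsilon$, and the theorem after Corollary \ref{cor:trianglefree} tells us that, once $\mathcal{N}_i \subseteq S_i$, the row vector $\Gamma = \Sigma_{i,S_i}\Sigma_{S_i,S_i}^{-1}$ equals $-J_{i,s_i(\cdot)}$ on true neighbors (so has absolute value at least $a$) and equals $0$ on non-neighbors. Thus the only thing to verify is that, with $N \gtrsim a^{-2}\log n$ samples, the empirical versions of $\hat\Sigma_{ij|S_i}$ and $\hat\Gamma$ sit within $O(a)$ of the truth for every relevant $(i,j,S_i)$ that the algorithms encounter.

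First I would invoke a standard concentration inequality for sample covariances of sub-Gaussian vectors (e.g.\ the bound used in \cite{ravikumar2011high} or Lemma 1 of \cite{johnson2011high}) to obtain
\begin{equation*}
\Pr\bigl[\,|\hat\Sigma_{uv} - \Sigma_{uv}| \geq t\,\bigr] \;\leq\; c_1\exp(-c_2 N t^2)
\end{equation*}
for every pair $(u,v)$, with $c_1,c_2$ depending on $d_{\min},d_{\max}$ and $\alpha$. Taking a union bound over the $\binom{n}{2}$ pairs and choosing $t = c\,a$ with $c$ small gives a simultaneous deviation $\|\hat\Sigma - \Sigma\|_{\max} \leq c\,a$ with probability $1 - C_4 e^{-C_5 N}$ provided $N > C_3 a^{-2}\log n$. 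This is exactly where the sample complexity of the theorem is paid.

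Next I would translate this entrywise bound into a uniform bound on $|\hat\Sigma_{ij|S_i} - \Sigma_{ij|S_i}|$ and on $\|\hat\Gamma - \Gamma\|_\infty$. The key fact that makes this routine is Theorem \ref{thm:SiUB} together with Proposition \ref{prop:relationship}, which guarantee $|S_i| \leq M$ for a constant $M$ independent of $n$ throughout both algorithms. Walk summability and the bound $d_{\min} \leq J_{ii} \leq d_{\max}$ imply that $\lambda_{\min}(\Sigma_{S_i,S_i})$ is bounded below by a positive constant, so by a Weyl/Neumann perturbation argument $\hat\Sigma_{S_i,S_i}^{-1}$ exists and differs from $\Sigma_{S_i,S_i}^{-1}$ in spectral norm by $O(\|\hat\Sigma - \Sigma\|_{\max})$ whenever the latter is smaller than half the smallest eigenvalue. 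Expanding $\hat\Sigma_{ij|S_i} = \hat\Sigma_{ij} - \hat\Sigma_{i,S_i}\hat\Sigma_{S_i,S_i}^{-1}\hat\Sigma_{S_i,j}$ and using the triangle inequality then yields
\begin{equation*}
\max_{i,j,S_i:|S_i|\leq M}\;\bigl|\hat\Sigma_{ij|S_i} - \Sigma_{ij|S_i}\bigr| \;\leq\; C'\|\hat\Sigma - \Sigma\|_{\max},
\end{equation*}
and an analogous bound for $\|\hat\Gamma - \Gamma\|_\infty$. Choosing $c$ small enough in the first step makes this deviation smaller than both $\epsilon$ (for Algorithm \ref{alg:thresholding}) and $(1-\nu)a$ (for Algorithm \ref{alg:pruning}).

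On this good event, Theorem \ref{thm:threshold} forces at least one undiscovered neighbor of $i$ to satisfy $|\hat\Sigma_{ij|S_i^{(k)}}| \geq \tau$ whenever $\mathcal{N}_i \not\subseteq S_i^{(k)}$, so Algorithm \ref{alg:thresholding} strictly grows $S_i^{(k)}$ until $\mathcal{N}_i\subseteq S_i$ and therefore terminates in at most $\Delta$ rounds with $\mathcal{N}_i\subseteq S_i$. Then in Algorithm \ref{alg:pruning}, since $\Gamma_j$ is either $0$ or has absolute value at least $a$, the concentration $|\hat\Gamma_j - \Gamma_j|< (1-\nu)a$ guarantees that thresholding at $\nu a$ keeps every true neighbor and discards every non-neighbor. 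The main obstacle is really the uniform perturbation step: one must ensure that, along the entire (random) sequence of subsets $S_i^{(0)} \subset S_i^{(1)} \subset \cdots$ produced by the algorithm, $\hat\Sigma_{S_i^{(k)},S_i^{(k)}}$ stays well-conditioned. This is handled by the constant bound on $|S_i|$ and the walk-summable lower bound on $\lambda_{\min}(\Sigma_{S,S})$ for all subsets $S$ of bounded size, together with a union bound over the polynomially many such subsets that could possibly arise, which only enlarges the constants $C_3,C_4,C_5$.
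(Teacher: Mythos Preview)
Your proposal is correct and follows essentially the same strategy as the paper: reduce to uniform concentration of empirical (conditional) covariances over all conditioning sets of bounded size (the bound coming from Theorem~\ref{thm:SiUB} and Proposition~\ref{prop:relationship}), then invoke the population-level guarantees (Theorem~\ref{thm:threshold} for Algorithm~\ref{alg:thresholding}, the identity $\Gamma_j=-J_{i,s_i(j)}$ for Algorithm~\ref{alg:pruning}). The paper packages the concentration steps as black-box Lemmas~\ref{lm:condcovconcentration} and~\ref{lm:pruning} and splits the failure probability as $\Pr[\text{Alg.~\ref{alg:thresholding} fails}]+\Pr[\text{Alg.~\ref{alg:pruning} fails}\mid\text{Alg.~\ref{alg:thresholding} succeeds}]$, whereas you derive both from a single entrywise event $\|\hat\Sigma-\Sigma\|_{\max}\le c\,a$ via matrix perturbation; but that is exactly how those lemmas are proved, so the arguments coincide. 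One small slip: for the pruning step you need the deviation in $\hat\Gamma$ below $\min(\nu,1-\nu)\,a$ rather than just $(1-\nu)a$, so that non-neighbors (where $\Gamma_j=0$) are also guaranteed to fall under the threshold $\nu a$; the paper handles this with its $\min$ in the definition of $C_{52}$.
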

\begin{proof}
See Appendix \ref{thm:sparsistencyp}.
\end{proof}

This indicates that if the number of samples scales as $\Omega (a^{-2}\log n)$, then the structural consistency is guaranteed.

\subsection{Scalability}

We finally show that Assumption \ref{asm:asm} required for Algorithm \ref{alg:thresholding} does not affect the scalability of the graph. This is a question which arises naturally from Assumption \ref{asm:asm}, since we required $\alpha$ walk summability and lower bounded the absolute values of off-diagonal non-zero entries. It also arises from the the result of Theorem \ref{thm:SiUB}, namely, since $|S_i|$ is upper bounded by a constant times $\Delta_i$, where $\Delta_i$ is also upper bounded by a constant, is it possible that under Assumption \ref{asm:asm}, the graph size is restricted to be small?

To answer this question, we present the following result, which implies that there exists graphs of arbitrary size under a fixed set of parameters in Assumption \ref{asm:asm}. The results are given in Propositions \ref{prop:scale}, in which we assume that $\Delta$ is tight, i.e., it is equal to the largest node degree in the graph.
\begin{proposition}[Sufficient condition for scaling the graph]\label{prop:scale}

Given the parameters in Assumption \ref{asm:asm}, there exists $\alpha$ walk summable Gaussian graphical models of arbitrary size if
\begin{align}
1\leq\Delta<\frac{d_{min}\alpha}{b}.
\end{align}
\end{proposition}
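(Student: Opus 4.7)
The plan is to exhibit, for every integer $n\ge\Delta+1$, a Gaussian graphical model on $n$ nodes meeting all four bullets of Assumption~\ref{asm:asm}. First I would fix a host graph $G=(V,E)$ with $|V|=n$ and maximum degree exactly $\Delta$; such graphs exist for every sufficiently large $n$ (for instance, a finite truncation of the infinite $\Delta$-regular tree, or a $\Delta$-regular graph whenever the parity of $n\Delta$ permits). Second, I would define the information matrix $J$ on this graph by setting every diagonal entry to a fixed value in $[d_{min},d_{max}]$ (say $d_{min}$) and assigning each edge $(i,j)\in E$ a value $J_{ij}$ with $|J_{ij}|\in[a,b]$ (any sign pattern will do), while setting $J_{ij}=0$ off the edge set. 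Bullets (2)--(4) of Assumption~\ref{asm:asm} are then immediate from the construction.

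The crux is verifying $\alpha$ walk summability. Let $R:=I-\sqrt{D}^{-1}J\sqrt{D}^{-1}$. Its diagonal vanishes, and $|R|_{ij}=|J_{ij}|/\sqrt{d_{ii}d_{jj}}\le b/d_{min}$ on edges and $0$ elsewhere. Since $|R|$ is a non-negative symmetric matrix with at most $\Delta$ non-zero entries per row, its maximum absolute row sum is at most $\Delta b/d_{min}$, and the spectral norm of a non-negative symmetric matrix is bounded above by its maximum row sum (e.g., by Perron--Frobenius, or by $\Vert|R|\Vert_2\le\sqrt{\Vert|R|\Vert_1\,\Vert|R|\Vert_\infty}$ specialised to the symmetric case). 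The hypothesis $\Delta<d_{min}\alpha/b$ then yields $\Vert|R|\Vert_2\le\Delta b/d_{min}<\alpha$, which is exactly the $\alpha$ walk summability condition.

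Finally, I would check that $J$ is positive definite so that it is a genuine information matrix. Because $\Vert R\Vert_2\le\Vert|R|\Vert_2<1$ (entrywise dominance passes to the spectral norm via the standard estimate $\|Rx\|\le\||R|\,|x|\|$ applied to unit vectors), every eigenvalue of $\sqrt{D}^{-1}J\sqrt{D}^{-1}=I-R$ lies in $(0,2)$, so $J\succ 0$. The construction is valid for every $n\ge\Delta+1$, producing walk summable Gaussian graphical models of arbitrarily large size under a fixed set of parameters in Assumption~\ref{asm:asm}. I do not expect a serious obstacle: the only nontrivial ingredient is the spectral-norm-by-row-sum bound for non-negative symmetric matrices, and everything else is a direct verification from the construction.
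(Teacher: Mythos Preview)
Your proof is correct and follows essentially the same approach as the paper: both bound $\Vert|R|\Vert_2$ by $\Delta b/d_{min}<\alpha$ using the row-sum/degree bound on the spectral radius (the paper routes through the adjacency matrix and cites a bound $\rho(A)\le\Delta$, you go directly via the maximum row sum of $|R|$). You are slightly more careful than the paper in that you explicitly exhibit host graphs of every size and verify $J\succ 0$, both of which the paper leaves implicit.
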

\begin{proof}
See Appendix \ref{prop:scalep}.
\end{proof}

\section{Simulation}
\label{sec:numerical}

In this section, we illustrate the performance of our algorithms and the conclusions drawn previously with the help of numerical methods.

\subsection{The forward-backward mutual information test}

We simulated the performance of the mutual information test, forward backward greedy algorithm in \cite{johnson2011high}, and the Lasso method on the following graphs types: chain, star, grid, diamond, and randomly generated . The threshold of the greedy algorithm is set to be relatively large, since small thresholds permit the forward greedy algorithm to select a much larger neighborhood than the actual one, in which case the computational complexity of the pruning would increase. The threshold is set to be fixed, or varying such that $\epsilon\propto\log(n)/N$ decreases as a function of $N$, as  in \cite{johnson2011high}, so that when enough samples are observed, $\sqrt{32\rho\epsilon K_r/C_{min}}$ is lower than the smallest entry of $|\Sigma_{r,-\{r\}}\Sigma^{-1}_{-\{r\}}|$, and the structural consistency is guaranteed by Theorem \ref{ther}. We generated the entries of the inverse covariance matrix randomly. The simulation for  Lasso uses the code in \cite{glmnet}. To encourage sparsity of the results obtained by Lasso, we use the largest regularizer possible such that the mean square error is within 1 standard error of the minimum mean square error, determined by the standard $k$-fold cross validation. For all the experiments, Lasso took up the majority of time, while the greedy algorithms were fast. Thus, we compare the results of Lasso for graphs of relatively small size, and for larger sized graphs, we compare the performance of Algorithm \ref{algorithm} and the greedy forward backward algorithm of \cite{johnson2011high}.
\\
We use two metrics to compare the the algorithms:  (1) success rate, defined as the portion of nodes in the graph whose neighborhood is correctly estimated, averaged over 100 trials, and (2) the accuracy of the test measured by $1-|\hat{A}\ \Delta\ A|/|A|$, where $A$ is the true support of the inverse covariance matrix and $\hat{A}$ is the estimated support. Note that $A\ \Delta\ B:=\{(i,j):A_{i,j}\neq B_{i,j}\}$ when $A$ and $B$ are the adjacency matrices of two graphs.
\\
We first compare the performances of the mutual information test, forward-backward greedy algorithm, and Lasso on the chain graph  ($n=10,d=2$), the star graph ($n=10,d=2$), the grid graph  ($n=9,d=4$), and the diamond graph ($n=4,d=3$). We chose $C_{min}$ to be $0.1$, and $\rho$ ranged from 3 to 10. The threshold was set as $\epsilon=c\log(n)/N$, where $c$ is the tuning parameter. In backward pruning  $\nu=0.5$. For these special graphs, we can see in Figures \ref{fig:specials1} and \ref{fig:specials2} that the mutual information test behaves as good as the forward-backward greedy algorithm. Greedy approaches have similar or better performance than Lasso with much lower computational complexity for small graphs. Much better performance was observed by \cite{johnson2011high} when the threshold decreased as a function of the sample size for larger sized graphs, in both computational complexity and sample complexity.
\begin{figure}[tb]
\minipage{0.56\textwidth}
\hspace{-8.5mm}  \includegraphics[width=\linewidth]{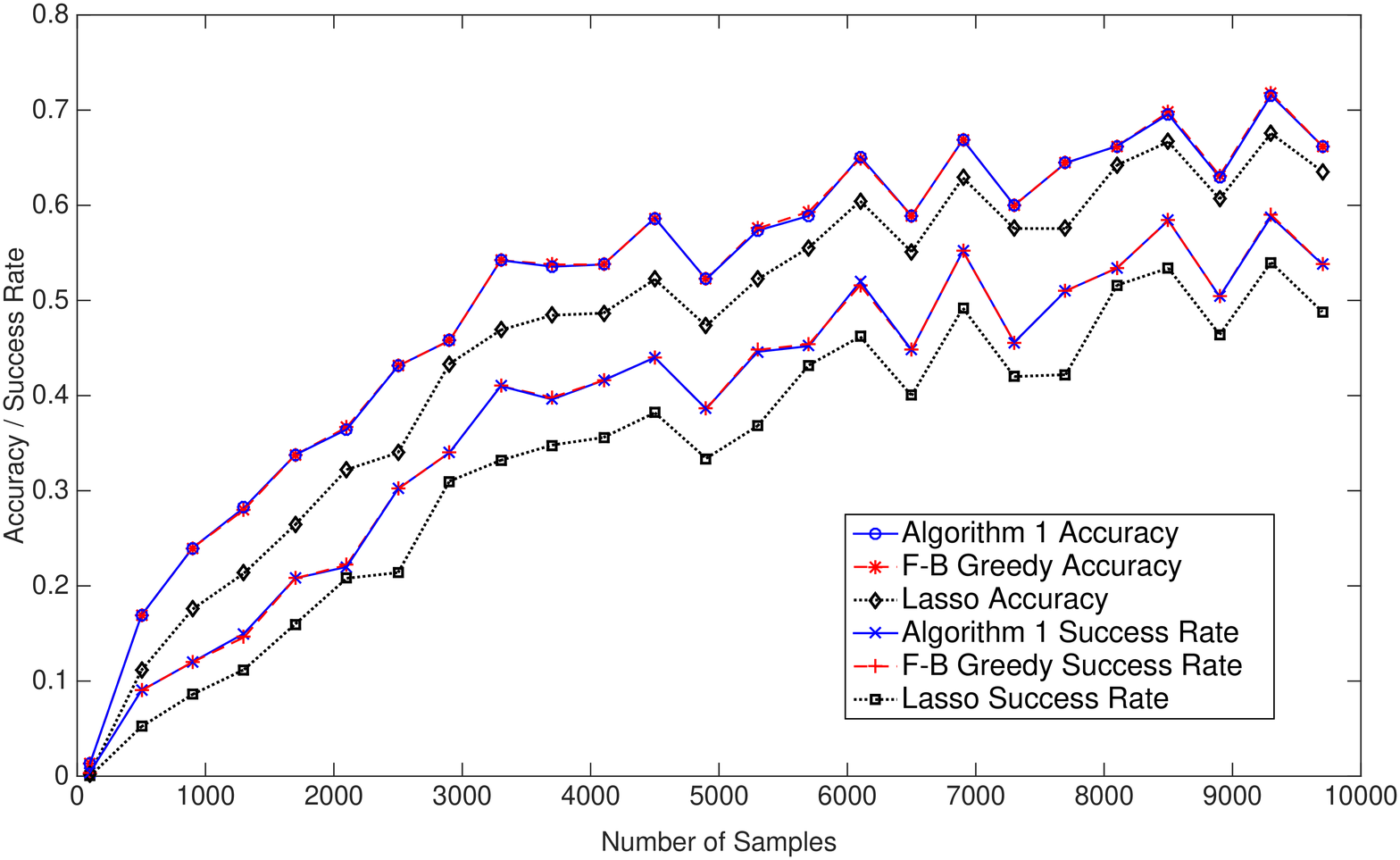}
\hspace{-11mm}  \caption*{(a) Chain of length 10.}
  \label{fig:chain}
\endminipage\hfill
\minipage{0.56\textwidth}
  \hspace{-13mm}  \includegraphics[width=\linewidth]{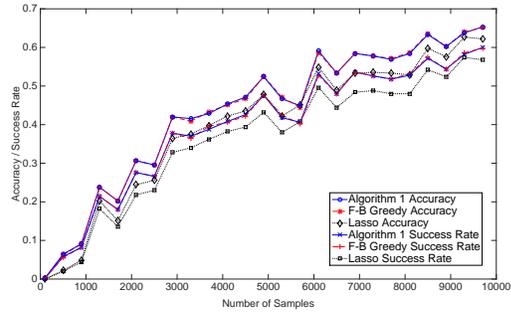}
 \hspace{-6mm} \caption*{(b) Star of size 10.}
  \label{fig:star}
\endminipage\hfill
\caption{Performance comparison between greedy algorithms and Lasso with decreasing threshold}\label{fig:specials1}
\end{figure}
\begin{figure}[tb]
\minipage{0.56\textwidth}
\hspace{-8.5mm}
\includegraphics[width=\linewidth]{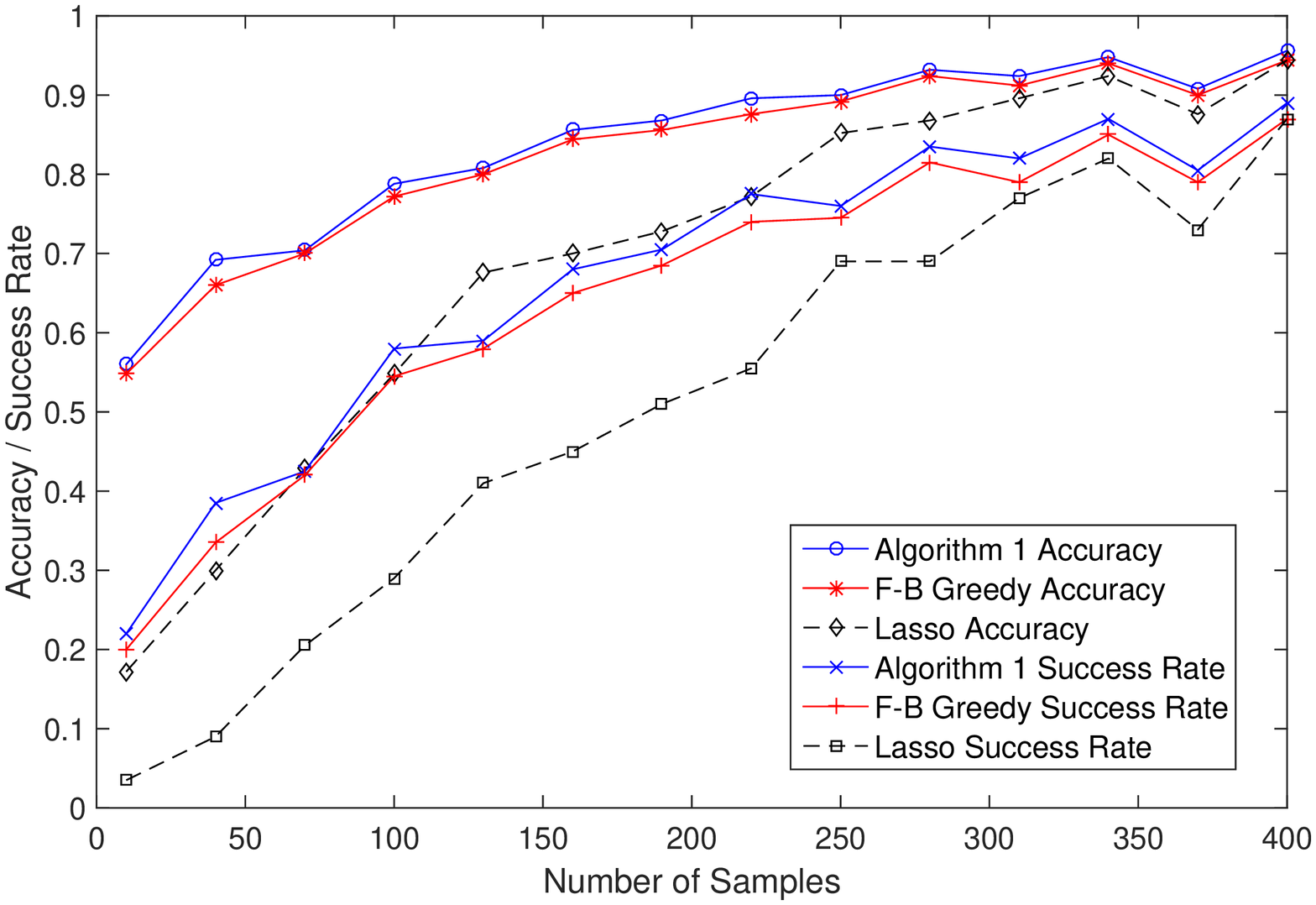}
\hspace{-1mm}  \caption*{(a) Diamond Graph.}
  \label{fig:diamond}
\endminipage\hfill
\minipage{0.56\textwidth}
  \hspace{-13mm}  \includegraphics[width=\linewidth]{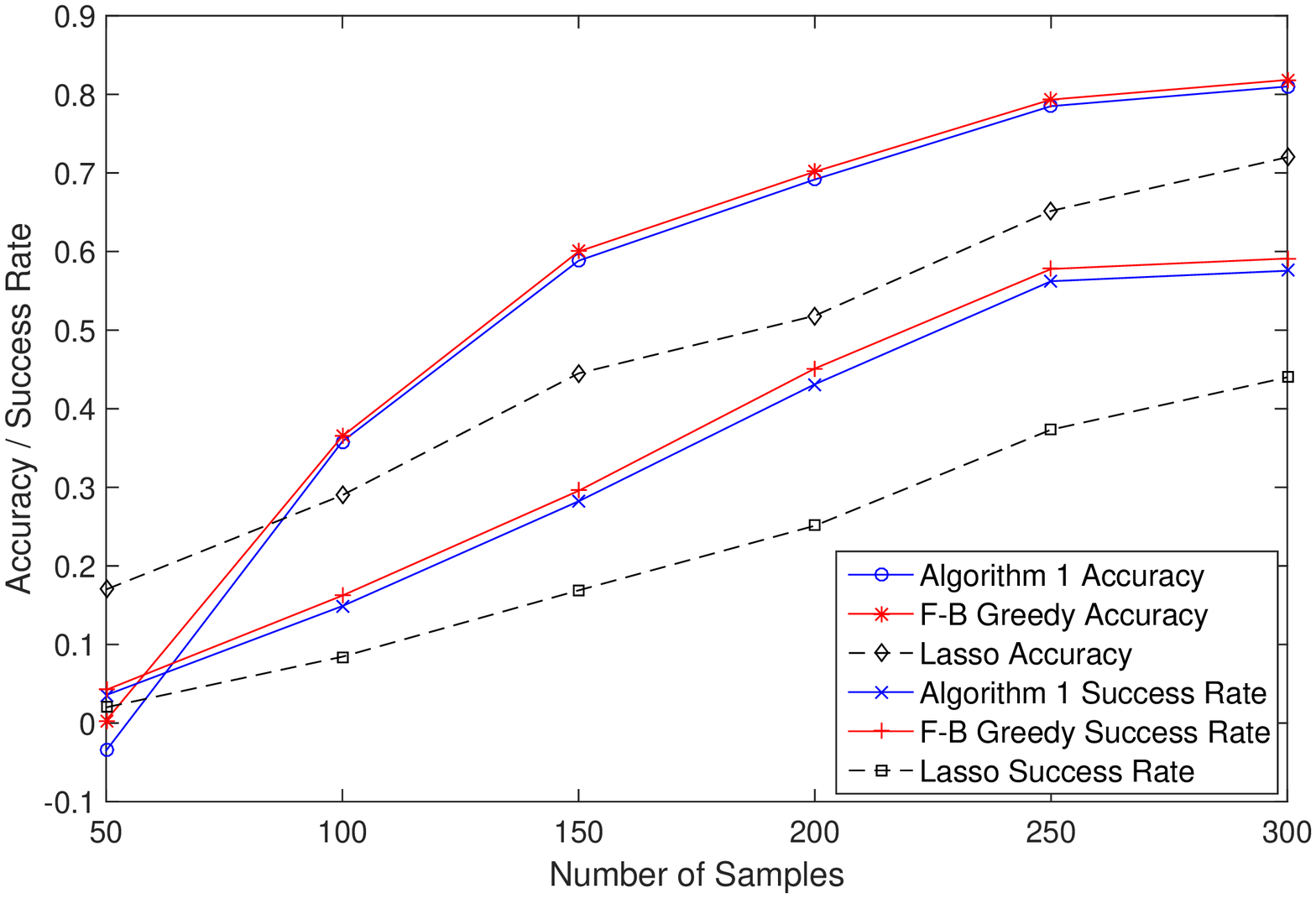}
 \hspace{-6mm} \caption*{(b) $3\times 3$ grid.}
  \label{fig:grid}
\endminipage\hfill
\caption{Performance comparison between the greedy algorithms and Lasso with fixed threshold}\label{fig:specials2}
\end{figure}
\\
We next compared the performance on random graphs of size 10 ($d=6$) and 20 ($d=13$), with average number of edges for each instance around 20 and 51, respectively. $\rho$ is set to at least $10$ to allow easier generation of inverse covariance matrix satisfying the restricted eigenvalue constraints. The results are shown in \ref{fig:random}. It can be seen that Algorithm \ref{algorithm} is slightly superior to the forward-backward greedy algorithm, when the graph becomes denser. This is mainly due to the fact that the forward step of our algorithms uses the conditional mutual information test, which has a higher chance of selecting the correct neighbors.

\begin{figure}[tb]
\minipage{0.56\textwidth}
\hspace{-8.5mm}  \includegraphics[width=\linewidth]{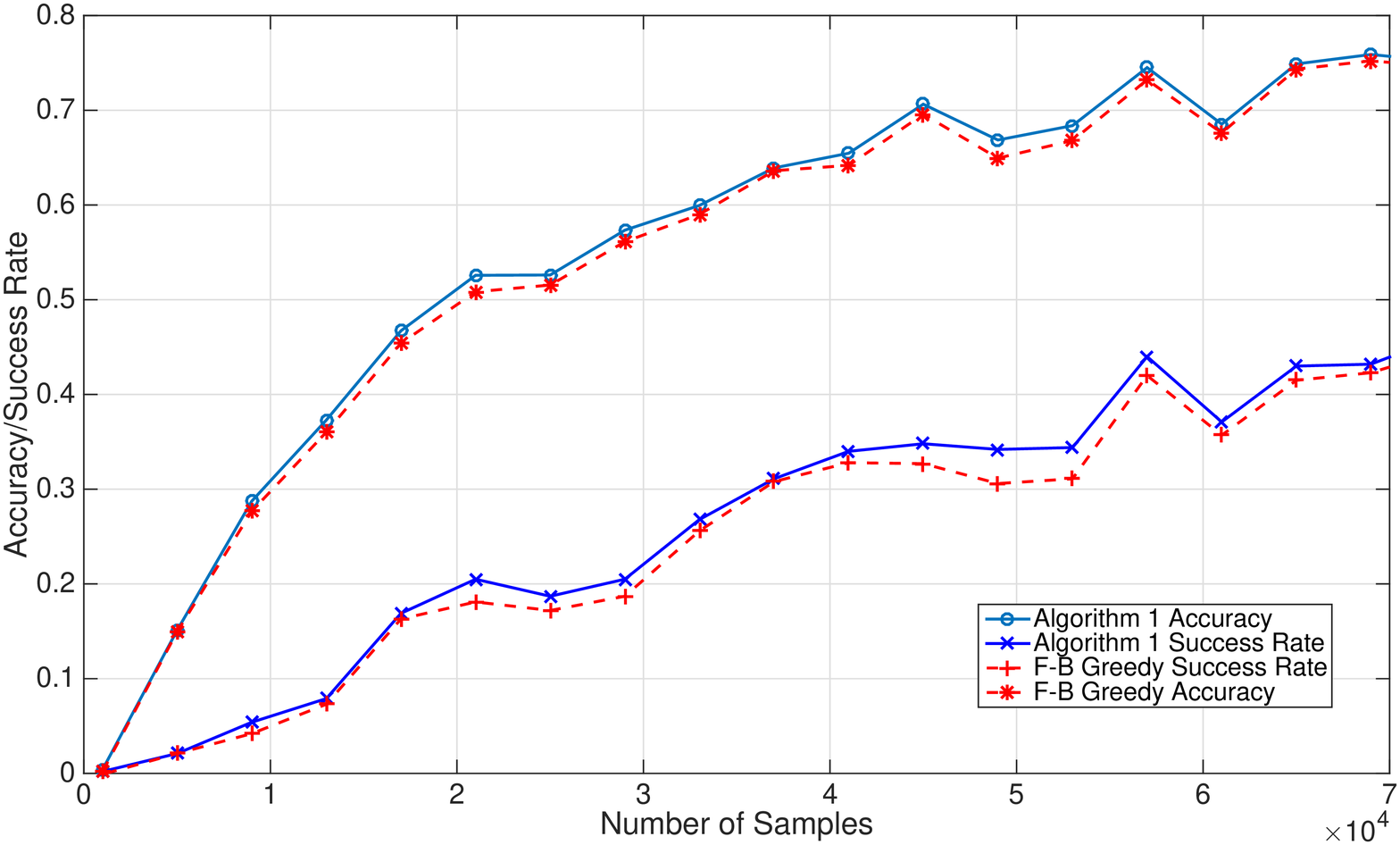}
\hspace{-11mm}  \caption*{(a) $n=10, |E|\approx20$.}
  \label{fig:random10}
\endminipage\hfill
\minipage{0.56\textwidth}
  \hspace{-13mm} \includegraphics[width=\linewidth]{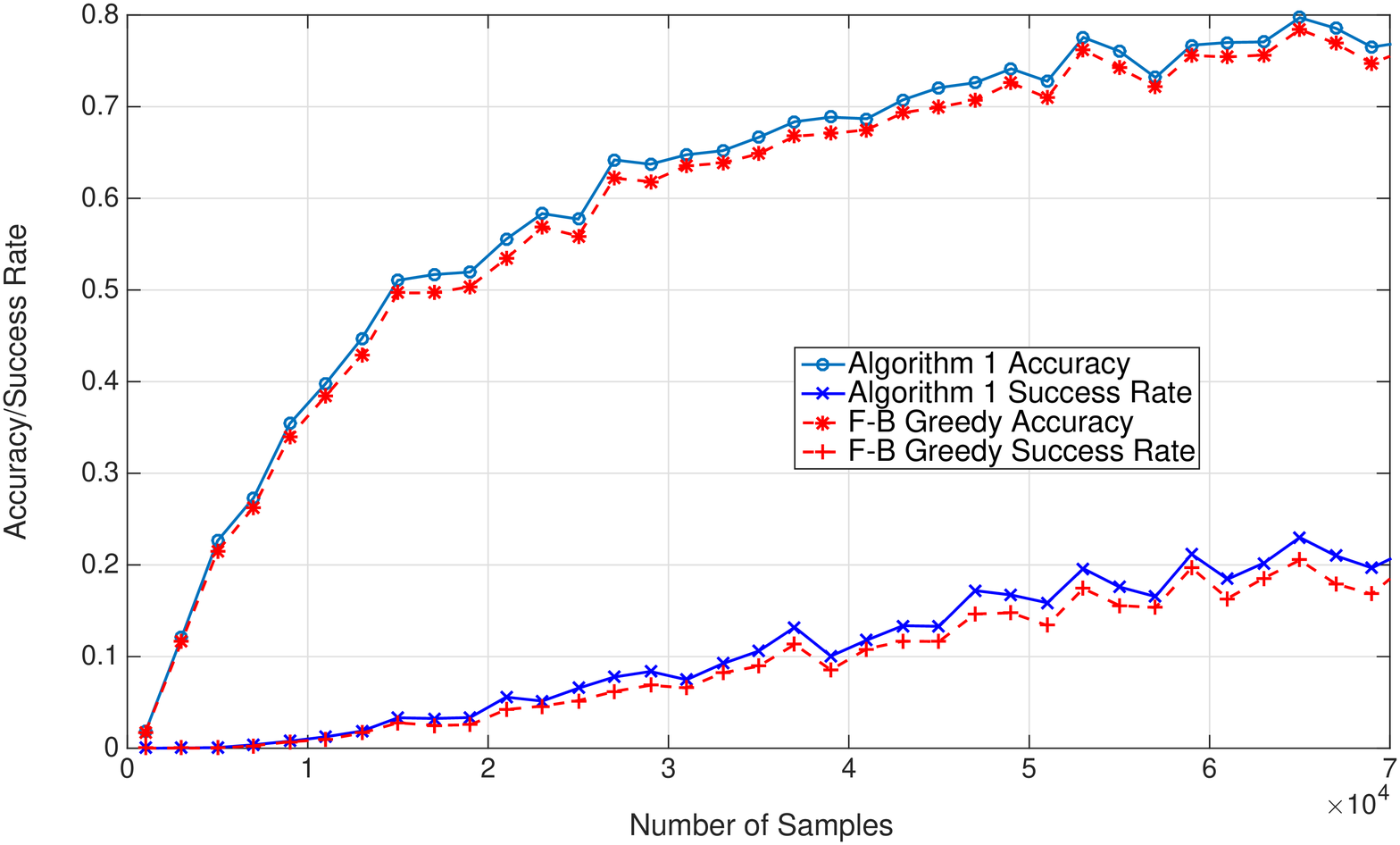}
 \hspace{-6mm} \caption*{(b)  $n=20, |E|\approx51$.}
  \label{fig:random20}
\endminipage\hfill
\caption{Performance comparison between Algorithm \ref{algorithm} and forward-backward greedy algorithm on random graphs with decreasing threshold. }\label{fig:random}
\end{figure}

\subsection{The thresholding algorithm}

\subsubsection{Algorithm efficiency}

We first demonstrate the result of Theorem \ref{thm:SiUB}, assuming that we have infinite samples (and hence the exact covariance matrix $\Sigma$), and that $d_{min}=d_{max}=1$ so that we only have freedom in choosing $\alpha$, $a$ and $b$. When the threshold is selected as in Algorithm \ref{alg:thresholding}, we have
\begin{align}
\frac{|S_i|}{\Delta_i}\leq \frac{(1+\alpha)^2}{(1-\alpha)^2}\cdot\frac{b^2}{a^2},
\end{align}
assuming that $\Delta_i>0$. This implies that the graphical model is easy to learn when (1) $\alpha$ is small, and (2) when $b/a$ is not too large. We hence plot $|S_i|/\Delta_i$ as a function of $\alpha$ for different $b/a$ ratios. As can be seen from Figure \ref{fig:SiUB}, the graph is easier to learn when $b/a$ is small and when $\alpha$ is small. In addition, decreasing value of $a$ does not increase the hardness of learning the graph as long as $b/a$ is fixed. Finally, we point out that even when the upper bound is large, it can be seen from the numerical results shown in later sections that the actual size of $|S_i|$ is small.

\begin{figure}[tb]
\def\tabularxcolumn#1{m{#1}}
\begin{tabular}{cc}
\subfloat[Upper bound of $|S_i|/\Delta_i$ as a function of $\alpha$ for different values of $b/a$.]{\includegraphics[width=0.48\linewidth]{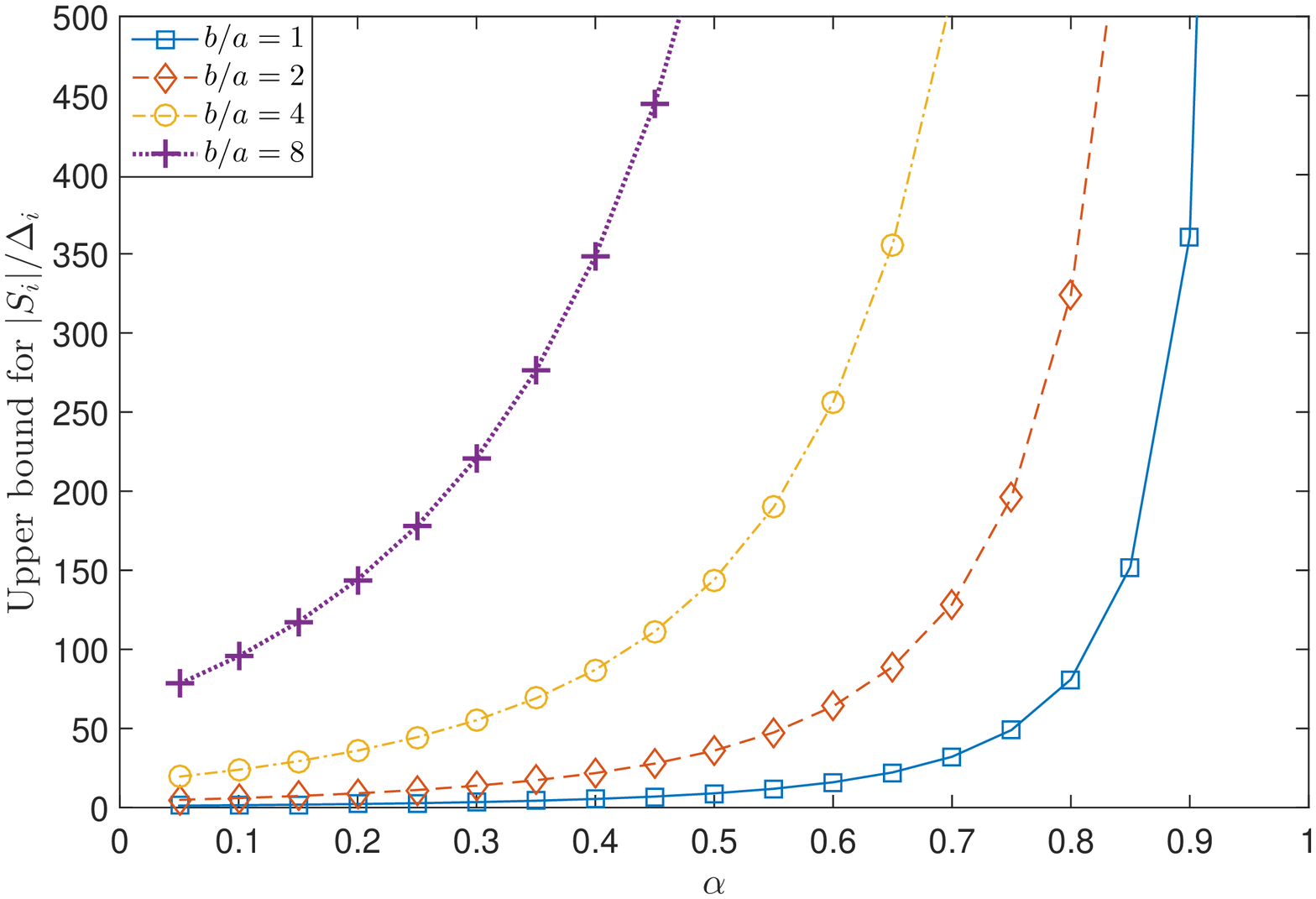}\label{fig:SiUB}} 
   & \subfloat[An example comparing the threshold adopted by the oracle and Algorithm \ref{alg:thresholding}. It can be seen that the looseness of the algorithm allows false neighbors to be selected.]{\includegraphics[width=0.48\linewidth]{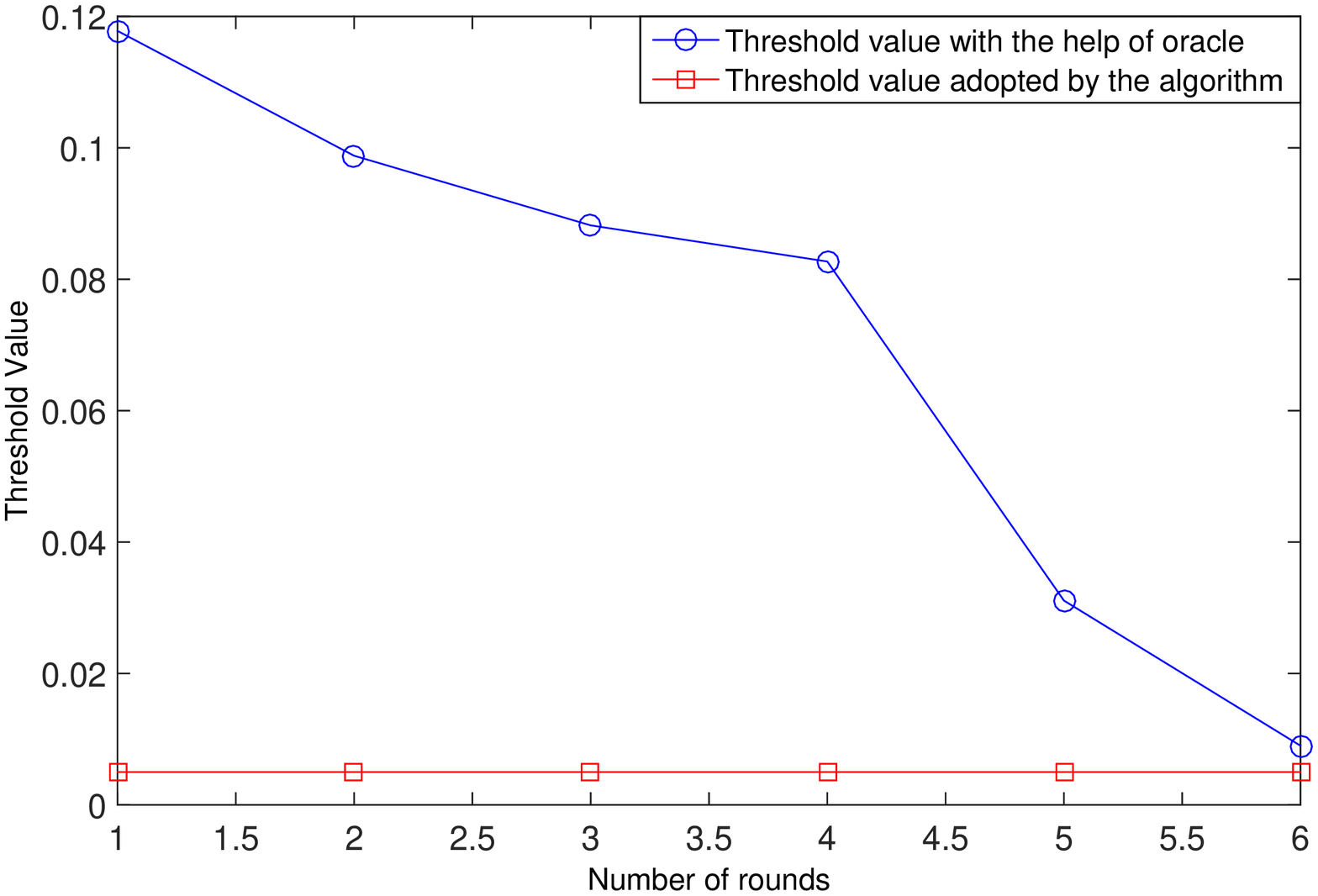}\label{fig:threshold}}
\end{tabular}
\caption{Upper bound for $|S_i|/\Delta_i$ and an illustration of the threshold used by Algorithm \ref{alg:thresholding} and the lower bound provided by the corresponding lemma.}
\end{figure}

\subsubsection{Normalized case: a random instance}

For simplicity and space limit, we only demonstrate the normalized case without triangles, the case where the diagonal entries of $J$ are all ones. This frees us from setting different $d_{min}$ and $d_{max}$, and the results are easy to track, although we also note that the result for generalized case will be slightly worse than the normalized case as well. We randomly generate graphical structures, rejecting those instances that contain triangles, and then generate the off-diagonal entries of the upper triangle matrix using i.i.d. Gaussian distribution with $a=0.01$. We scale the off-diagonal entries to make sure that the spectral radius of $|R|$ is below a certain level of $\alpha$, and reject those that violate the entry wise lower bound.

Since the actual lower bound adopted is quite loose, which we shall see later, it is likely that Algorithm \ref{alg:thresholding} selects a superset of the actual neighborhood with very high probability. Hence, the following algorithm can be implemented right after Algorithm \ref{alg:thresholding}. Algorithm \ref{alg:pruningbysymmetry} simply checks whether the adjacency matrix obtained is symmetric. It can be easily seen that if Algorithm 1 succeeds, then $i$ and $j$ must be simultaneously in each other's estimated neighborhood if they are actual neighbors.

\begin{algorithm}[H]
\caption{Pruning by Symmetry}
\label{alg:pruningbysymmetry}
    \begin{algorithmic}[1]
        \STATE {\it Input:} $S_1,S_2,...,S_n$.
        \STATE {\it Output:} $S'_1,S'_2,...,S'_n$.
        \STATE { Initialization: $A\leftarrow zeros(n,n)$.}     
        \STATE {Set $A(i,S_i)\leftarrow 1$ for all $i=1,...,n$}
        \STATE {For all $(i,j)$, if $A(i,j)\neq A(j,i)$, set $A(i,j)\leftarrow 0$, and $A(j,i)\leftarrow 0$.}
        \STATE {Set $S'_i\leftarrow \text{find}(A(i,:)\neq 0)$ for all $i=1,...,n$}
    \end{algorithmic}
\end{algorithm}

\begin{figure}[tb]
\def\tabularxcolumn#1{m{#1}}
\begin{tabular}{cc}
\subfloat[True graph]{\includegraphics[width=0.48\linewidth]{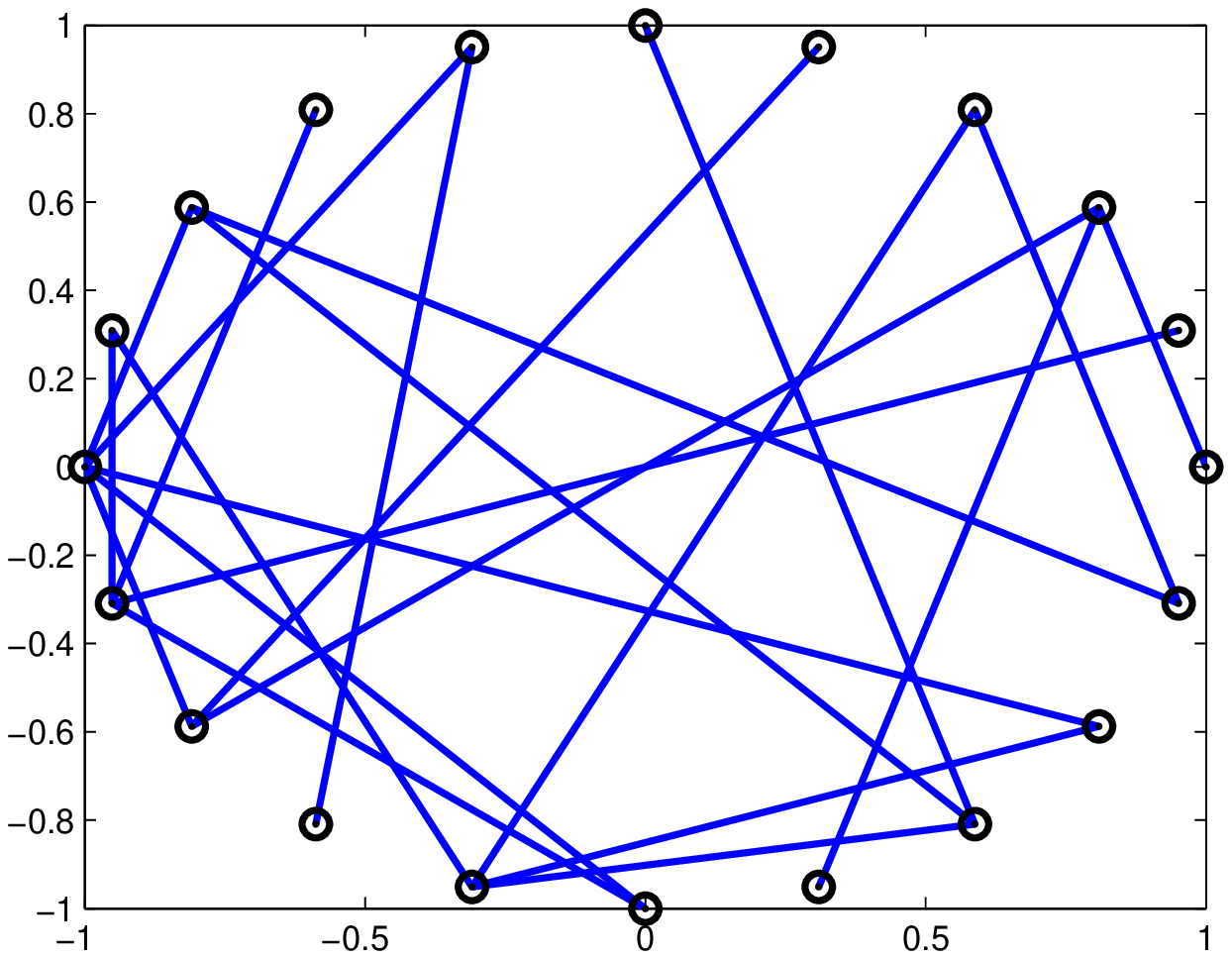}} 
   & \subfloat[Oracle Version of Algorithm \ref{alg:thresholding}, accompanied by Algorithm \ref{alg:pruningbysymmetry}.]{\includegraphics[width=0.48\linewidth]{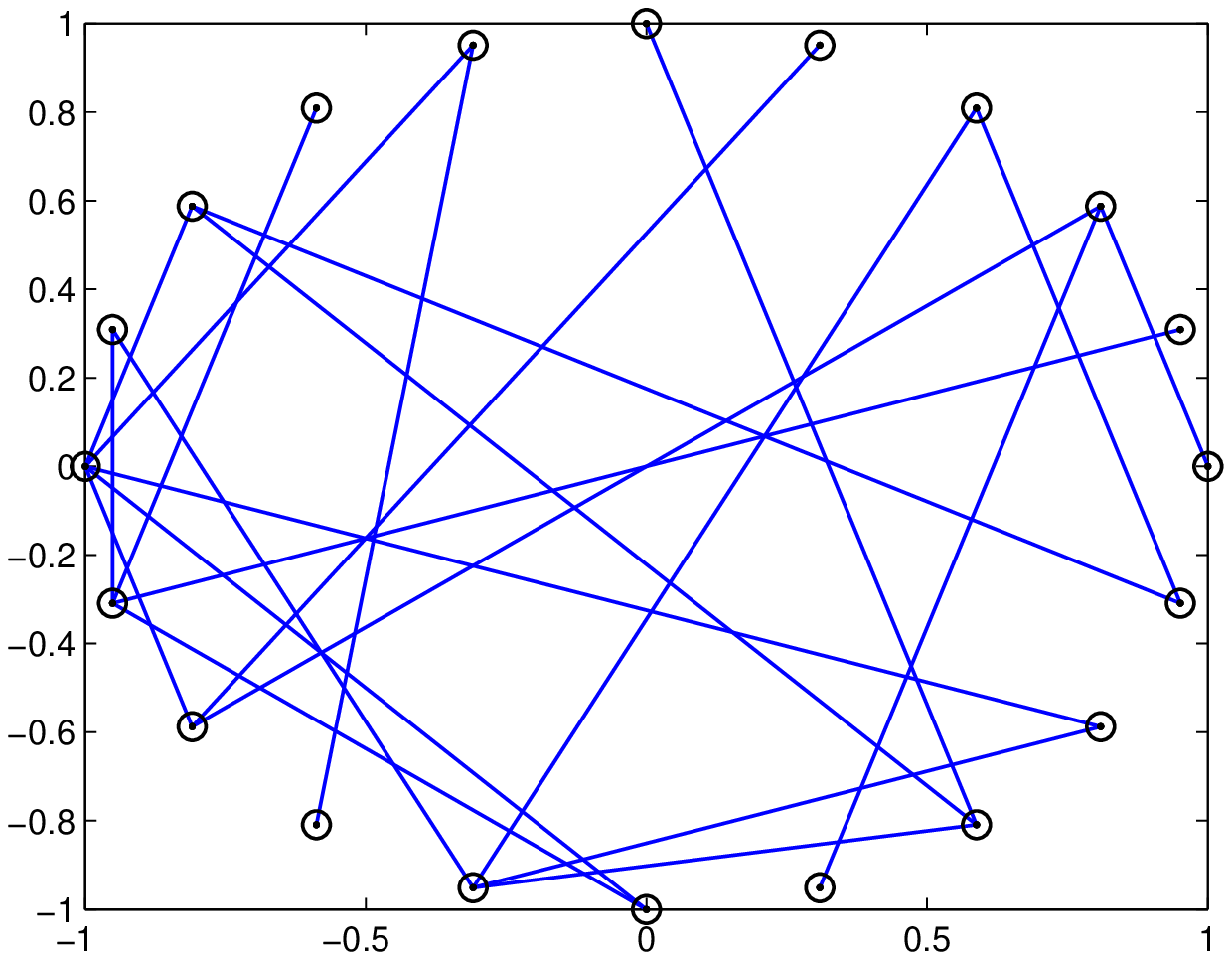}}\\
\subfloat[Algorithms \ref{alg:thresholding} and \ref{alg:pruningbysymmetry} with 1E6 samples]{\includegraphics[width=0.48\linewidth]{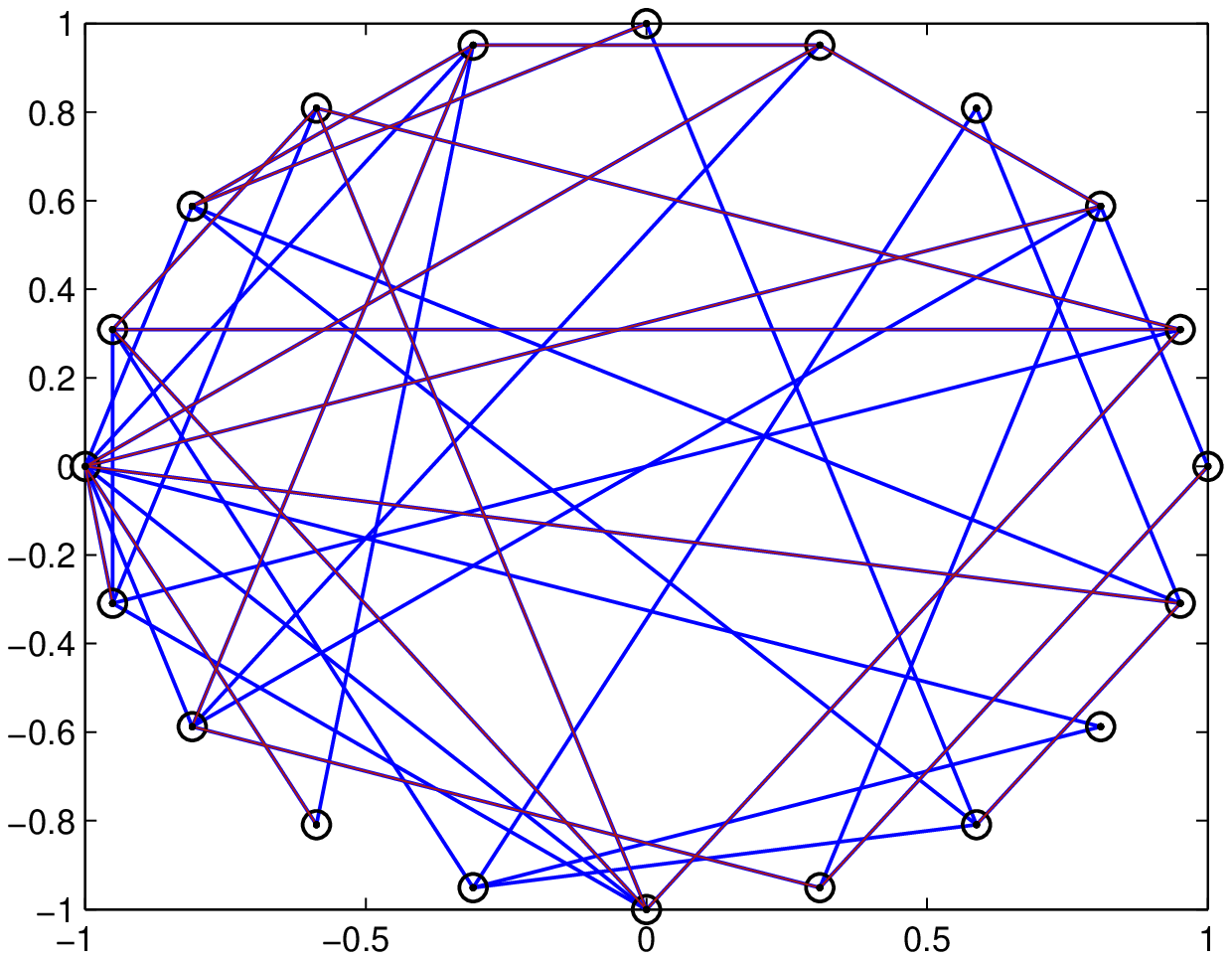}} 
   & \subfloat[Algorithms \ref{alg:thresholding}, \ref{alg:pruning}, and \ref{alg:pruningbysymmetry}, with 1E6 samples]{\includegraphics[width=0.48\linewidth]{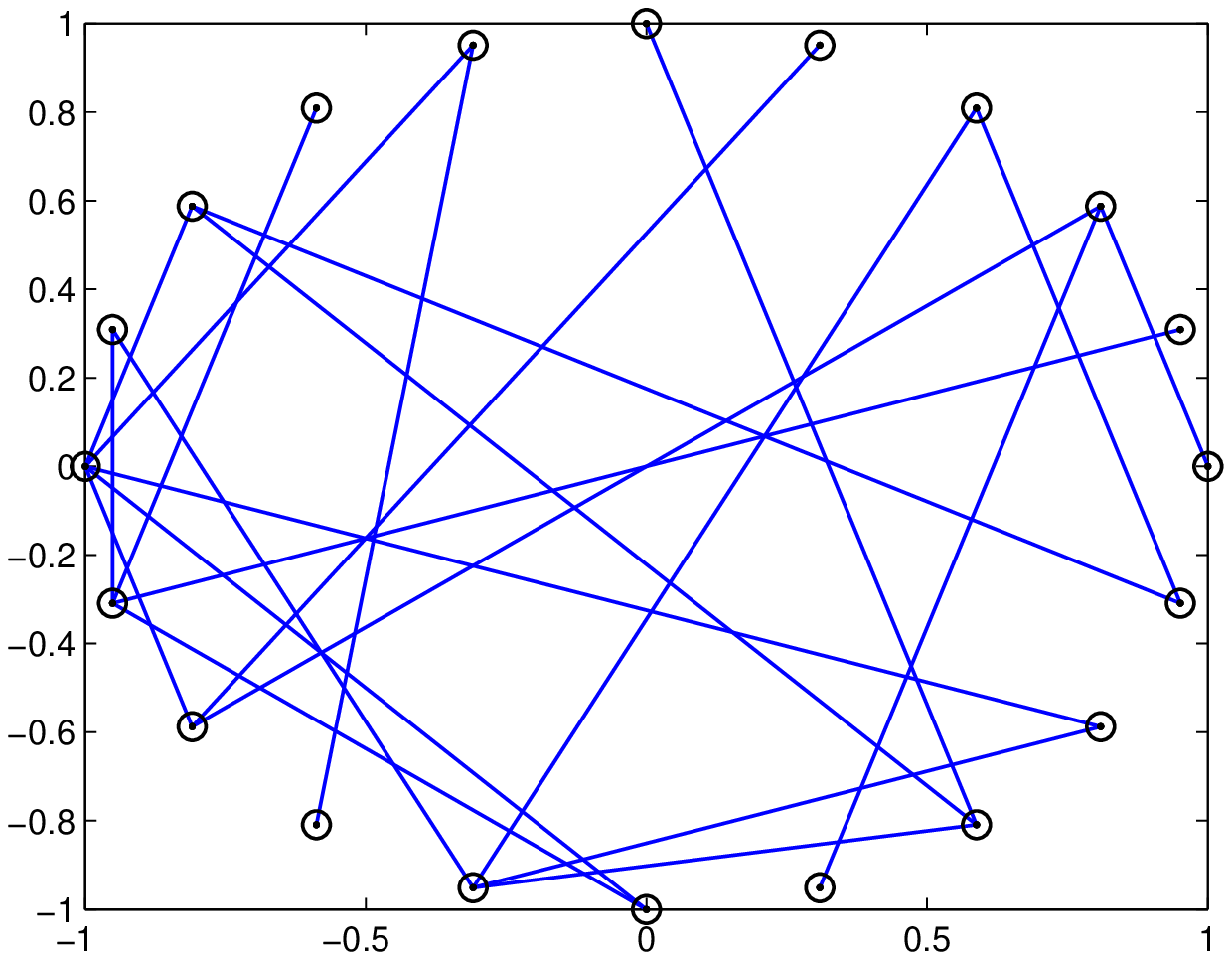}}\\
\end{tabular}
\caption{One random instance with 20 nodes. The true graph contains 22 edges. The oracle version and the Algorithm \ref{alg:thresholding} accompanied by Algorithm \ref{alg:pruning} for pruning returns the correct graph. The Algorithm \ref{alg:thresholding} alone returns a graph containing all 22 true edges and 19 false edges. The pruning algorithm \ref{alg:pruningbysymmetry} is applied automatically. We set $\alpha=0.4$, $a=0.01$, $b=0.28$ and $\Delta=10$. For triangle free graphs, we substitute the threshold with \ref{cor:trianglefree}.}\label{fig:randominstance20}
\end{figure}

We first demonstrate the effectiveness of the threshold algorithm accompanied by rough pruning with Algorithm \ref{alg:pruningbysymmetry}, without applying Algorithm \ref{alg:pruning} for finer pruning. A random sample graph is generated, and the results of the algorithms running on graphs of size 20 are shown in Figure \ref{fig:randominstance20}. We see that the combination of Algorithms \ref{alg:thresholding} and \ref{alg:pruningbysymmetry} manages to find all the true neighbors, while not selecting too many false ones. Further pruning by Algorithm \ref{alg:pruning} gives the correct graph. It can be seen that the actual size of $|S_i|$ is much lower than the upper bound. Meanwhile, to compare with the actual algorithm, we also provide an oracle version of Algorithm \ref{alg:thresholding} by substituting the threshold, which is designed according to Corollary \ref{cor:trianglefree} for triangle free graphs, by the lower bound provided in Appendix \ref{cor:trianglefreep} that corresponds to Lemma \ref{lm:normalizedlb}, and providing all the necessary information including $K$ and $\Vert J_{i,\mathcal{N}_i\backslash S_i}\Vert_2^2$, and $\Sigma$ to the algorithm. It can be seen that the lower bound provided in \ref{lm:normalizedlb} (here it's the corresponding version for triangle free graphs), is quite tight. The difference between the actual bound adopted by Algorithm \ref{alg:thresholding} and its oracle version for the triangle free graphs, is depicted in Figure \ref{fig:threshold}.

\subsubsection{Normalized case: probability of success}

We next plot the probability of success for our algorithm, combining Algorithms 1 to 3, as a function of the number of samples required. The simulation is carried out on 20 node random graphs averaging over 100 instances.

We compare the result of our algorithm to the forward-backward greedy algorithm of \cite{johnson2011high}, which has been shown to outperform the method of Neighborhood Lasso. The main idea of the forward-backward greedy algorithm is to first repeatedly select the node into $S_i$ that minimizes a loss function, which is the distance of $X_i$ to the linear vector space spanned by $X_j$ with $j\in S_i$ in $\mathcal{L}_2(\Omega,\mathcal{F},P)$, until the change in the loss function is below a certain threshold $\epsilon_s$. After the forward part of the algorithm terminates, the backward part of the algorithm prunes the node that causes least amount of change in the loss function until the change is greater than $\nu\epsilon_s$, where $\nu\in(0,1)$ is chosen arbitrarily.

For the forward-backward greedy algorithm, we set (following their notations) $C_{min}=1/(1+\alpha)$, $d$ be the actual degree upper bound of each generated instance, $\rho=(1+\alpha)/(1-\alpha)$. The forward stopping threshold is set to $\epsilon_s=8c\rho\eta d\log(m)/(k C_{min})$, where $k$ is the number of samples, and $\eta=\lceil2+4\rho^2(\sqrt{\rho^2-\rho}/d+\sqrt{2})^2\rceil$. The tuning parameter $c$, which is used to determine the value of $\epsilon_s$, is set from $10^{-1}$ to $10^{-4}$. Notice that $c$ is a tuning parameter. Even though for different $c$ the algorithm will always be structurally consistent, it has to be manually tuned for different set of parameters ($m$, $\alpha$, $a$ for entry wise lower bound, and $d$ for degree upper bound) in order for the algorithm to converge fast. This can be observed from Figure \ref{fig:PoS20}, where different values of $c$ yields different rates of convergence. In fact, many algorithms require the knowledge of tuning parameters, a study on such problem can be found in \cite{liu2012tiger}. Finally, we point out that the forward-backward greedy algorithm is more powerful when used to select the graphical structure as a whole, compared to neighborhood selection, and that the forward-backward greedy algorithm is likely to be more efficient when the number of samples is small. More details can be found in \cite{johnson2011high}.

We analyze the results are shown in Figure \ref{fig:PoS20}. We also calculated how many edges in total both algorithms selects, shown in Figure \ref{fig:FwdCnt20}, where each algorithm is accompanied by the raw pruning done by Algorithm \ref{alg:pruningbysymmetry}. From the figure, we can see that (1) the performance of the forward-backward greedy algorithm is very sensitive to the tuning parameter $c$, while our algorithm does not involve any notion of tuning parameter; (2) the performance of our algorithm matches the best performance for the forward-backward greedy algorithm with the five choices of $c$; (3) Both our algorithm, and the forward-backward greedy algorithm with the best choice of $c$, are very efficient in the selecting the pseudo neighborhood (although forward-backward greedy algorithm prunes the pseudo neighborhood every time it picks a new node) when the number of samples is large. The reason that our algorithm will select a large pseudo neighborhood when the number of samples is small is due to the choice of the pruning threshold in Algorithm \ref{alg:pruning}, which we arbitrarily set to $10^{-3}$.

\begin{figure}[tb]
\def\tabularxcolumn#1{m{#1}}
\begin{tabular}{cc}
\subfloat[Probability of Success. Both the forward selection parts and backward pruning parts of the proposed algorithm and the forward-backward greedy algorithms are applied (as opposed to the other part of this figure).]{\label{fig:PoS20}\includegraphics[width=0.48\linewidth]{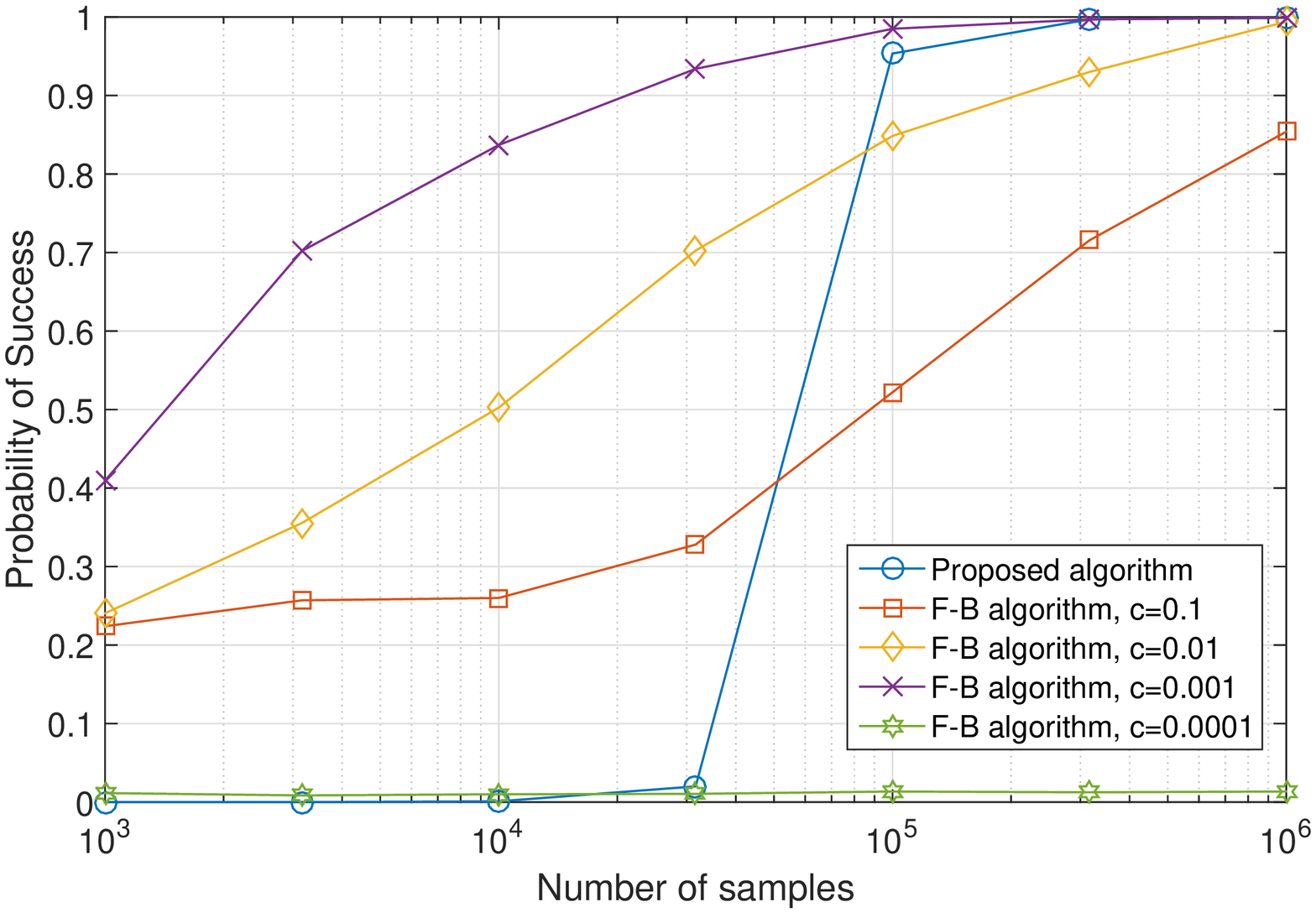}} 
   & \subfloat[Pseudo neighborhood size selected by the forward part of the algorithm, pruned only by symmetry.]{\label{fig:FwdCnt20}\includegraphics[width=0.48\linewidth]{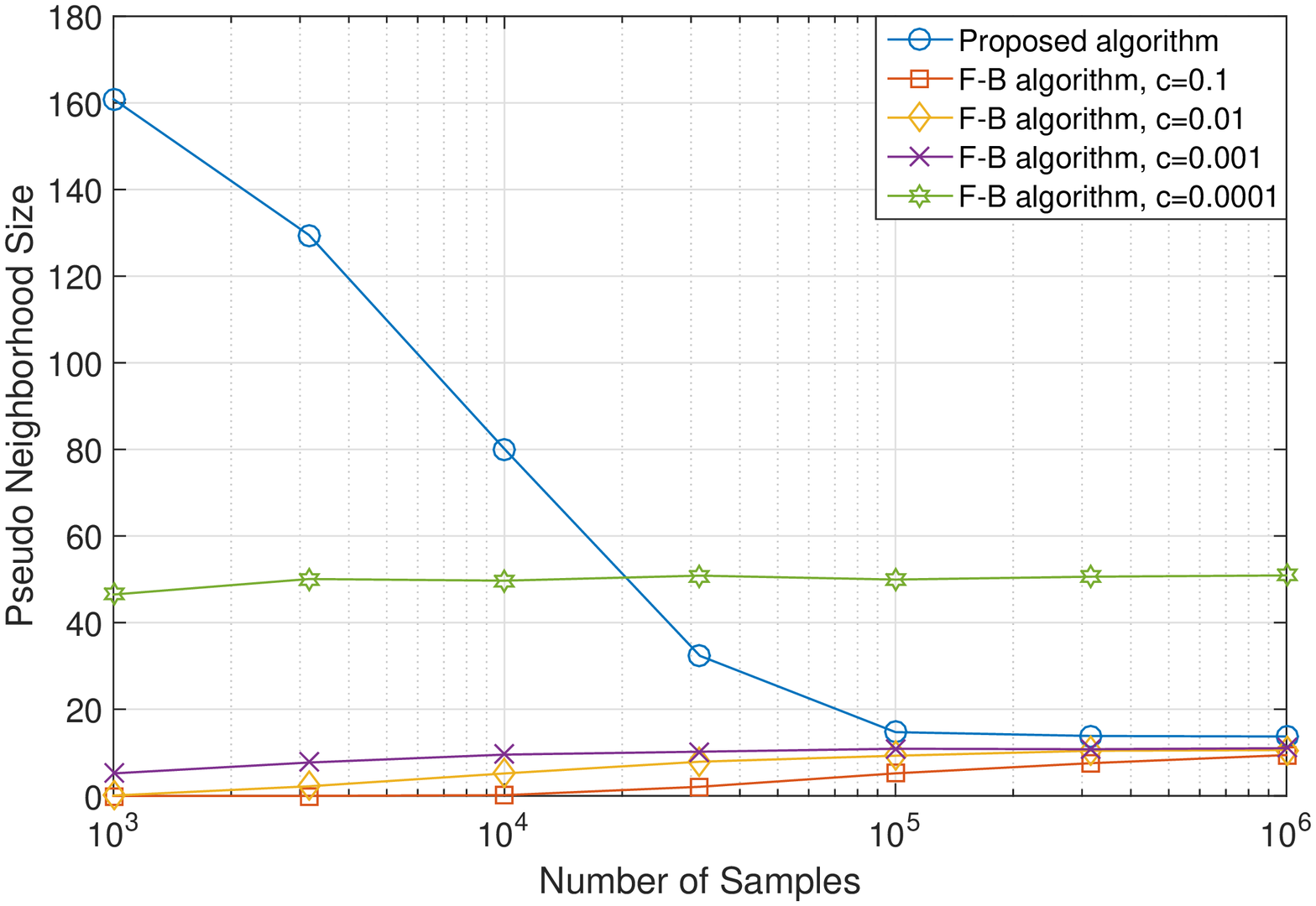}}
\end{tabular}
\caption{Performance evaluation on 20 node random graphs. When $c$ decreases from $10^{(-1)}$ to $10^{(-3)}$, the forward stopping threshold for the forward-backward greedy algorithm is large enough so that the algorithm will not select a very large pseudo neighborhood. When $c=10^{(-4)}$, the threshold becomes too small, and the pseudo neighborhood size becomes very large. }
\end{figure}

\section{Conclusion and future work}

In this paper, we studied the problem of neighborhood selection for walk summable Gaussian graphical models. We presented a novel property for those type of models, which lower bounds the maximal absolute value of the conditional covariance between a node and its undiscovered neighbors. Based on this property, we presented two algorithms which greedily selects the neighborhood by thresholding the conditional covariance, and prunes the potentially false neighbors, respectively. When the graph does not contain any triangles, the bounds can be tightened. We characterized the efficiency of the algorithm in terms of the upper bound for the ratio between the selected pseudo neighborhood by thresholding and the actual degree of the algorithm. We also showed that the number of iterations executed by the algorithm is also upper bounded. We gave computational complexity results for the algorithms we propose, and we presented results on sample complexity and structural consistency of the algorithms when working with finite number of samples. We simulated our algorithm for the triangle free version, and the numerical results showed that the algorithms work well in reality.

Future work includes the following aspects:
\begin{itemize}
\item Can $|S_i|$ be more efficiently upper bounded?
\item Can we develop a similar technique for more generalized type of graphs, mainly Gaussian graphical models that are not walk summable? 
\end{itemize}
The answer for the second question is more fundamental, and we believe it requires a clever way of connecting $J_{i,\mathcal{N}_i}$ and $\Sigma_{i,\mathcal{N}_i}$, and a clever way of exploiting the zero patterns of the $J$ matrix.

\bibliographystyle{plain}
\bibliography{ref}

\section{Appendix}
\label{sec:appendix}

\subsection{Walk Summable GMRF}
\label{sec:WSGMRF}

We start off by introducing several concepts that lead to the concept of walk-summability.

For any information matrix $J$ that is symmetric and positive definite, we denote $J_{norm}$ to be its {\it normalized version}, which can be obtained by letting
\begin{align}
J_{norm,ij}=\frac{J_{ij}}{\sqrt{J_{ii}J_{jj}}}.
\end{align}
An information matrix $J$ itself is said to be normalized if $J=J_{norm}$, i.e., the diagonal elements of $J$ are all ones. Otherwise, we can represent $J$ by
\begin{align}
J=\sqrt{D}J_{norm}\sqrt{D},
\end{align}
where $D$ is the diagonal matrix of $J$, with $D_{ii}=J_{ii}$ and $D_{ij}=0$ for $i\neq j$.

Next, we denote $R$ as the partial correlation matrix, which satisfies $R=I-J_{norm}$. The entry $R_{ij}$ measures $\Sigma_{ij|V\backslash\{i,j\}}$. We denote $|R|$ as the matrix satisfying $|R|_{ij}=|R_{ij}|$.

With these two concepts, we proceed to the definition of walk summability, which is parameterized by $\alpha$.

\begin{definition}[$\alpha$ walk summability \cite{anandkumar2012high}] A GMRF is said to be $\alpha$ walk summable if $\Vert |R|\Vert_2\leq\alpha<1$.
\end{definition}

The most elegant property of the walk summability is that it allows us to relate the covariance between $X_i$ and $X_j$ as the weight sum of all the paths between nodes $i$ and $j$ in the underlying graph specified by $J$, where the edge $(i,j)$ weighs $J_{ij}$ and the weight of a path is the product of all the edge weights along the path. We illustrate this in the following example, which comes in useful later.
\begin{example}[Normalized $J$ matrix]
\label{eg:normalized}
Consider the special case where $J$ is itself normalized, i.e., $J=J_{norm}$. In this case,
\begin{align}
\label{eq:neumman}
\Sigma=J^{-1}=(I-R)^{-1}=\sum_{k=0}^{\infty}R^{k}.
\end{align}
This shows that $\Sigma_{ij}$ is the sum of $R^{k}_{ij}$ for all non-negative integers $k$. Since $R$ preserves the graphical structure of $J$ between different nodes, it's immediately seen that $R^{k}_{ij}$ represents the summation of weight of all the paths from $i$ to $j$ with $k$ hops.
\end{example}

Another interesting property of the walk summable Gaussian graphical model is that it has restricted eigenvalues, which often appears as the condition for exact recovery of the neighborhood (with high probability) using Lasso and greedy methods \cite{johnson2011high}. Here we present it as a lemma.
\begin{lemma}
\label{lm:re}
Assume a Gaussian graphical model with $n$ vertexes is $\alpha$ walk summable, and $J_{ii}\in [d_{min},d_{max}], \forall i\in\{1,2,...,n\}$, then $\forall x\in \mathbbm{R}^n$,
\begin{align}
\label{eq:reC}
(1+\alpha)^{-1}d_{max}^{-1}\Vert x\Vert_2\leq&\Vert \Sigma x\Vert_2\leq (1-\alpha)^{-1}d_{min}^{-1}\Vert x\Vert_2,\\\label{eq:reJ}
(1-\alpha)d_{min}\Vert x\Vert_2\leq&\Vert Jx\Vert_2\leq (1+\alpha)d_{max}\Vert x\Vert_2.
\end{align}
\end{lemma}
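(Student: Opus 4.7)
The plan is to reduce both inequalities to a spectral estimate on the symmetric matrix $I-R$, where $R := I - \sqrt{D}^{-1}J\sqrt{D}^{-1}$ is the normalized partial-correlation matrix from the definition of walk summability. By hypothesis $\Vert\,|R|\,\Vert_2\leq\alpha$. My first step is to upgrade this to a spectral bound on $R$ itself: since $R$ is symmetric, for any $x\in\mathbbm{R}^n$ we have $|x^T R x|\leq |x|^T |R|\, |x|\leq \Vert\,|R|\,\Vert_2\Vert x\Vert_2^2\leq\alpha\Vert x\Vert_2^2$, and the Rayleigh quotient characterization of eigenvalues of symmetric matrices then places every eigenvalue of $R$ in $[-\alpha,\alpha]$. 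Hence the eigenvalues of $I-R$ lie in $[1-\alpha,1+\alpha]$, giving $(1-\alpha)\Vert y\Vert_2\leq\Vert (I-R)y\Vert_2\leq(1+\alpha)\Vert y\Vert_2$ for every $y\in\mathbbm{R}^n$.

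Next I would use the factorization $J=\sqrt{D}(I-R)\sqrt{D}$ together with the elementary two-sided bound $\sqrt{d_{min}}\Vert z\Vert_2\leq \Vert\sqrt{D}z\Vert_2\leq\sqrt{d_{max}}\Vert z\Vert_2$, which follows from $D$ being diagonal with entries in $[d_{min},d_{max}]$. Applying the diagonal bound to $z=x$, then the $(I-R)$ bound to $\sqrt{D}x$, and finally the diagonal bound once more to $(I-R)\sqrt{D}x$, I would chain the three estimates to obtain
\[
(1-\alpha)d_{min}\Vert x\Vert_2\leq \Vert Jx\Vert_2\leq(1+\alpha)d_{max}\Vert x\Vert_2,
\]
which is exactly (\ref{eq:reJ}).

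For (\ref{eq:reC}), I would observe that $J$ is symmetric and positive definite, so the bound just established on $\Vert Jx\Vert_2/\Vert x\Vert_2$ forces every eigenvalue of $J$ to lie in $[(1-\alpha)d_{min},(1+\alpha)d_{max}]$. The eigenvalues of $\Sigma=J^{-1}$ are then the reciprocals, lying in $[((1+\alpha)d_{max})^{-1},((1-\alpha)d_{min})^{-1}]$, and since $\Sigma$ is also symmetric this translates directly into
\[
\frac{1}{(1+\alpha)d_{max}}\Vert x\Vert_2\leq \Vert\Sigma x\Vert_2\leq \frac{1}{(1-\alpha)d_{min}}\Vert x\Vert_2,
\]
which is (\ref{eq:reC}).

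I do not expect any serious obstacle. The only mildly subtle step is the passage from the entrywise-absolute-value spectral norm $\Vert\,|R|\,\Vert_2\leq\alpha$ to a genuine bound on the spectrum of $R$ itself; once that Rayleigh-quotient comparison is in place, the remainder of the argument is routine bookkeeping with the diagonal scaling $\sqrt{D}$ and with the inversion $\Sigma=J^{-1}$.
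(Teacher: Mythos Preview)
Your proposal is correct and follows essentially the same route as the paper's proof: factor $J=\sqrt{D}(I-R)\sqrt{D}$, pass from $\Vert\,|R|\,\Vert_2\le\alpha$ to the spectral bound $\lambda_R\in[-\alpha,\alpha]$, deduce $\lambda_J\in[(1-\alpha)d_{min},(1+\alpha)d_{max}]$, and then invert to get the $\Sigma$ bounds. The paper's version is terser and leaves implicit both the Rayleigh-quotient comparison $|x^TRx|\le |x|^T|R|\,|x|$ and the diagonal-scaling chaining that you spell out, so your write-up actually fills in the one step the paper skips.
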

\begin{proof}
Denote the $D$ as the diagonal matrix of $J$. Then $J=\sqrt{D}(I-R)\sqrt{D}$. Since $\Vert |R| \Vert_2\leq \alpha$, we know that $-\alpha\leq \lambda_R\leq \alpha$. 
Hence, $(1-\alpha)d_{min}\leq\lambda_J\leq (1+\alpha)d_{max}$. Since $J=\Sigma^{-1}$,
\begin{align}
(1+\alpha)^{-1}d_{max}^{-1}\leq\lambda_{\Sigma}\leq(1-\alpha)^{-1}d_{min}^{-1}.
\end{align}
\end{proof}
This indicates that $\alpha$ walk summability is a stronger condition than the restricted eigenvalue condition, in the sense that it applies to all vectors $x$, while the restricted eigenvalue condition only requires sparse vectors, see for example \cite{johnson2011high}. Hence, the well explored methods such as Lasso and forward-backward greedy algorithms can be readily applied to walk summable Gaussian graphical models. The reverse direction of this relationship, however, may not be true, since we cannot imply $\Vert |R|\Vert_2< 1$ from $-\alpha\leq\lambda_{R}\leq\alpha$, while in order for the model to be walk summable, we must have $\Vert |R|\Vert_2< 1$ \cite{malioutov2006walk}.

In addition, notice that the value of $(1+\alpha)d_{max}/[(1-\alpha)d_{min}]$ characterizes the strength of the constraint on the eigenvalues of $J$ and $\Sigma$. With the same value of $(1-\alpha)d_{min}$, if we decrease one of $\alpha$ and $d_{max}/d_{min}$ while holding the other, then the eigenvalues of $\Sigma$ are restricted to a smaller region.

Finally, for convenience of further reasoning, we point out the following fact.
\begin{corollary}
\label{coro:re}
For an $\alpha$ walk summable Gaussian graphical model, let $S\subset V$. Denote $M^{(1)}=J_{SS|\bar{S}}$, $M^{(2)}=J_{SS}$, $N^{(1)}=\Sigma_{SS|\bar{S}}$, $N^{(2)}=\Sigma_{SS}$. Then $\forall x\in \mathbbm{R}^{n}$, and for $i=1,2$, we have
\begin{align}
(1-\alpha)d_{min}\Vert x\Vert_2&\leq\Vert M^{(i)}x\Vert_2\leq(1+\alpha)d_{max}\Vert x\Vert_2,\\(1+\alpha)^{-1}d_{max}^{-1}\Vert x\Vert_2&\leq\Vert N^{(i)}x\Vert_2\leq(1-\alpha)^{-1}d_{min}^{-1}\Vert x\Vert_2.
\end{align}
\end{corollary}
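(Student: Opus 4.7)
My plan is to reduce Corollary \ref{coro:re} to Lemma \ref{lm:re} by combining the Cauchy interlacing theorem for symmetric matrices with the conditioning/marginalization identities (\ref{eq:conditioning}) and (\ref{eq:marginalization}). Lemma \ref{lm:re} already supplies the ambient spectral intervals: every eigenvalue of $J$ lies in $[(1-\alpha)d_{min},(1+\alpha)d_{max}]$, and by inversion every eigenvalue of $\Sigma$ lies in $[(1+\alpha)^{-1}d_{max}^{-1},(1-\alpha)^{-1}d_{min}^{-1}]$. Since all four matrices $M^{(1)},M^{(2)},N^{(1)},N^{(2)}$ are symmetric positive definite, any spectral bound I obtain translates directly into the desired operator-norm sandwich via $\|Ax\|_2^2=x^\top A^2 x$ and the spectral theorem.

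For the ``plain submatrix'' cases $M^{(2)}=J_{S,S}$ and $N^{(2)}=\Sigma_{S,S}$, I would invoke the Cauchy interlacing theorem: the eigenvalues of any principal submatrix of a symmetric matrix lie in the closed interval spanned by the smallest and largest eigenvalues of the ambient matrix. Thus $\lambda(J_{S,S})\subseteq[(1-\alpha)d_{min},(1+\alpha)d_{max}]$ and $\lambda(\Sigma_{S,S})\subseteq[(1+\alpha)^{-1}d_{max}^{-1},(1-\alpha)^{-1}d_{min}^{-1}]$, which immediately yields the claimed norm inequalities for $M^{(2)}$ and $N^{(2)}$ applied to any $x\in\mathbb{R}^{|S|}$.

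For the ``conditional'' cases $M^{(1)}=J_{S,S|\bar S}$ and $N^{(1)}=\Sigma_{S,S|\bar S}$, I would invoke the identities already recorded in (\ref{eq:conditioning}) and (\ref{eq:marginalization}), which read $J_{S,S}=\Sigma_{S,S|\bar S}^{-1}$ and $\Sigma_{S,S}=J_{S,S|\bar S}^{-1}$. Rearranging, $M^{(1)}=\Sigma_{S,S}^{-1}$ and $N^{(1)}=J_{S,S}^{-1}$, so the spectra of $M^{(1)}$ and $N^{(1)}$ are the reciprocal spectra of $\Sigma_{S,S}$ and $J_{S,S}$ bounded in the previous step. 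Consequently $M^{(1)}$ satisfies the same bounds as $M^{(2)}$, and $N^{(1)}$ satisfies the same bounds as $N^{(2)}$, and the operator-norm inequalities follow as before.

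I do not expect a real obstacle here; the proof is essentially a two-line application of Cauchy interlacing followed by an inversion. The only piece of care required is bookkeeping: one must not conflate ``submatrix of $J$'' with ``inverse of submatrix of $\Sigma$,'' since for general matrices they differ, and it is precisely the identities (\ref{eq:conditioning})–(\ref{eq:marginalization}) that let us swap between the two interpretations used by $M^{(1)}/N^{(2)}$ and $M^{(2)}/N^{(1)}$. I would also flag in passing that the ``$\forall x\in\mathbb{R}^n$'' in the statement is most naturally read as $\forall x\in\mathbb{R}^{|S|}$, since the four operators act on $\mathbb{R}^{|S|}$.
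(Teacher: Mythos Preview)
Your proposal is correct and follows essentially the same route as the paper: for $i=2$ you use Cauchy interlacing on the ambient spectral bounds from Lemma~\ref{lm:re}, and for $i=1$ you invoke the identities (\ref{eq:conditioning})--(\ref{eq:marginalization}) to rewrite $M^{(1)}$ and $N^{(1)}$ as inverses of principal submatrices already controlled in the previous step. Your remark about the dimension of $x$ is also apt.
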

\begin{proof}
When $i=2$, the proof follows directly from Lemma \ref{lm:re}, by applying the interlacing property of the eigenvalues for a submatrix. When $i=1$, the proof follows similarly by first using the relationships (\ref{eq:conditioning}) and (\ref{eq:marginalization}), and then apply the interlacing property of the eigenvalue and Lemma \ref{lm:re}.
\end{proof}

\subsection{Proof of Lemma \ref{lm:relationship}.}\label{lm:relationship_proof}
\vspace{-1mm}
 Denote the projection of $X_i$ and $X_j$ onto $X_S$ by $X'_i$ and $X'_j$, respectively. Then, $X'_i=(\beta')^TX_S$, where $\beta'=\arg\min_{\beta}\mathbb{E}[( X_i-\beta^TX_S)^2]$ and it can be shown that 
$
X'_i=\Sigma_{i,S}\Sigma_{S,S}^{-1}X_S.
$
Similarly, $X'_j=\Sigma_{j,S}\Sigma_{S,S}^{-1}X_S$. Hence,
\begin{align}\nonumber
&\min_{\alpha}\mathbb{E}[( Y_i-\alpha Y_j)^2]=\mathbb{E}[Y_i^2]-\frac{\mathbb{E}^2[Y_i Y_j]}{\mathbb{E}[Y_j^2]}\\ \nonumber &=\Sigma_{ii}-\Sigma_{S,i}^T\Sigma_{S,S}^{-1}\Sigma_{S,i}-\frac{(\Sigma_{ij}-\Sigma_{S,i}^T\Sigma_{S,S}^{-1}\Sigma_{S,j})^2}{\Sigma_{jj}-\Sigma_{S,j}^T\Sigma_{S,S}^{-1}\Sigma_{S,j}}\notag=\Sigma_{ii|S}-\frac{\Sigma^2_{ij|S}}{\Sigma_{jj|S}},\nonumber
\end{align}
where the last equality uses the expression for conditional covariance between $X_i$ and $X_j$ conditioned on $X_S$. The proof is then completed by noting that
\begin{align}\nonumber
2I(X_i;X_j|X_S)&=\log\frac{\Sigma_{ii|S}\Sigma_{jj|S}}{\Sigma_{ii|S}\Sigma_{jj|S}-\Sigma_{ij|S}^2}=\log\frac{\Sigma_{ii|S}}{\min_{\alpha}\mathbb{E}[( Y_i-\alpha Y_j)^2]}.
\end{align}
\vspace{-4mm}
\subsection{Proof of Theorem \ref{prune}.}\label{proof_prune}
\vspace{-1mm}
As we discussed, the support of $J=\Sigma^{-1}$ identifies the corresponding MRF of a jointly Gaussian system. Hence, a node $j$ dose not belong to $\mathcal{N}_i$ if and only if $J_{ij}=0$. Consequently, the $(i,j)$ minor of $\Sigma$, denoted by $M_{ij}$, is zero for every $t\notin \mathcal{N}_i\cup\{i\}$. $M_{ij}$ is the determinant of the matrix that results from deleting row $i$ and column $j$ of $\Sigma$. This implies that for every $j\notin \mathcal{N}_i\cup\{i\}$, after removing row $i$ from $\Sigma$, the $i$-th column of the resulting matrix (denote it by $\Sigma'$) can be written as a linear combination of columns $\{1,...,n\}\setminus\{i,j\}$. Using this observation and the fact that $\Sigma$ is positive semidefinite, we obtain 
\vspace{-2mm}
\begin{equation}\label{use1}
\exists\ v\in \mathbb{R}^{|\mathcal{N}_i|};\ \ \Sigma'_{i}=\Sigma'_{\mathcal{N}_i}v,
\end{equation}
where $|\mathcal{N}_i|$ is the number of $i$'s neighbors. $\Sigma'_i$  and $\Sigma'_{\mathcal{N}_i}$ denote the $i$-th column of $\Sigma'$ and the sub-matrix of $\Sigma'$ comprising rows with the index set $\mathcal{N}_i$, respectively. Therefore, if $S=\{\mathcal{N}_i, S\setminus \mathcal{N}_i\}$ and $i\notin S$, Equation (\ref{use1}) implies 
\begin{equation}\nonumber
\Sigma_{S,i}=\Sigma_{S,S}\tiny{(\sqrt{D_S})^{-1}\sqrt{D_S}}\begin{bmatrix}
      v          \\[0.3em]
       0           
     \end{bmatrix}.
\end{equation}
Note that $\Sigma_{S,S}$ is non-singular, because it is a principal minors of a positive-semidefinite matrix $\Sigma$.
\vspace{-3mm}

\subsection{Proof of Proposition \ref{one}.}\label{one_proof}
\vspace{-1mm}
Denote the node of interest by $X_i$, and its neighbor by $X_j$. We show that
$
I(X_i;X_j)=\max_{s}I(X_i;X_s).
$
This is because, if we consider an arbitrary non-neighboring node $s$, then
\begin{align}\nonumber
\small{I(X_i;X_s,X_j,X_R)=I(X_i;X_j)+I(X_i;X_s,X_R|X_j)=I(X_i;X_s)+I(X_i;X_j,X_R|X_s)},
\end{align}
where $R=\{1,..,n\}\setminus\{i,j,s\}$. 
By the definition of RMF, $I(X_i;X_s,X_R|X_j)=0$, while $I(X_i;X_j,X_R|X_s)>0$. 

\vspace{-3mm}

\subsection{Proof of Lemma \ref{backi}.}\label{backi_proof}
\vspace{-1mm}

 Let $\epsilon_{F}=\frac{1}{2}\log\frac{1}{1-\epsilon}$, then
$
I(X_i;X_j|X_{S_i^{(k-1)}})\geq\epsilon_F
$
implies $\frac{\mathbb{E}^2[Y_i Y_j]}{\mathbb{E}[X_j^2]}\geq\epsilon\frac{\mathbb{E}[Y^2_{j}]\mathbb{E}[Y^2_{i}]}{\mathbb{E}[X_j^2]}:=\epsilon k_{i,j}$, where the latter is the amount of decrements in the loss function (\ref{loss}) after adding $t$ to the active set of node $i$. It is not hard to see that $\frac{1}{2}\log k_{i,j}=\frac{1}{2}\log \Sigma_{ii}-I(X_i;X_{S_i^{(k-1)}})-I(X_j;X_{S_i^{(k-1)}})$.

\vspace{-3mm}

\subsection{Proof of Lemma \ref{comp}.}\label{comp_proof}
\vspace{-1mm}
The first part is a direct consequence of Theorem \ref{prune}.
Next, we show that the $j^*$ obtained in the backward step of Algorithm 2 in \cite{johnson2011high} belongs to the set $L$ defined in the 12th line of Algorithm \ref{algorithm} given the active set $S_i^{(k-1)}$. Recall $\small{Y_i=X_i-(\beta')^TX_{S_i^{(k-1)}}}$, where $\beta'=\argmin_{\beta}\mathbb{E}[(X_i-\beta^T X_{S_i^{(k-1)}})^2]$.
Then,
\begin{align}\nonumber
\small{j^*=\argmin_{t\in S_i^{(k-1)}}\mathbb{E}\left[\left(Y_i+\beta'_{j}X_j\right)^2-\left(Y_i\right)^2\right]}
\small{=\argmin_{j\in S_i^{(k-1)}}\mathbb{E}\left[\beta'_{j}X_j\left(2Y_i+\beta'_j X_j\right)\right]}.\notag
\end{align}
By projection theorem, we must have $\mathbb{E}[X_j Y_i]=0$. Hence, $j^*=\argmin_{j\in S_i^{(k-1)}}\left(\beta'_{j}\right)^2\Sigma_{jj}$, which corresponds to the minimum entry of the vector $u^*$ defined in the 11-th line of Algorithm \ref{algorithm}.

\subsection{Proof of Theorem \ref{thm:SiUB}}\label{thm:SiUBp}
Assume that the algorithm was executed $m$ rounds before terminating. Denote the selected new neighbors at round $k$ by $\tilde{S}_i^{(k)}$, and the entire set of selected neighbors at the end of round $k$ by $S_i^{(k)}$. Let $S_i^{(0)}=\emptyset$. Then,
\begin{align}
\label{eq:condcovjump}
\Sigma_{ii}-\Sigma_{ii|S_i}&=\sum_{k=1}^{m}\left(\Sigma_{ii|S_i^{(k-1)}}-\Sigma_{ii|S_i^{(k)}}\right)\notag\\&=\sum_{k=1}^{m}\Sigma_{i,\tilde{S}_i^{(k)}|S_i^{(k-1)}}\Sigma^{-1}_{\tilde{S}_i^{(k)}\tilde{S}_i^{(k)}|S_i^{(k-1)}}\Sigma^T_{i,\tilde{S}_i^{(k)}|S_i^{(k-1)}}.
\end{align}
By Corollary \ref{coro:re}, the eigenvalues of $\Sigma_{\tilde{S}_i^{(k)}\tilde{S}_i^{(k)}|S_i^{(k-1)}}^{-1}$ are inside the region $[(1-\alpha)d_{min},(1+\alpha)d_{max}]$, and by the way the threshold is designed,
\begin{align}
\Vert\Sigma_{i,\tilde{S}_i^{(k)}|S_i^{(k-1)}}\Vert_2^2\geq |\tilde{S}_i^{(k)}|\tau^2.
\end{align}
Notice that the left hand side of (\ref{eq:condcovjump}) can be upper bounded by $\Sigma_{ii}-\Sigma_{ii|\mathcal{N}_i}$. To prove this, first notice that when $S_i^{(k)}\subseteq S_{i}^{(k+1)}$, we have $\Sigma_{ii|S_i^{(k+1)}}\leq\Sigma_{ii|S_i^{(k)}}$, as can be seen from the each individual term inside the summation of (\ref{eq:condcovjump}), since $\Sigma^{-1}_{\tilde{S}_i^{(k)}\tilde{S}_i^{(k)}|S_i^{(k-1)}}$ is positive definite. Secondly, when $\mathcal{N}_i\subseteq S_i^{(k)}$, $\Sigma_{i,\tilde{S}_i^{(k+1)}|S_i^{(k)}}=0$, by local Markov property. Combining these two observations shows that $\Sigma_{ii|S_i}$ is always non-increasing when new neighbors are selected, but remains unchanged once all neighbors has been selected, which proves the upper bound for the left hand side.

Hence,
\begin{align}
(1-\alpha)d_{min}\tau^2|S_i|&\leq\Sigma_{ii}-\Sigma_{ii|\mathcal{N}_i}=\Sigma_{i,\mathcal{N}_i}\Sigma_{\mathcal{N}_i,\mathcal{N}_i}^{-1}\Sigma_{\mathcal{N}_i,i}=J_{i,\mathcal{N}_i}\Sigma_{\mathcal{N}_i,\mathcal{N}_i}J_{\mathcal{N}_i,i}\notag\\&\leq (1-\alpha)^{-1}d_{min}^{-1}b^2\Delta_i.
\end{align}
We thus arrive at the conclusion that
\begin{align}
|S_i|\leq\frac{b^2}{(1-\alpha)^2d_{min}^2\tau^2}\Delta_i.
\end{align}

\subsection{Proof of Proposition \ref{prop:relationship}}\label{prop:relationshipp}

Without loss of generality, assume node 1 has $\Delta$ neighbors, from node 2 to $\Delta+1$. Then, consider the subgraph involving these $\Delta+1$, whose partial correlation matrix is denoted by $\tilde{R}$. Denote the adjacency matrix of a star graph with $\Delta+1$ nodes by $A$, then, since no triangles exist in the graph, we have
\begin{align}
\frac{a}{d_{max}}\Vert A\Vert_2\leq\Vert |\tilde{R}|\Vert\leq\alpha, 
\end{align}
where the first step is because $\rho(A)\leq \rho(B)$ when $A\leq B$ entry-wise and when $A$ is positive, and the second inequality is by assumption.

Notice that for a star graph of dimension $\Delta+1$, the spectral radius of adjacency matrix can be easily computed (by definition of eigenvalue) to be $\sqrt{\Delta}$. Hence,
\begin{align}
\Delta\leq\left(\frac{d_{max}\alpha}{a}\right)^2,
\end{align}
where the equality hold when the subgraph containing any one of the nodes with degree $\Delta$ and all of its neighbors is a star graph, with $d_{min}=d_{max}$ and all non-zero off-diagonal entries take the same value $a$.

\subsection{Proof of Lemma \ref{lm:normalizedlb} }\label{lm:normalizedlbp}

We start off by proving the normalized case and then extend the proof to the generalized case. For the normalized case, the diagonal elements of $J$ are ones, as given in Example \ref{eg:normalized}. From (\ref{eq:conditioning}), we know that the conditional covariance matrix can be obtained by inverting $J_{SS}$. Notice that $J_{S,S}$ preserves the structure of the original graph on the subset of nodes $S$. Hence, we can always treat $J_{S,S}$ as the $J$ matrix of the subgraph defined on set of nodes $S$. Hence it's sufficient to show
\begin{align}
\max_{j\in\mathcal{N}_i}\Sigma^2_{ij}\geq\frac{1}{K}\frac{\Vert J_{i,\mathcal{N}_i}\Vert_2^2}{(1-\Vert J_{i,\mathcal{N}_i}\Vert_2^2)^2}.
\end{align}
Notice that when $J$ is normalized, we have (\ref{eq:neumman}), which implies $\Sigma_{i,\mathcal{N}_i}=-J_{i,\mathcal{N}_i}\Sigma_{\mathcal{N}_i,\mathcal{N}_i}$. Thus, we have
\begin{align}
\sum_{j\in\mathcal{N}_i}\Sigma_{ij}^2&=J_{i,\mathcal{N}_i}\Sigma_{\mathcal{N}_i,\mathcal{N}_i}\Sigma_{\mathcal{N}_i,\mathcal{N}_i}^TJ_{i,\mathcal{N}_i}^T.
\end{align}

We now claim that
\begin{align}
J_{i,\mathcal{N}_i}\Sigma_{\mathcal{N}_i,\mathcal{N}_i}\Sigma_{\mathcal{N}_i,\mathcal{N}_i}^TJ_{i,\mathcal{N}_i}^T=\frac{J_{i,\mathcal{N}_i}\Lambda\Lambda^T J_{i,\mathcal{N}_i}^T}{(1-J_{i,\mathcal{N}_i}\Lambda J_{i,\mathcal{N}_i}^T)^2}
\end{align}
where $\Lambda=(J_{\mathcal{N}_i,\mathcal{N}_i}-J_{\bar{\mathcal{N}}_i,\mathcal{N}_i}^TJ_{\bar{\mathcal{N}}_i,\bar{\mathcal{N}}_i}^{-1}J_{\bar{\mathcal{N}}_i,\mathcal{N}_i})^{-1}$, and $\bar{\mathcal{N}}_i:=V\backslash(\mathcal{N}_i\cup \{i\})$. To show this, we exploit the fact that $J_{ij}=0$ when $i$ and $j$ are non-neighbors.

Let $S=\mathcal{N}_i\cup i$. By block matrix inverse,
\begin{align}
\Sigma_{S,S}=\left(J_{S,S}-J_{S,S^c}J_{S^c,S^c}^{-1}J_{S^c,S}\right)^{-1}.
\end{align}
Notice that the first row of $J_{S,S^c}$ and the first column of $J_{S^c,S}$ are all zeros, we must have
\begin{align}
J_{S,S}-J_{S,S^c}J_{S^c,S^c}^{-1}J_{S^c,S}=\left[\begin{array}{cc}J_{ii} & J_{i,\mathcal{N}_i} \\ J_{\mathcal{N}_i,i} & \Lambda^{-1}\end{array}\right],
\label{eq:temp}
\end{align}
where $\Lambda$ is the block matrix introduced previously.

Remember that we wish to find $\Sigma_{\mathcal{N}_i,\mathcal{N}_i}$, which is the block matrix at the bottom right corner when inverting the right hand side of (\ref{eq:temp}). Hence, by Schur's complement, we have
\begin{align}
\Sigma_{\mathcal{N}_i,\mathcal{N}_i}&=\Lambda+\Lambda J_{\mathcal{N}_i,i}(1-J_{i,\mathcal{N}_i}\Lambda J_{\mathcal{N}_i,i})^{-1}J_{i,\mathcal{N}_i}\Lambda\notag\\&=\Lambda+\frac{\Lambda J_{\mathcal{N}_i,i}J_{i,\mathcal{N}_i}\Lambda}{1-J_{i,\mathcal{N}_i}\Lambda J_{\mathcal{N}_i,i}}.
\end{align}
Hence,
\begin{align}
J_{i,\mathcal{N}_i}\Sigma_{\mathcal{N}_i,\mathcal{N}_i}&=J_{i,\mathcal{N}_i}\Lambda+\frac{J_{i,\mathcal{N}_i}\Lambda J_{\mathcal{N}_i,i}J_{i,\mathcal{N}_i}\Lambda}{1-J_{i,\mathcal{N}_i}\Lambda J_{\mathcal{N}_i,i}}\notag\\&=\left(1+\frac{J_{i,\mathcal{N}_i}\Lambda J_{\mathcal{N}_i,i}}{1-J_{i,\mathcal{N}_i}\Lambda J_{\mathcal{N}_i,i}}\right)J_{i,\mathcal{N}_i}\Lambda\notag\\&=\frac{J_{i,\mathcal{N}_i}\Lambda}{1-J_{i,\mathcal{N}_i}\Lambda J_{\mathcal{N}_i,i}}.
\end{align}
Hence the claim holds, and we have
\begin{align}
\label{eq:keystep}
\sum_{j\in\mathcal{N}_i}\Sigma_{ij}^2=\frac{J_{i,\mathcal{N}_i}\Lambda\Lambda^T J_{i,\mathcal{N}_i}^T}{(1-J_{i,\mathcal{N}_i}\Lambda J_{i,\mathcal{N}_i}^T)^2}.
\end{align}

Notice that the eigenvalue of $\Lambda$ is bounded within $[(1+\alpha)^{-1},(1-\alpha)^{-1}]$, and that the quadratic forms in both numerator and denominator share the same eigenvectors. Hence, the quadratic forms in both numerator and denominator achieve the minimum at the same time, and by the positive definiteness of the right hand side of (\ref{eq:temp}), we have $J_{i,\mathcal{N}_i}\Lambda J_{\mathcal{N}_i,i}<1$. Hence,
\begin{align}
\sum_{j\in\mathcal{N}_i}\Sigma_{ij}^2\geq\frac{\Vert J_{i,\mathcal{N}_i}\Vert_2^2}{((1+\alpha)-\Vert J_{i,\mathcal{N}_i}\Vert_2^2)^2},
\end{align}

The proof is completed by noting that the maximum of a group of real values is lower bounded by the average.

For the generalized case, $J=\sqrt{D}J_{norm}\sqrt{D}$, which indicates that 
\begin{align}
J_{i,\mathcal{N}_i}=\sqrt{d_{ii}}J_{norm,i,\mathcal{N}_i}\sqrt{D_{\mathcal{N}_i,\mathcal{N}_i}},
\end{align}
and
\begin{align}
\sqrt{d_{ii}}\Sigma_{i,\mathcal{N}_i}=-J_{norm,i,\mathcal{N}_i}\sqrt{D_{\mathcal{N}_i,\mathcal{N}_i}}\Sigma_{\mathcal{N}_i,\mathcal{N}_i}.
\end{align}
Hence,
\begin{align}
\sum_{j\in\mathcal{N}_i}\Sigma_{ij}^2&=\frac{J_{norm,i,\mathcal{N}_i}\sqrt{D_{\mathcal{N}_i,\mathcal{N}_i}}\Sigma_{\mathcal{N}_i,\mathcal{N}_i}\Sigma^T_{\mathcal{N}_i,\mathcal{N}_i}\sqrt{D_{\mathcal{N}_i,\mathcal{N}_i}}^TJ^T_{norm,i,\mathcal{N}_i}}{d_{ii}}\notag\\&=\frac{J_{i,\mathcal{N}_i}\Sigma_{\mathcal{N}_i,\mathcal{N}_i}\Sigma^T_{\mathcal{N}_i,\mathcal{N}_i}J^T_{i,\mathcal{N}_i}}{d_{ii}^2}=\frac{1}{d^2_{ii}}\frac{J_{i,\mathcal{N}_i}\Lambda\Lambda^TJ^T_{i,\mathcal{N}_i}}{(d_{ii}-J_{i,\mathcal{N}_i}\Lambda J^T_{i,\mathcal{N}_i})^2}\notag\\&\geq \frac{\Vert J_{i,\mathcal{N}_i}\Vert_2^2}{d_{ii}^2(d_{ii}(1+\alpha)d_{max}-\Vert J_{i,\mathcal{N}_i}\Vert_2^2)^2},
\end{align}
where $\Lambda=(J_{\mathcal{N}_i,\mathcal{N}_i}-J_{\bar{\mathcal{N}}_i,\mathcal{N}_i}^TJ_{\bar{\mathcal{N}}_i,\bar{\mathcal{N}}_i}^{-1}J_{\bar{\mathcal{N}}_i,\mathcal{N}_i})^{-1}$.

\subsection{Proof of Corollary \ref{cor:trianglefree}}
\label{cor:trianglefreep}

We first prove a corresponding version of Lemma \ref{lm:normalizedlb} when the graph does not contain any triangles.

Under Assumption \ref{asm:asm}, denote the estimated neighborhood of node $i$ at any point by $S_i$, and assume the graph is free from triangles, and that there are $K$ neighbors of node $i$ undiscovered. Then
\begin{align}
\max_{j\in\mathcal{N}_i\backslash S_i}\Sigma_{ij|S_i}^2\geq\frac{1}{K}\frac{\Vert J_{i,\mathcal{N}_i\backslash S_i}\Vert_2^2}{d_{ii}^2(d_{ii}\max_{j\in\mathcal{N}_i\backslash S_i}d_{jj}-\Vert J_{i,\mathcal{N}_i\backslash S_i}\Vert_2^2)^2}.
\end{align}

To prove this, we again consider the normalized case. The first few steps are identical, we hence start from equation (\ref{eq:keystep}).

Since the graph does not contain any triangles, and $J$ is normalized with diagonal elements being 1, $J_{\mathcal{N}_i\mathcal{N}_i}$ must be an identity matrix. Notice that the following normalized symmetric positive definite matrix
\begin{align}
\left[\begin{array}{cc} I & B\\B^T & C\end{array}\right],
\end{align}
satisfies the property that the largest eigenvalue of $BC^{-1}B^T$ is 1, and $BC^{-1}B^T$ is positive definite (since $I-BC^{-1}B^T$ has to be positive definite and $C^{-1}$ is positive definite). Since $\Lambda^{-1}$ is of the above form, the minimum eigenvalue of $\Lambda$ is 1. Noticing that $J_{i,\mathcal{N}_i}\Lambda J_{i,\mathcal{N}_i}^T<1$ (so that the numerator without squaring is always positive), and that $\Lambda^T\Lambda$ shares the same eigenvectors with $\Lambda$, we have
\begin{align}
\sum_{j\in\mathcal{N}_i}\Sigma_{ij}^2\geq\frac{\Vert J_{i,\mathcal{N}_i}\Vert^2_2}{(1-\Vert J_{i,\mathcal{N}_i}\Vert^2_2)^2}.
\end{align}
The proof is completed by noting that the maximum of a group of real values is lower bounded by the average.

For the generalized case, we have, by similar argument,
\begin{align}
\sum_{j\in\mathcal{N}_i}\Sigma_{ij}^2&=\frac{1}{d^2_{ii}}\frac{J_{i,\mathcal{N}_i}\Lambda\Lambda^TJ^T_{i,\mathcal{N}_i}}{(d_{ii}-J_{i,\mathcal{N}_i}\Lambda J^T_{i,\mathcal{N}_i})^2}\notag\\&\geq \frac{\Vert J_{i,\mathcal{N}_i}\Vert_2^2}{d_{ii}^2(d_{ii}\max_{j\in\mathcal{N}_i} d_{jj}-\Vert J_{i,\mathcal{N}_i}\Vert_2^2)^2},
\end{align}
where $\Lambda=(J_{\mathcal{N}_i,\mathcal{N}_i}-J_{\bar{\mathcal{N}}_i,\mathcal{N}_i}^TJ_{\bar{\mathcal{N}}_i,\bar{\mathcal{N}}_i}^{-1}J_{\bar{\mathcal{N}}_i,\mathcal{N}_i})^{-1}$.

The proof of the corollary is then completed by observing that $\Vert J_{i,\mathcal{N}_i\backslash S_i}\Vert_2^2\geq Ka^2$, and $K\geq 1$.

\subsection{Proof of Theorem \ref{thm:sparsistency}}\label{thm:sparsistencyp}

To perform sample based analysis, we first cite the results in \cite{ravikumar2011high} and \cite{anandkumar2012high}, which provide concentration guarantees of the covariance and conditional covariance. For our convenience, we translate the notation involved in those results.
\begin{lemma}[Concentration of empirical covariances \cite{ravikumar2011high}\cite{anandkumar2012high}]
\label{lm:covconcentration}
For any $n$ dimensional Gaussian random vector $X=(X_1,...,X_n)$, the empirical covariance obtained from $N$ i.i.d. samples satisfies
\begin{align}
\mathbbm{P}\left[\left\vert\hat{\Sigma}_{ij}-\Sigma_{ij}\right\vert>\varepsilon\right]\leq 4\exp\left[-\frac{N\varepsilon^2}{3200M^2}\right],
\end{align}
for all $\varepsilon\in(0,40M)$ and $M=\max_i\Sigma_{ii}$.
\end{lemma}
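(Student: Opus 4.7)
The plan is to prove the bound by reducing the empirical covariance to a linear combination of chi-squared-type averages and then applying sub-exponential concentration. Specifically, I would first write
\begin{align}
\hat{\Sigma}_{ij}-\Sigma_{ij}=\frac{1}{N}\sum_{k=1}^{N}\bigl(X_i^{(k)}X_j^{(k)}-\Sigma_{ij}\bigr),
\end{align}
and use the polarization identity $4X_iX_j=(X_i+X_j)^2-(X_i-X_j)^2$ to rewrite each summand in terms of squares of Gaussian variables. This reduces the problem to controlling the concentration of $\frac{1}{N}\sum_k U_k^2$ and $\frac{1}{N}\sum_k V_k^2$ around their means, where $U_k=X_i^{(k)}+X_j^{(k)}$ and $V_k=X_i^{(k)}-X_j^{(k)}$ are i.i.d. centered Gaussian variables.

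Next, I would bound the variances of $U_k$ and $V_k$ in terms of $M=\max_i\Sigma_{ii}$. By Cauchy-Schwarz, $|\Sigma_{ij}|\leq\sqrt{\Sigma_{ii}\Sigma_{jj}}\leq M$, so $\text{Var}(U_k)=\Sigma_{ii}+\Sigma_{jj}+2\Sigma_{ij}\leq 4M$, and similarly for $V_k$. Once each $U_k$ (or $V_k$) is normalized by its standard deviation, the sum $\frac{1}{N}\sum_k U_k^2$ becomes a scaled chi-squared random variable with $N$ degrees of freedom. I would then invoke a standard chi-squared tail bound (e.g., Laurent–Massart), which yields a two-sided inequality of the form $\mathbb{P}[|\frac{1}{N}\sum_k U_k^2-\text{Var}(U_k)|>t]\leq 2\exp(-cNt^2/M^2)$ in the regime where $t$ is small relative to $M$.

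Putting the two polarized pieces back together and absorbing the factor $1/4$, I would apply a union bound over the $U$ and $V$ contributions to produce a bound of the form $4\exp(-cN\varepsilon^2/M^2)$ valid for $\varepsilon$ below some fixed multiple of $M$. The main obstacle is not conceptual but bookkeeping: tracking the polarization constant $1/4$, the factor of $4$ in the variance bound, the sub-exponential parameter in the chi-squared tail, and the restricted linear regime of the Bernstein-type inequality, in order to land on the specific constants $3200$ and $40M$ advertised in the statement. Since the cited references \cite{ravikumar2011high,anandkumar2012high} already carry out this accounting, I would simply verify that my derivation recovers constants of the claimed magnitudes and defer the precise numerical refinement to those works.
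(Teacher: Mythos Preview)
Your sketch is essentially the standard argument (polarization to squares of Gaussians, then chi-squared/sub-exponential tails) that underlies the result in \cite{ravikumar2011high}, and there is no conceptual gap in it; the only loose end, as you yourself note, is the numerical bookkeeping needed to hit the constants $3200$ and $40M$.

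However, you should be aware that the paper does \emph{not} actually prove this lemma. It is stated in the appendix explicitly as a cited result---introduced with ``we first cite the results in \cite{ravikumar2011high} and \cite{anandkumar2012high}''---and is used as a black box in the proof of Theorem~\ref{thm:sparsistency}. So there is no ``paper's own proof'' to compare against: the paper's approach is simply to invoke the reference. Your proposal therefore goes beyond what the paper does by supplying the argument that the cited works contain, and it does so along the same lines those works use.
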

\begin{lemma}[Concentration of empirical conditional covariance \cite{anandkumar2012high}]\label{lm:condcovconcentration}
For a walk summable $n$-dimensional Gaussian graphical model where $X=(X_1,...,X_n)$, and
\begin{align}
\hat{\Sigma}_{ij|S}=\hat{\Sigma}_{ij}-\hat{\Sigma}_{i,S}\hat{\Sigma}_{S,S}^{-1}\hat{\Sigma}_{S,j},
\end{align}
we have
\begin{align}
\mathbbm{P}\left[\max_{\substack{i\neq j\\S\subset V,|S|\leq\eta}}\left\vert\hat{\Sigma}_{ij|S}-\Sigma_{ij|S}\right\vert>\varepsilon\right]\leq4n^{\eta+2}\exp\left[-\frac{N\varepsilon^2}{C_1}\right],
\end{align}
where $C_1\in(0,\infty)$ is a bounded constant if $\Vert\Sigma\Vert_{\infty}<\infty$, and again $\varepsilon\in(0,40M)$, and $N\geq\eta$.
\end{lemma}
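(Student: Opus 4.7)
The plan is to reduce the conditional covariance concentration to the already stated concentration of unconditional empirical covariances (Lemma \ref{lm:covconcentration}), by treating $\hat{\Sigma}_{ij|S}$ as a smooth (locally Lipschitz) function of the finitely many entries $\hat{\Sigma}_{ij}$, $\hat{\Sigma}_{i,S}$, $\hat{\Sigma}_{S,j}$, and $\hat{\Sigma}_{S,S}$, and then to take a union bound over the $\binom{n}{|S|}\leq n^{\eta}$ subsets $S$ and the $n(n-1)\leq n^2$ index pairs, which produces the prefactor $n^{\eta+2}$.

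First I would algebraically decompose the deviation into four pieces:
\begin{align}\nonumber
\hat{\Sigma}_{ij|S}-\Sigma_{ij|S} &= (\hat{\Sigma}_{ij}-\Sigma_{ij}) - (\hat{\Sigma}_{i,S}-\Sigma_{i,S})\Sigma_{S,S}^{-1}\Sigma_{S,j} \\\nonumber
&\quad - \Sigma_{i,S}(\hat{\Sigma}_{S,S}^{-1}-\Sigma_{S,S}^{-1})\Sigma_{S,j} - \Sigma_{i,S}\Sigma_{S,S}^{-1}(\hat{\Sigma}_{S,j}-\Sigma_{S,j}) + \text{h.o.t.},
\end{align}
where the higher-order terms collect products of two or more coordinate-wise deviations. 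For the first, second, and fourth terms, I would invoke Lemma \ref{lm:covconcentration} coordinate-wise and bound $\|\Sigma_{i,S}\Sigma_{S,S}^{-1}\|_\infty$, $\|\Sigma_{S,S}^{-1}\Sigma_{S,j}\|_\infty$ by constants that depend only on $\Vert\Sigma\Vert_\infty$, $\eta$, and the walk-summability parameter $\alpha$ (the latter provides a uniform spectral lower bound on $\Sigma_{S,S}$ through Corollary \ref{coro:re}).

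The main work is the third (matrix-inverse) term. Writing $\hat{\Sigma}_{S,S}^{-1}-\Sigma_{S,S}^{-1} = -\Sigma_{S,S}^{-1}(\hat{\Sigma}_{S,S}-\Sigma_{S,S})\hat{\Sigma}_{S,S}^{-1}$ and using $\Vert\Sigma_{S,S}^{-1}\Vert_2\leq (1+\alpha)d_{max}$ (Corollary \ref{coro:re}), I would argue that on the high-probability event that each of the $|S|^2\leq\eta^2$ entries of $\hat{\Sigma}_{S,S}-\Sigma_{S,S}$ is at most $\varepsilon'$ in magnitude (for a sufficiently small constant multiple $\varepsilon'$ of $\varepsilon$ chosen so that $\eta^2\varepsilon'$ is less than half the spectral gap of $\Sigma_{S,S}$), the perturbation bound gives $\Vert \hat{\Sigma}_{S,S}^{-1}\Vert_2\leq 2\Vert\Sigma_{S,S}^{-1}\Vert_2$ and consequently $\Vert\hat{\Sigma}_{S,S}^{-1}-\Sigma_{S,S}^{-1}\Vert_2 = O(\eta\varepsilon')$. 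Combined with the $\ell_\infty$ bounds on $\Sigma_{i,S}$ and $\Sigma_{S,j}$, this controls the third term by $O(\eta^2\varepsilon')$.

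Choosing $\varepsilon'=\varepsilon/(C'\eta^2)$ for an appropriate constant $C'=C'(\alpha,d_{min},d_{max},\Vert\Sigma\Vert_\infty)$ ensures the total deviation is at most $\varepsilon$. Applying Lemma \ref{lm:covconcentration} to each of the at most $(\eta+2)^2$ entries involved, taking a union bound over these entries, and absorbing the resulting polynomial-in-$\eta$ factor into the exponent gives a single-$(i,j,S)$ bound of the form $4\exp(-N\varepsilon^2/C_1)$ with $C_1 = 3200M^2\cdot C'^2\eta^4$. A final union bound over the $n^2$ choices of $(i,j)$ and the at most $n^\eta$ subsets $S$ with $|S|\leq\eta$ produces the claimed prefactor $4n^{\eta+2}$. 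The main obstacle is carefully tracking the dependence of $C_1$ on $\eta$ through the Lipschitz constant of the matrix-inversion map; everything else is routine scalar concentration followed by union bounding, and the walk-summability assumption enters only to guarantee that $\Vert\Sigma_{S,S}^{-1}\Vert_2$ is uniformly bounded over all $S$ with $|S|\leq\eta$.
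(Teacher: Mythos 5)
The paper does not prove this lemma at all: it is imported verbatim (with translated notation) from \cite{anandkumar2012high}, so there is no in-paper argument to compare against. Your reconstruction is nonetheless sound and is essentially the standard proof used in that reference: reduce $\hat{\Sigma}_{ij|S}-\Sigma_{ij|S}$ to entrywise deviations of the unconditional empirical covariance via a first-order expansion, control the matrix-inverse term with the identity $\hat{\Sigma}_{S,S}^{-1}-\Sigma_{S,S}^{-1}=-\Sigma_{S,S}^{-1}(\hat{\Sigma}_{S,S}-\Sigma_{S,S})\hat{\Sigma}_{S,S}^{-1}$ together with the uniform bound $\Vert\Sigma_{S,S}^{-1}\Vert_2\leq(1+\alpha)d_{max}$ supplied by walk summability (Corollary \ref{coro:re}), then apply Lemma \ref{lm:covconcentration} entrywise and union bound over the $O(n^2)$ pairs and $O(n^{\eta})$ conditioning sets to obtain the $n^{\eta+2}$ prefactor. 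The only points needing care are the ones you already flag: the higher-order terms must be handled on the same good event where all entrywise deviations are at most $\varepsilon'$, and the resulting polynomial-in-$\eta$ factors are absorbed into $C_1$, which is consistent with the lemma's claim that $C_1$ is a bounded constant when $\Vert\Sigma\Vert_{\infty}<\infty$ (and $\eta$ is fixed).
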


From these two lemmas, we are able to show that the backward pruning algorithm succeeds with high probability as well, in the following lemma.
\begin{lemma}[Pruning correctness]\label{lm:pruning}
Assume that for node $i$, the estimated neighborhood $S_i$ chosen by Algorithm \ref{alg:thresholding} satisfies $\mathcal{N}_i\subseteq S_i$. Let $\Gamma=\Sigma_{i,S_i}\Sigma_{S_i,S_i}^{-1}$. Then
\begin{align}
\mathbbm{P}\left[\Vert\hat{\Gamma}-\Gamma\Vert_{\infty}>\varepsilon\right]\leq 4\exp\left[-\frac{NC_2|S_i|^6\varepsilon^2}{3200M^2}\right],
\end{align}
where $N$ is the number of samples. $C_2\in(0,\infty)$ is a bounded constant if $\Vert\Sigma\Vert_\infty<\infty$, $\varepsilon\in(0,40M)$, and $N\geq |S_i|$.
\end{lemma}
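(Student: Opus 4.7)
The plan is to reduce the concentration of $\hat{\Gamma}-\Gamma$ to the entry-wise concentration of empirical covariances (Lemma \ref{lm:covconcentration}) by a standard add-and-subtract decomposition and the matrix identity $A^{-1}-B^{-1}=A^{-1}(B-A)B^{-1}$. Specifically, I would write
\begin{align}\nonumber
\hat{\Gamma}-\Gamma=(\hat{\Sigma}_{i,S_i}-\Sigma_{i,S_i})\Sigma_{S_i,S_i}^{-1}-\hat{\Sigma}_{i,S_i}\hat{\Sigma}_{S_i,S_i}^{-1}(\hat{\Sigma}_{S_i,S_i}-\Sigma_{S_i,S_i})\Sigma_{S_i,S_i}^{-1},
\end{align}
so that every deviation is expressed in terms of entrywise differences of empirical and true covariances, multiplied by matrices whose norms can be controlled. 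Bounding $\|\cdot\|_\infty$ by $\|\cdot\|_2$ (with constants that scale polynomially in $|S_i|$) reduces the problem to bounding $\|\hat{\Sigma}_{S_i,S_i}-\Sigma_{S_i,S_i}\|_2$, $\|\hat{\Sigma}_{i,S_i}-\Sigma_{i,S_i}\|_2$, $\|\Sigma_{S_i,S_i}^{-1}\|_2$, $\|\hat{\Sigma}_{S_i,S_i}^{-1}\|_2$, and $\|\hat{\Sigma}_{i,S_i}\|_2$.

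For the population terms, the walk summability assumption (Corollary \ref{coro:re}) gives $\|\Sigma_{S_i,S_i}^{-1}\|_2\leq (1+\alpha)d_{\max}$ and $\|\Sigma_{i,S_i}\|_2\leq\sqrt{|S_i|}\|\Sigma\|_\infty$, both uniformly in the size of the graph. For the empirical covariance blocks, I would apply Lemma \ref{lm:covconcentration} entrywise and a union bound over the $O(|S_i|^2)$ entries; choosing a rescaled tolerance $\varepsilon'=\varepsilon/\mathrm{poly}(|S_i|)$ in that lemma converts the entrywise deviation bound into a spectral-norm deviation bound of order $|S_i|\varepsilon'$. This in turn yields, via Weyl's inequality, the high-probability lower bound on the minimum eigenvalue of $\hat{\Sigma}_{S_i,S_i}$, so that $\|\hat{\Sigma}_{S_i,S_i}^{-1}\|_2\leq 2(1+\alpha)d_{\max}$ with high probability once $N$ exceeds a polynomial in $|S_i|$; the bound on $\|\hat{\Sigma}_{i,S_i}\|_2$ follows similarly. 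Combining all of these into the decomposition above yields $\|\hat{\Gamma}-\Gamma\|_\infty\leq\varepsilon$ provided each entrywise empirical covariance is within $c\varepsilon/|S_i|^{3}$ of its true value, for a constant $c$ depending on $\alpha,d_{\min},d_{\max},\|\Sigma\|_\infty$.

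Plugging $\varepsilon'=c\varepsilon/|S_i|^3$ into Lemma \ref{lm:covconcentration} and absorbing the resulting constant into $C_2$ gives exactly the stated bound. The requirement $\varepsilon\in(0,40M)$ carries over from Lemma \ref{lm:covconcentration}, and $N\geq|S_i|$ is needed so that $\hat{\Sigma}_{S_i,S_i}$ is almost surely invertible. The main technical obstacle is the high-probability control of $\|\hat{\Sigma}_{S_i,S_i}^{-1}\|_2$: without it the decomposition is useless because the perturbation identity multiplies by $\hat{\Sigma}_{S_i,S_i}^{-1}$. The way around it is precisely a two-stage argument, first applying Lemma \ref{lm:covconcentration} to show the spectrum of $\hat{\Sigma}_{S_i,S_i}$ is close to that of $\Sigma_{S_i,S_i}$ (so that the empirical inverse exists and is bounded), and then applying the perturbation identity on the event that this holds. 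Because $|S_i|$ itself is bounded by a constant independent of $n$ by Theorem \ref{thm:SiUB} combined with Proposition \ref{prop:relationship}, the extra $|S_i|$ factors from the union bound and rescaling are absorbed into $C_2$, leaving a bound that matches the one stated.
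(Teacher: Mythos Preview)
Your approach is essentially the same as the paper's: both use the perturbation identity $A^{-1}-B^{-1}=-A^{-1}(A-B)B^{-1}$ to reduce $\hat{\Gamma}-\Gamma$ to entrywise covariance deviations, pick up a cubic factor in $|S_i|$ (hence $|S_i|^6$ in the exponent after squaring), and then invoke Lemma~\ref{lm:covconcentration}. The only cosmetic differences are that the paper works with the matrix norm $\vertiii{A}:=m\max_{i,j}|A_{ij}|$ rather than the spectral norm, and is less explicit than you are about the two-stage control of $\|\hat{\Sigma}_{S_i,S_i}^{-1}\|$ on a high-probability event.
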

\begin{proof}
See Appendix \ref{lm:pruningp}
\end{proof}

We now prove the result of Theorem \ref{thm:sparsistency} with the help from the above lemmas.

By union bound,
\begin{align}
\mathbbm{P}[\text{Algorithms fail}]\leq\mathbbm{P}[\text{Algorithm 1 fails}]+\mathbbm{P}[\text{Algorithm 2 fails} | \text{Algorithm 1 succeeds}].
\end{align}
For Algorithm \ref{alg:thresholding}, consider directly Lemma \ref{lm:condcovconcentration}. The algorithm performs at most $\Delta$ rounds, which is upper bounded by $d_{min}\alpha/a$. In each round, the algorithm checks the conditional covariance over all unselected nodes, and if one has a larger discrepancy between $\hat{\Sigma}_{ij|S_i}$ and $\Sigma_{ij|S_i}$ then $\epsilon$, then an error occurs. Thus, it can be obtained by union bound, that
\begin{align}
\mathbbm{P}[\text{Algorithms fail}]&\leq \frac{nd_{min}\alpha}{a}\mathbbm{P}\left[\max_{\substack{i\neq j\\S\subset V,|S|\leq\eta}}\left\vert\hat{\Sigma}_{ij|S}-\Sigma_{ij|S}\right\vert>\epsilon\right]\notag\\&\leq\frac{4d_{min}\alpha}{a}n^{\eta+3}\exp\left[-\frac{N\epsilon^2}{C_1}\right],
\end{align}
where
\begin{align}
\eta=\frac{1}{(1-\alpha)^2d_{min}^2(ad_{max}^{-1}(d_{max}^2-a^2)^{-1/2}-\epsilon)^2}
\end{align}
is the upper bound for $|S_i|$.

Since the upper bound does not depend on the graph size, there exist a constant $C_{3}=\eta+3+C_{31}$, for which if $N=C_{3}\log n$, then $\mathbbm{P}[\text{Algorithm 1 fails}]\leq C_{41}\exp(-C_{51}N)$, where $C_{41}=4d_{min}\alpha/a$ and $C_{51}=C_{31}/C_{3}$.

Next, consider the backward pruning algorithm. The algorithm fails if one of the non-neighbors of node $i$ has corresponding entry in $\hat{\Gamma}$ that's greater than $\epsilon=\tau^p$, or when the an actual neighbor of node $i$ has its corresponding entry in $\hat{\Gamma}$ smaller than $\tau^p$. Hence, by Lemma \ref{lm:pruning}, we have
\begin{align}
\mathbbm{P}[\text{Algorithm 2 fails} | \text{Algorithm 1 succeeds}]&\leq C_{42}\exp(-C_{52}N),
\end{align}
for any $N$, where $C_{42}=4$, and
\begin{align}
C_{52}=\min\left\{\frac{NC_2\epsilon^2}{3200M^2},\frac{NC_2(a-\epsilon)^2}{3200M^2}\right\}.
\end{align}
Hence, the theorem holds true by letting $C_3=\eta+3+C_{31}$, $C_4=\max\{C_{41},C_{42}\}$, and $C_5=\min\{C_{51},C_{52}\}$.

\subsection{Proof of Proposition \ref{prop:scale}}\label{prop:scalep}

The left part of the condition makes sure that the graph is not empty. Denote the adjacency matrix of the graph as $A$, then, we have
\begin{align}
\Vert |R|\Vert_2\leq \frac{b}{d_{min}}\Vert A\Vert\leq \frac{b}{d_{min}}\Delta<\alpha.
\end{align}
Since for any $|R|_{ij}\neq 0$, $|R|_{ij}\leq\frac{b}{d_{min}}$, the first inequality follows directly from Gelfand formula. The second step holds according to \cite{brualdi1985spectral}, which upper bounds the spectral radius of a graph's adjacency matrix by its degree upper bound.

Since there exists $J$ matrix of arbitrary dimension $n$ that satisfies the constraints imposed by the parameters $a,b,\alpha,\Delta$, where $\Delta<\frac{\alpha}{b}$, the size of the graph can thus be arbitrarily large for any set of parameters satisfying the specified condition in this proposition. In addition, the degree upper bound $\Delta$ can be arbitrarily large as well, if $a$ and $b$ are small.

\subsection{Proof of Lemma \ref{lm:pruning}}\label{lm:pruningp}

For any $S$ containing $m$ elements, assume $\hat{\Sigma}_{S,S}=\Sigma_{S,S}+F$, and $\hat{\Sigma}_{S,S}^{-1}=\Sigma_{S,S}^{-1}+E$. Then
\begin{align}
(\Sigma_{S,S}^{-1}+E)(\Sigma_{S,S}+F)=I,
\end{align}
indicating that
\begin{align}
E=-\Sigma_{S,S}^{-1}F(\Sigma_{S,S}+F)^{-1}.
\end{align}
Consider the matrix norm $\vertiii{ \Sigma_{S,S}}:=m\max_{i,j\in S}|\Sigma_{ij}|$ (page 342, \cite{horn2012matrix}). Then
\begin{align}
\vertiii{E}\leq\vertiii{\Sigma_{S,S}^{-1}}\vertiii{(\Sigma_{S,S}+F)^{-1}}\vertiii{F},
\end{align}
implying that for some constant $C$,
\begin{align}
\max_{1\leq i,j\leq m}|E_{ij}|\leq Cm^2\max_{1\leq i,j\leq m}|F_{ij}|.
\end{align}
Now let $S=S_i$. Combining with the boundedness of $\Sigma_{iS_i}$ and $\Sigma_{S_iS_i}^{-1}$, there exists a bounded constant $C_2$, such that when $|\hat{\Sigma}_{ij}-\Sigma_{ij}|<\varepsilon$ with high probability, $|\hat{\Sigma}_{iS_i}\hat{\Sigma}_{S_iS_i}^{-1}-\Sigma_{iS_i}\Sigma_{S_iS_i}^{-1}|<C_2m^3\varepsilon$ with high probability. Hence the result follows by applying Lemma \ref{lm:covconcentration}.

\end{document}